\long\def\comment#1{}
\newfont{\bbb}{msbm10 scaled 700}
\newfont{\bb}{msbm10 scaled 1100}
\newcommand{\cv}{{\bf c}}
\newcommand{\fv}{{\bf f}}
\newcommand{\gv}{{\bf g}}
\newcommand{\nv}{{\bf n}}
\newcommand{\sv}{{\bf s}}
\newcommand{\tv}{{\bf t}}
\newcommand{\uv}{{\bf u}}
\newcommand{\xv}{{\bf x}}
\newcommand{\yv}{{\bf y}}
\newcommand{\zv}{{\bf z}}
\newcommand{\zerov}{{\bf 0}}
\newcommand{\onev}{{\bf 1}}
\newcommand{\Am}{{\bf A}}
\newcommand{\Bm}{{\bf B}}
\newcommand{\Dm}{{\bf D}}
\newcommand{\Id}{{\bf I}}
\newcommand{\Lm}{{\bf L}}
\newcommand{\Um}{{\bf U}}
\newcommand{\Wm}{{\bf W}}
\newcommand{\Xm}{{\bf X}}
\newcommand{\Mc}{{\cal M}}
\newcommand{\Nc}{{\cal N}}
\newcommand{\Xc}{{\cal X}}
\newtheorem{theorem}{Theorem}
\newtheorem{conjecture}{Conjecture}
\newtheorem{lemma}{Lemma}
\newtheorem{definition}{Definition}
\newtheorem{remark}{Remark}
\newcommand{\E}[1]{\mathbb{E}\left\{{#1}\right\}}
\newcommand{\Var}[1]{\text{Var}\left\{{#1}\right\}}
\newcommand{\Tr}[1]{\text{Tr}\left({#1}\right)}
\begin{document}

%

%

\sloppy

\title{A Sampling Theory Perspective of Graph-based Semi-supervised Learning}

\author{Aamir~Anis,~\IEEEmembership{Student~Member,~IEEE,}
        Aly~El~Gamal,~\IEEEmembership{Member,~IEEE,}
        Salman~Avestimehr,~\IEEEmembership{Senior~Member,~IEEE,}
        and~Antonio~Ortega,~\IEEEmembership{Fellow,~IEEE}
\thanks{This work is supported in part by NSF under grants CCF-1410009, CCF-1527874, CCF-1408639, NETS-1419632 and by AFRL and DARPA under grant 108818.}%
\thanks{S. Avestimehr and A. Ortega are with the Ming Hsieh Department
of Electrical Engineering, University of Southern California. A. Anis is currently with Google Inc., he was affiliated with the University of Southern California at the time this work was completed. A. El Gamal is with the Department of Electrical and Computer Engineering,
Purdue University.}
\thanks{E-mail: aanis@google.com, avestimehr@ee.usc.edu, ortega@sipi.usc.edu, elgamala@purdue.edu.}
\thanks{Copyright (c) 2017 IEEE. Personal use of this material is permitted.  However, permission to use this material for any other purposes must be obtained from the IEEE by sending a request to pubs-permissions@ieee.org.}
}


\maketitle
%
%
%
\begin{abstract}
Graph-based methods have been quite successful in solving unsupervised and semi-supervised learning problems, as they provide a means to capture the underlying geometry of the dataset. It is often desirable for the constructed graph to satisfy two properties: first, data points that are similar in the feature space should be strongly connected on the graph, and second, the class label information should vary smoothly with respect to the graph, where smoothness is measured using the spectral properties of the graph Laplacian matrix. Recent works have justified some of these smoothness conditions by showing that they are strongly linked to the semi-supervised smoothness assumption and its variants. In this work, we reinforce this connection by viewing the problem from a graph sampling theoretic perspective, where class indicator functions are treated as bandlimited graph signals (in the eigenvector basis of the graph Laplacian) and label prediction as a bandlimited reconstruction problem. Our approach involves analyzing the bandwidth of class indicator signals generated from statistical data models with separable and nonseparable classes. These models are quite general and mimic the nature of most real-world datasets. Our results show that in the asymptotic limit, the bandwidth of any class indicator is also closely related to the geometry of the dataset. This allows one to theoretically justify the assumption of bandlimitedness of class indicator signals, thereby providing a sampling theoretic interpretation of graph-based semi-supervised classification.
\end{abstract}
%

%
\section{Introduction}
%
%

The abundance of unlabeled data in various machine learning applications, along with the prohibitive cost of labeling, has led to growing interest in semi-supervised learning. This paradigm deals with the task of classifying data points in the presence of very little labeling information by relying on the geometry of the dataset. Assuming that the features are well-chosen, a natural assumption in this setting is 
to consider the marginal density $p(\xv)$ of the feature vectors to be informative about the labeling function $f(\xv)$ defined on the points. This assumption is fundamental to the semi-supervised learning problem both in the classification and the regression settings, and is also known as the \emph{semi-supervised smoothness assumption}~\cite{chapelle_book}, which states that the label function is smoother in regions of high data density. There also exist other similar variants of this assumption specialized for the classification setting, namely, the \emph{cluster assumption}~\cite{zhou_nips_04} (points in a cluster are likely to have the same class label) or the \emph{low density separation assumption}~\cite{narayanan_06} (decision boundaries pass through regions of low data density). Most present day algorithms for semi-supervised learning rely on one or more of these assumptions to predict the unknown labels.

In practice, graph-based methods have been found to be quite suitable for geometry-based learning tasks, primarily because they provide an easy way of exploiting information from the geometry of the dataset. These methods 
involve constructing a distance-based similarity graph whose vertices (nodes) represent the data points and whose edge weights are in general a decreasing function of the distances between them.
The learning task then involves predicting the labels of the unknown nodes, given the known labels, often called the transductive learning paradigm.
The key assumption here is that the label function is ``smooth'' over the graph, in the sense that labels of vertices do not vary much over edges with high weights (i.e., edges that connect close or similar points).
%
%
There are numerous ways of quantitatively imposing smoothness constraints over label functions defined on the vertices of a similarity graph.
Most graph-based semi-supervised classification algorithms incorporate one of these criteria as a penalty against the fitting error in a regularization problem, or as a constraint term while minimizing the fitting error in an optimization problem.
For example, a commonly used measure of smoothness for a label function $\fv$ is the graph Laplacian regularizer $\fv^T \Lm \fv$ ($\Lm$ being the graph Laplacian), and many algorithms involve minimizing this quadratic energy function while ensuring that $\fv$ satisfies the known set of labels~\cite{zhu_03, zhou_nips_04}. Another example is the graph total variation~\cite{trillos_arma_16}. There also exist higher-order variants of the smoothness measure such as iterated graph Laplacian regularizers $\fv^T \Lm^m \fv$~\cite{zhou_aistats_11} and the $p$-Laplacian regularizer~\cite{buhler_icml_09,elalaoui_colt_16}, that have been shown to make the problem more well-behaved.
On the other hand, a spectral theory based classification algorithm restricts $\fv$ to be spanned by the first few eigenvectors of the graph Laplacian~\cite{belkin_nips_02,belkin_ml_04}, that are known to form a representation basis for smooth functions on the graph.
In each of the examples, the criterion enforces smoothness of the labels over the graph -- a lower value of the regularizer $\fv^T \Lm \fv$, and a smaller number of leading eigenvectors to model $\fv$ imply that vertices that are close neighbors on the graph are more likely to have the same label.

A more recent approach, derived from Graph Signal Processing (GSP)~\cite{shuman_spm_13}, considers the semi-supervised learning problem from the perspective of sampling theory for graph signals~\cite{anis_icassp_14, shomorony_globalsip_14, chen_tsp_15, anis_tsp_16}. It involves treating the class label function $\fv$ as a \emph{bandlimited} graph signal, and label prediction as a \emph{bandlimited reconstruction} problem. 
The advantage of this approach is that one can also analyze, using sampling theory, the \emph{label complexity} of graph-based semi-supervised classification, that is, the fraction of labeled vertices on the graph required for predicting the labels of the unlabeled vertices.
A key ingredient in this formulation is the \emph{bandwidth} $\omega(\fv)$ of signals on the graph, which is defined as the largest Laplacian eigenvalue for which the projection of the signal over the corresponding eigenvector is non-zero. Signals with lower bandwidth tend to be smoother on the graph and have a lower label complexity. Label prediction using bandlimited reconstruction then involves estimating a graph signal that minimizes prediction error on the labeled set under a bandwidth constraint. This can also be carried out without explicitly computing the eigenvectors of the Laplacian, and has been shown to be quite competitive in comparison to state-of-the-art graph-based semi-supervised learning methods~\cite{gadde_kdd_14}.

Although graph-based semi-supervised learning methods are well-motivated, their connection to the underlying geometry of the dataset had not been clearly understood so far in a theoretical sense. Recent works focused on justifying these approaches by exploring their geometrical interpretation in the limit of infinitely available unlabeled data. This is typically done by assuming a probabilistic generative model for the dataset and analyzing the graph smoothness criteria in the asymptotic setting for certain commonly-used graph construction schemes.
For example, it has been shown that for data points drawn from a smooth distribution with an associated smooth label function (i.e., the regression setting), the graph Laplacian-based regularizers converge in the limit of infinite data points to some density-weighted variational energy functional that penalizes large variations of the labels in high density regions~\cite{belkin_ml_04,bousquet_04,hein_colt_06,belkin_jcss_08,zhou_aistats_11,zhou_kdd_11,trillos_arma_16,trillos_arxiv_16,slepcev_arxiv_17}. A similar connection ensues for semi-supervised learning problems in the classification setting (i.e., when labels are discrete in the feature space). If points drawn from a smooth distribution are separated by a smooth boundary into two classes, then the graph cut for the partition converges to a weighted volume of the boundary~\cite{narayanan_06,castro_aap_12,maier_13,trillos_jmlr_16}. This is consistent with the low density separation assumption -- a low value of the graph cut implies that the boundary passes through regions of low data density.

To our knowledge, no such connections have been drawn for the sampling theoretic approach to learning. A geometrical interpretation of this approach would help complete our theoretical understanding of graph-based semi-supervised learning approaches and strengthen their link with the semi-supervised smoothness assumption and its variants.
Therefore, in this work, we seek answers for the following questions:
\begin{itemize}[leftmargin=10pt]
\item What is the connection between the bandwidth of class indicator signals over the similarity graph and the underlying geometry of the data set?
\item What is the interpretation of the bandlimited reconstruction approach for label prediction?
\item How many labeled examples does one require for predicting the unknown labels?
\end{itemize}
To answer these questions, our work analyzes the asymptotic behavior of an iterated Laplacian-based bandwidth estimator for class indicator signals on similarity graphs constructed from a statistical model for the feature vectors. 
To make our analysis as general as possible, we consider two data models: \emph{separable} and \emph{nonseparable}. These generative models are quite practical and can be used to mimic most datasets in the real world. The separable model assumes that data points are independently drawn from an underlying probability distribution in the feature space and each class is separated from the others by a smooth boundary. On the other hand, the nonseparable model assumes a mixture distribution for the data where the data points are drawn independently with certain probability from separate class conditional distributions. 
We also introduce a notion of ``boundaries'' for classes in the nonseparable model in the form of \emph{overlap} regions (i.e., the region of ambiguity), defined as the set of points where the probability of belonging and not belonging to a class are both non-zero. This definition is quite practical and useful for characterizing the geometry of such datasets.

Using the data points, we consider a specific graph construction scheme that applies the Gaussian kernel over Euclidean distances between feature vectors for computing their similarities (our analysis can be generalized easily to arbitrary kernels under simple assumptions). In order to compute the bandwidth of any signal on the graph, we define an estimator based on the iterated Laplacian regularizer. A significant portion of this paper focuses on analyzing the stochastic convergence of this bandwidth estimate (using variance-bias decomposition) in the limit of infinite data points for any class indicator signal on the graph. 
The analysis in our work suggests a novel sampling theoretic interpretation of graph-based semi-supervised learning and the main contributions can be summarized as follows:
\begin{itemize}[leftmargin=10pt]
\item \emph{Relationship between bandwidth and data geometry.} 
For the separable model, we show that under certain rate conditions, the bandwidth estimate for any class indicator signal over the graph converges to the supremum of the data density over the class boundary. Similarly, for the nonseparable model, we show that the bandwidth estimate converges to the supremum of the density over the overlap region. Based on these results, we conjecture, with supporting experiments, that the bandwidths also converge to the same values.
\item \emph{Interpretation of bandlimited reconstruction.} 
Using the geometrical interpretation of the bandwidth, we conclude that bandlimited reconstruction allows one to choose the complexity of the hypothesis space while predicting unknown labels (i.e., a larger bandwidth allows more complex class boundaries).
\item \emph{Quantification of label complexity for sampling theory-based learning.} 
For both the separable and nonseparable models, we conjecture, with supporting arguments and experiments, that the fraction of labeled nodes on the graph for reconstructing class indicator signals converges, in the asymptotic limit, to the probability mass of the sublevel set that entirely encompasses the boundary.
\end{itemize}
Our analysis has significant implications: Firstly, class indicator signals have a low bandwidth if class boundaries lie in regions of low data densities, that is, the semi-supervised assumption holds for graph-based methods. And secondly, our analysis also helps quantify the impact of bandwidth and data geometry in semi-supervised learning problems. Specifically, it enables us to theoretically assert that for the sampling theoretic approach to graph-based semi-supervised learning, the label complexity of class indicator signals over the graph is indeed lower if the boundary lies in regions of low data density, as demonstrated empirically in earlier works~\cite{belkin_nips_02,belkin_ml_04}.

The rest of this paper is organized as follows: In Section~\ref{sec:prelim}, we formally introduce the statistical data models and the graph construction scheme for analysis, along with a precursor of concepts from graph sampling theory. In Section~\ref{sec:related_work}, we review prior work and underline their connections with our work. In Section~\ref{sec:main_results}, we state our main results and outline their implications. In Section~\ref{sec:proofs}, we prove the major building blocks for our results. We finally conclude with numerical validation in Section~\ref{sec:exp}, followed by discussion and an outline of future work in Section~\ref{sec:end}. It is worth noting that the bandwidth convergence result for the separable model and an interpretation of bandlimited reconstruction were given in our preliminary work~\cite{anis_15}. This paper presents complete formal proofs for those results, extends them to the nonseparable model, and also analyzes label complexity. 



\section{Preliminaries}
\label{sec:prelim}
%
%
\subsection{Data models}
%
%
\subsubsection{The separable model}
\begin{figure*}
\begin{center}
\begin{subfigure}{0.4\textwidth}
\centering
\includegraphics[width=0.9\linewidth]{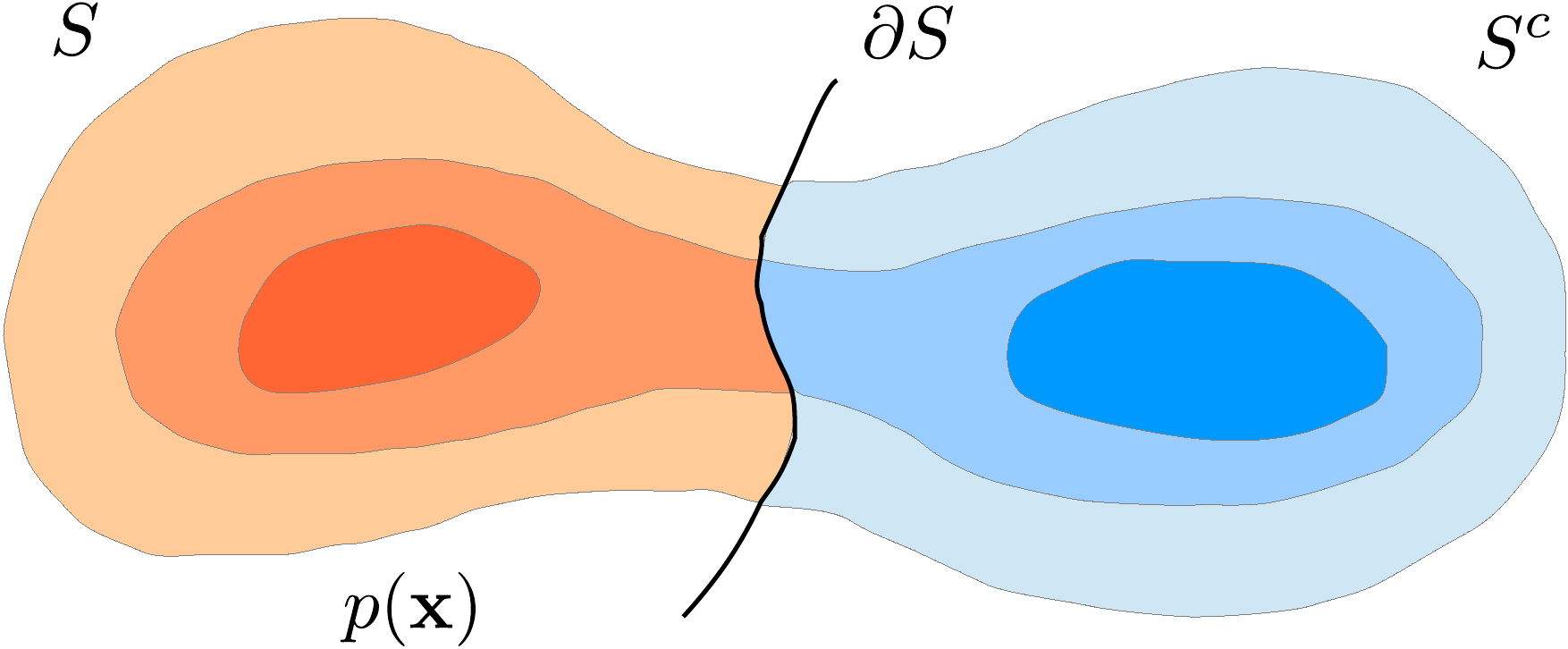}
\caption{}
\label{fig:sep}
\end{subfigure}
\qquad
\begin{subfigure}{0.4\textwidth}
\centering
\includegraphics[width=0.9\linewidth]{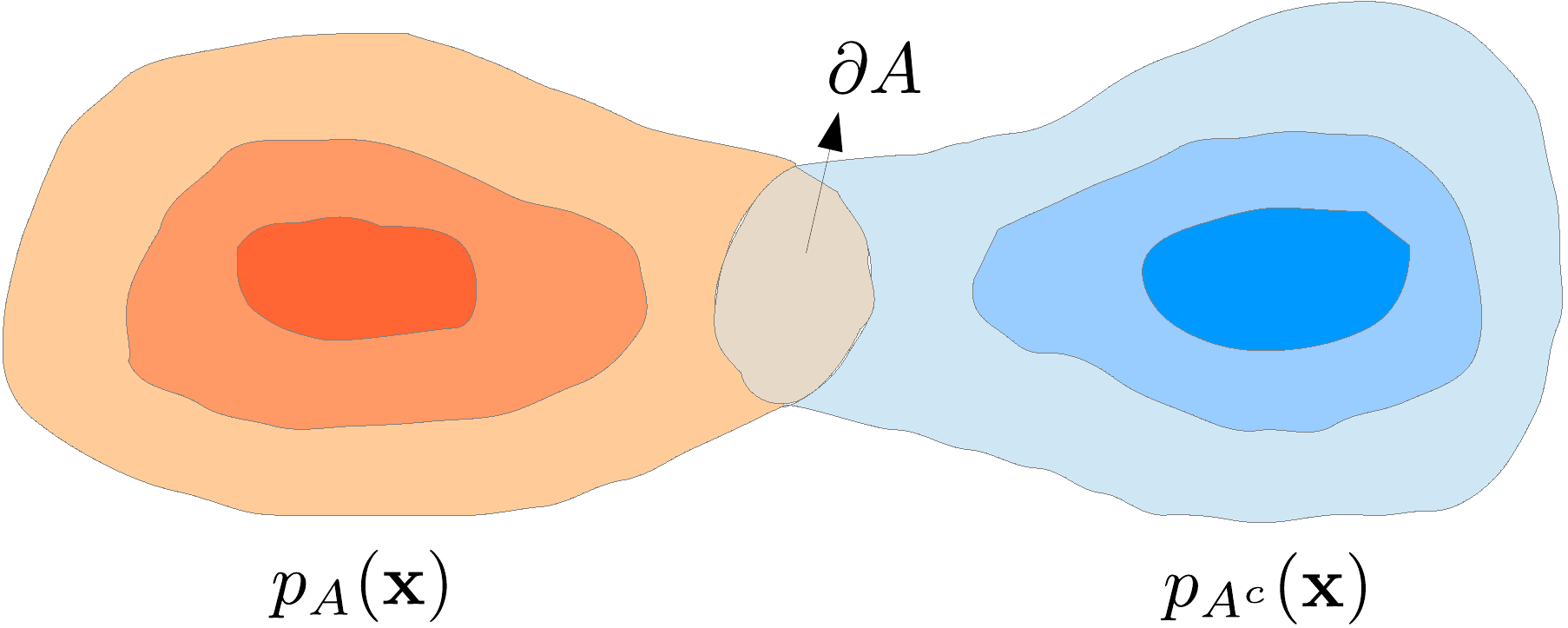}
\caption{}
\label{fig:nonsep}
\end{subfigure}
\end{center}
\caption{Statistical models of data considered in this work: (a) The separable model, (b) The nonseparable model. Darker shades indicate regions of higher density.}
\end{figure*}

In this model, we assume that the dataset consists of a pool of $n$ random, $d$-dimensional feature vectors $\Xc = \{\Xm_1, \Xm_2, \dots, \Xm_n\}$ drawn independently from some probability density function $p(\xv)$ supported on $\mathbb{R}^d$ (this is assumed for simplicity, the analysis can be extended to subsets $D \subset \mathbb{R}^d$ and low-dimensional manifolds $\mathcal{M}$ in $\mathbb{R}^d$, but would more technically involved). To simplify our analysis, we also assume that $p(\xv)$ is bounded from above, Lipschitz continuous and twice differentiable.
We assume that a smooth hypersurface $\partial S$, with radius of curvature lower bounded by a constant $\tau$, splits $\mathbb{R}^d$ into two disjoint classes $S$ and $S^c$, with indicator functions $1_{S}(\xv):\mathbb{R}^d \rightarrow \{0,1\}$ and $1_{S^c}(\xv):\mathbb{R}^d \rightarrow \{0,1\}$.
This is illustrated in Figure~\ref{fig:sep}.
Thus, the $n$-dimensional class indicator signal for class $S$ is denoted by the bold-faced vector notation $\onev_S \in \{0,1\}^n$, and defined as $(\onev_S)_i := 1_{S}(\Xm_i)$, i.e., the $i^\text{th}$ entry of $\onev_S$ is $1$ if $\Xm_i \in S$ and $0$ otherwise.
\subsubsection{The nonseparable model}
In this model, we assume that each class has its own conditional distribution supported on $\mathbb{R}^d$ (that may or may not overlap with other distributions of other classes). The data set consists of a pool of $n$ random and independent $d$-dimensional feature vectors $\Xc = \{\Xm_1, \Xm_2, \dots, \Xm_n\}$ drawn independently from any of the distributions $p_i(\xv)$ with probabilities $\alpha_i$, such that $\sum_i \alpha_i = 1$.
For our analysis, we consider a class denoted by an index $A$ with selection probability $\alpha_A$, class conditional distribution $p_A(\xv)$ and an $n$-dimensional indicator vector $\onev_A$ whose $i^\text{th}$ component takes value $1$ if $\Xm_i$ is drawn from class $A$. Note that $\onev_A$ does not have a continuous domain counterpart, unlike $\onev_S$ which is sampled from the indicator function $1_S(\xv)$ on points in $\Xc$. We illustrate the nonseparable model in Figure~\ref{fig:nonsep}. Further, we denote by $\alpha_{A^c} = 1- \alpha_A$ the probability that a point does not belong to $A$ and by $p_{A^c}(\xv) = \sum_{i \neq A}\alpha_i p_i(\xv) / \alpha_{A^c}$ the density of all such points. The marginal distribution of data points is then given by the mixture density 
\begin{equation}
p(\xv) = \alpha_A p_A(\xv) + \alpha_{A^c} p_{A^c}(\xv).
\end{equation}
Once again, to simplify our analysis, we assume that all distributions are Lipschitz continuous, bounded from above and twice differentiable in $\mathbb{R}^d$. Next, we introduce the notion of a ``boundary'' for classes in the nonseparable model as follows: for class $A$, we define its \emph{overlap} region $\partial A$ as
\begin{equation}
\partial A := \{ \xv \in \mathbb{R}^d \;|\; p_A(\xv) p_{A^c}(\xv) > 0 \}.
\end{equation}
Intuitively, $\partial A$ can be considered as the region of ambiguity, where both points belonging and not belonging to $A$ co-exist. In other words, $\partial A$ can be thought of as a ``boundary'' that separates the region where points can only belong to $A$ from the region where points can never belong to $A$. Since class indicator signals on graphs will change values only within the overlap region, one would expect that the indicators will be smoother if there are fewer data points within this region. We shall show later that this is indeed the case, both theoretically and experimentally. Note that the definition of the boundary is not very meaningful for class conditional distributions with decaying tails, such as the Gaussian, since the boundary in this case technically encompasses the entire feature space. However, in such cases, one can approximate the boundary with appropriate thresholds in the definition and this approximation can also be formalized for distributions with exponentially decaying tails.
\subsection{Graph construction} 
Using the $n$ feature vectors, we construct an undirected distance-based similarity graph where nodes represent the data points and edge weights are proportional to their similarity, given by the Gaussian kernel:
\begin{equation}
w_{ij} := K_{\sigma^2}(\Xm_i,\Xm_j) = \frac{1}{(2\pi\sigma^2)^{d/2}} e^{-\|\Xm_i - \Xm_j\|^2/2\sigma^2 },
\label{eq:graph}
\end{equation}
where $\sigma$ is the variance (bandwidth) of the Gaussian kernel.
Further, we assume $w_{ii} = 0$, i.e., the graph does not have self-loops. 
The adjacency matrix of the graph $\Wm$ is an $n\times n$ symmetric matrix with elements $w_{ij}$, while the degree matrix is a diagonal matrix with elements $\Dm_{ii} = \sum_j w_{ij}$. 
We define the graph Laplacian as $\Lm = \frac{1}{n}(\Dm-\Wm)$. Normalization by $n$ ensures that the norm of $\Lm$ is stochastically bounded as $n$ grows. Since the graph is undirected, $\Lm$ is a symmetric matrix with non-negative eigenvalues $0 \leq \lambda_1 \leq \dots \leq \lambda_n$ and an orthogonal set of corresponding eigenvectors $\{\uv_1, \dots, \uv_n\}$. It is known that for a larger eigenvalue $\lambda$, the corresponding eigenvector $\uv$ exhibits greater variation when plotted over the nodes of the graph~\cite{shuman_spm_13}. Thus, one of the fundamental postulates of Graph Signal Processing consists of using the eigen-decomposition of $\Lm$ to provide a notion of frequency for graph signals, with the eigenvalues acting as \emph{graph frequencies} and the eigenvectors forming the \emph{graph Fourier basis}~\cite{shuman_spm_13}.

\subsection{Graph sampling theory: bandwidth, bandlimited reconstruction and label complexity}
\label{subsec:sampling_theory}
In traditional sampling theory, bandwidth plays an important role in specifying the inherent dimensionality of a signal and therefore determines the sampling rate required for perfect reconstruction.
A similar notion exists for signals defined over graphs -- the bandwidth $\omega(\fv)$ of any signal $\fv$ on the graph is defined as the largest eigenvalue for which the projection of the signal on the corresponding eigenvector is non-zero~\cite{narang_icassp13,anis_icassp_14,anis_tsp_16}, i.e.,
\begin{equation}
\omega(\fv) := \max_i \big\{ \lambda_i \;\big|\; |\uv_i^T \fv| > 0 \big\}.
\end{equation}
Signals with lower bandwidth have low frequency content, and tend to be smoother on the graph.

Bandwidth plays a central role in the sampling theoretic approach to semi-supervised learning, where the class indicator signals are assumed to be bandlimited over the similarity graph and interpolated through bandlimited reconstruction. For a ground-truth signal $\fv$ that we are trying to reconstruct, and whose values are known only on a subset $L \subset \{1,2,\dots,n\}$, this approach involves solving the following least-squares problem~\cite{narang_globalsip13,gadde_kdd_14}:
\begin{equation}
\mathop{\rm min}_\gv \; \| \gv_L - \fv_L \|^2 \;\; \text{subject to} \;\; \omega(\gv) \leq \theta,
\label{eq:bl_interp}
\end{equation}
where $\gv_L$ and $\fv_L$ denote the values of $\gv$ and $\fv$, respectively, on the set $L$. The constraint restricts the hypothesis space to a set of bandlimited signals with bandwidth less than $\theta$, which is equivalent to enforcing smoothness of the labels over the graph. This method essentially improves upon the Fourier eigenvector approach suggested in~\cite{belkin_nips_02,belkin_ml_04} in two ways: first, label prediction can be carried out without explicitly computing the eigenvectors of $\Lm$ using efficient iterative approaches implemented via graph filtering operations~\cite{narang_globalsip13,wang_tsp_15}. And second, one can also use the sampling theorem for graph signals to set $\theta$ as the \emph{cutoff frequency} $\omega_c(L)$ associated with the labeled set~\cite{anis_icassp_14,anis_tsp_16}, which, for a given $L$, is defined as the bandwidth below which any bandlimited signal is uniquely represented by its values on $L$. This approach is taken in~\cite{narang_icassp13,gadde_kdd_14}, and is particularly useful when $\omega(\fv) < \omega_c(L)$, in which case the minimizer $\gv^*$ of~\eqref{eq:bl_interp} exactly equals $\fv$, i.e., $\|\gv^* - \fv\| = 0$. Alternatively, one can also reconstruct $\fv$ using the variational problem: $\mathop{\rm min}_\gv \; \omega(\gv) \;\; \text{subject to} \;\; \gv_L = \fv_L$; the minimizer in this case is also exactly equal to $\fv$ if $\omega(\fv) < \omega_c(L)$~\cite{pesenson_ca_09,anis_tsp_16}. Further, it also possible to provide error bounds for both methods when $\omega(\fv) < \omega_c(L)$ is not satisfied~\cite{anis_tsp_16}.

The bandwidth of any indicator signal is also useful in specifying the amount of labeling required for its recovery in the context of sampling theory, as demonstrated by the following key result~\cite{anis_tsp_16}: 
\begin{lemma}
\label{lemma:num_labels}
Let $\mathcal{N}_\Lm(t)$ denote the number of eigenvalues of $\Lm$ less than or equal to $t$. Then, for any signal $\fv$ with bandwidth $\omega(\fv)$, there exists a subset of nodes $T \subseteq V$ of size $|T| = \mathcal{N}_\Lm(\omega(\fv))$ such that $\fv$ can be perfectly recovered from its values $\fv_T$ on $T$.
\end{lemma}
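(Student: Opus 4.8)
The plan is to reduce the statement to a basic fact of linear algebra, using the spectral characterization of bandwidth. First I would observe that, by the definition of $\omega(\fv)$, the signal $\fv$ lies in the subspace
\[ \mathcal{S} \;:=\; \mathrm{span}\{\,\uv_i \;:\; \lambda_i \le \omega(\fv)\,\}, \]
spanned by the eigenvectors of $\Lm$ associated with eigenvalues at most $\omega(\fv)$. By the definition of the eigenvalue counting function, $\dim \mathcal{S} = \mathcal{N}_\Lm(\omega(\fv)) =: r$ (with multiplicities included). I would then collect an orthonormal basis of $\mathcal{S}$ as the columns of an $n \times r$ matrix $\Um_\mathcal{S}$; since its columns are orthonormal, $\Um_\mathcal{S}$ has rank $r$.

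The key step is then: because $\Um_\mathcal{S}$ has column rank $r$, its row rank is also $r$, so there is a set $T \subseteq V$ with $|T| = r$ such that the $r \times r$ submatrix $(\Um_\mathcal{S})_T$ formed by the rows of $\Um_\mathcal{S}$ indexed by $T$ is invertible. I claim this $T$ is a valid sampling set. Indeed, any $\fv \in \mathcal{S}$ can be written $\fv = \Um_\mathcal{S}\,\cv$ for a unique $\cv \in \RR^r$, and its samples on $T$ are $\fv_T = (\Um_\mathcal{S})_T\,\cv$. Since $(\Um_\mathcal{S})_T$ is invertible, $\cv = (\Um_\mathcal{S})_T^{-1}\fv_T$ is determined by $\fv_T$, and hence so is $\fv = \Um_\mathcal{S}(\Um_\mathcal{S})_T^{-1}\fv_T$. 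This simultaneously gives injectivity of $\fv \mapsto \fv_T$ on $\mathcal{S}$ and an explicit reconstruction formula, i.e., perfect recovery from $|T| = r = \mathcal{N}_\Lm(\omega(\fv))$ samples.

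Equivalently — and this is the form that connects to the cutoff-frequency discussion above — one can phrase it as: $T$ is a uniqueness set for $\mathcal{S}$ iff $\mathcal{S} \cap \{\gv : \gv_T = \zerov\} = \{\zerov\}$; since the latter subspace has dimension $n - |T|$, a dimension count shows $|T| \ge r$ is necessary for such a $T$ to exist, while the rank argument above shows the value $|T| = r$ is attained. Either route establishes the claim.

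I do not expect a genuine obstacle: the content is essentially the observation that a rank-$r$ matrix contains an invertible $r \times r$ submatrix. The only points needing care are (i) reading $\mathcal{N}_\Lm$ with multiplicities, so that $\dim\mathcal{S}$ is \emph{exactly} $\mathcal{N}_\Lm(\omega(\fv))$, and (ii) noting that the lemma asserts only the existence of such a $T$ — it says nothing about how to select it efficiently, nor about the conditioning of $(\Um_\mathcal{S})_T$, which governs numerical stability of the reconstruction; those issues are orthogonal to the statement and are handled in the sampling-set-selection literature cited.
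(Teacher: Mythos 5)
Your proposal is correct and follows essentially the same route as the paper's proof: both use that $\fv$ lies in the span of the first $r=\mathcal{N}_\Lm(\omega(\fv))$ eigenvectors, that the resulting $n\times r$ eigenvector matrix has rank $r$ and hence an invertible $r\times r$ row submatrix indexed by some $T$, and both give the same reconstruction formula $\fv = \Um_{:,R}\,\Um_{T,R}^{-1}\fv_T$. The extra dimension-counting remark and the caveats about multiplicities and conditioning are fine additions but do not change the argument.
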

\begin{proof}
Since $\fv$ has bandwidth $\omega(\fv)$, it is spanned by the first $\mathcal{N}_\Lm(\omega(\fv))$ eigenvectors of $\Lm$, i.e., let $R := \{1,\dots,r\}$, then we have
\begin{equation}
\fv = \sum_{i=1}^{\mathcal{N}_\Lm(\omega(\fv))} c_i \uv_i = \Um_{:,R}\cv,
\label{eq:gft}
\end{equation}
where $c_i \neq 0$ for $i = \mathcal{N}_\Lm(\omega(\fv))$ and $\Um_{:,R}$ denotes the rectangular matrix formed using the first $r$ eigenvectors of $\Lm$. Since the eigenvectors $\{\uv_i\}$ are orthogonal, $\Um_{:,R}$ has rank $r = \mathcal{N}_\Lm(\omega(\fv))$. Therefore, there exists a subset of rows, indexed by a set $T$, with cardinality $|T| = r = \mathcal{N}_\Lm(\omega(\fv))$, such that the $r \times r$ matrix $\Um_{T,R}$ is full-rank, and thus invertible. Using this in \eqref{eq:gft}, we get $\cv = \Um_{T,R}^{-1} \fv_T$ and thus $\fv$ can be perfectly recovered from $\fv_T$ as $\fv = \Um_{V,R} \Um_{T,R}^{-1} \fv_T$, thereby proving our claim. Note that this is exactly the closed-form solution of~\eqref{eq:bl_interp}, for $L = T$ and $\theta = \omega(\fv)$, when the eigenvectors of $\Lm$ are known.
\end{proof}
We shall use this result later, to compute the label complexity of any signal $\fv$ on the graph as $\frac{1}{n} \Nc_\Lm\left(\omega(\fv)\right)$. Note, however, that this quantity only specifies the fraction of nodes to label on the graph -- selecting which nodes to label is another question altogether. This problem has been well-studied as part of graph sampling theory~\cite{anis_icassp_14,anis_tsp_16,shomorony_globalsip_14,chen_tsp_15}, with consideration of other important issues such as stability of reconstruction and computational complexity.

\subsection{Estimating bandwidth for graph signals}

Ideally, computing the bandwidth $\omega(\fv)$ of a graph signal $\fv$ requires obtaining the eigenvectors $\{\uv_i\}$ of $\Lm$ and the corresponding projections $\tilde{\fv}_i = \uv_i^T \fv$. However, analyzing the convergence of these coefficients is technically challenging. Therefore, we resort to the following estimate of the bandwidth~\cite{anis_tsp_16}:
\begin{equation}
\omega_m(\fv) := \left( \frac{\fv^T \Lm^m \fv}{\fv^T \fv} \right)^{1/m},
\end{equation}
where we call $\omega_m(\fv)$ the $m^\text{th}$-order bandwidth estimate.
It can be shown that the bandwidth estimates satisfy the property: for all $ 0 < m_1 < m_2$, $\omega_{m_1}(\fv) \leq \omega_{m_2}(\fv) \leq \omega(\fv)$. In other words, $\{\omega_m(\fv)\}$ forms a monotonically improving sequence of estimates of the true bandwidth $\omega(\fv)$. Further, we can also show~\cite{anis_tsp_16}:
\begin{equation}
\forall \fv, \; \omega(\fv) = \lim_{m \rightarrow \infty} \omega_m(\fv).
\label{eq:bw}
\end{equation} 

\subsection{Focus of this paper}

The discussion in Section~\ref{subsec:sampling_theory} indicates that in the discrete setting, with finite number of data points, the notions of bandwidth, bandlimited reconstruction and label complexity are well-motivated and quite useful in highlighting a sampling theory perspective of graph-based semi-supervised learning. However, there is a lack of understanding of these concepts in terms of their geometrical interpretation, i.e., their connection with the underlying geometry of the dataset. Thus, inspired by existing analysis in the literature for popular graph-based smoothness measures, we seek to bridge this gap by analyzing these concepts in the asymptotic regime of infinite data points for the data models and graph construction scheme described earlier.

Analyzing the convergence of the bandwidth estimates of class indicator signals for the separable and the nonseparable models constitutes the main subject for the rest of this paper. Our approach, similar to existing results in the literature, starts in the discrete domain by drawing $n$ samples from the data models, constructs a sequence of graphs $G_{n,\sigma}$ from the data points, and considers the behavior of
\begin{equation*}
\omega_m(\onev_S) = \left( \frac{\onev_S^T \Lm^m \onev_S}{\onev_S^T \onev_S} \right)^{\frac{1}{m}} \text{and}\;\; \omega_m(\onev_A) = \left( \frac{\onev_A^T \Lm^m \onev_A}{\onev_A^T \onev_A} \right)^{\frac{1}{m}}
\end{equation*}
over the graphs as $n \rightarrow \infty$, $\sigma \rightarrow 0$ and $m \rightarrow \infty$. Intuitively, the condition $n \rightarrow \infty$ implies an abundance of unlabeled data, $\sigma \rightarrow 0$ dictates that the connectivity around each node is meaningful and does not blow up, and $m \rightarrow \infty$ translates to improving estimates of the bandwidth. Our analysis relates $\omega_m(\onev_S)$ and $\omega_m(\onev_A)$ to the underlying data distribution $p(\xv)$ and class boundaries -- the hypersurface $\partial S$ in the separable case and the overlap region $\partial A$ in the nonseparable case. Using these results, we also comment on the label complexities of reconstructing $\onev_S$ and $\onev_A$ over the graph in the asymptotic limit.


\section{Related work and connections}
\label{sec:related_work}
\begin{table*}[t]
\centering
\caption{Related convergence results in the literature under different data models and graph construction schemes. All models assume that the distributions are smooth (at least twice-differentiable).
Further, the graph Laplacian is defined as $\Lm = \frac{1}{n} (\Dm - \Wm)$ in all cases.
\cite{maier_13} also studies convergence of graph cuts for weighted $k$-nearest neighbor and $r$-neighborhood graphs which we do not include for brevity.}
\label{tab:related_work}
\begin{tabular}{p{0.5in}|p{1.6in}|p{1.6in}|p{0.7in}|p{0.8in}|p{0.9in}}
\hline
\hline
\emph{Work} &
\emph{Data model} &
\emph{Graph model} &
\emph{Quantity} &
\emph{Convergence regime} &
\emph{Limit (within constant scaling factor)} 
\\
\hline
\hline
Narayanan \emph{et al}~\cite{narayanan_06} &
$p(\xv)$ supported on manifold $\Mc \subset \mathbb{R}^d$, separated into $S$ and $S^c$ by smooth hypersurface $\partial S$ &
Normalized Gaussian weights $w'_{ij} = \frac{w_{ij}}{\sqrt{d_i d_j}}$ &
$ \frac{1}{n\sigma} \onev_S^T \Lm \onev_S $   & $n \to \infty$, $\sigma \to 0$ &
$\int_{\partial S} p(\sv) d\sv$
\\
\hline
Maier \emph{et al} \cite{maier_13} &
$p(\xv)$ supported on $\Mc \subset \mathbb{R}^d$, separated into $S$ and $S^c$ by hyperplane $\partial S$ &
$r$-neighborhood, unweighted &
$ \frac{1}{n r^{d+1}} \onev_S^T \Lm \onev_S $ & $n \to \infty$, $r \to 0$ &
$\int_{\partial S} p^2(\sv)d\sv$
\\
\cline{3-6} &
&
$k$-nn, unweighted, $t = (k/n)^{1/d}$ &
$ \frac{1}{n t^{d+1}} \onev_S^T \Lm \onev_S $ & $n \to \infty$, $t\to 0$ &
$\int_{\partial S} p^{1-1/d}(\sv) d\sv$ \\
\cline{3-6} &
&
fully-connected, Gaussian weights &
$ \frac{1}{n\sigma} \onev_S^T \Lm \onev_S$ &
$n \to \infty$, $\sigma \to 0$ &
$\int_{\partial S} p^2(\sv)d\sv$
\\
\hline
Bousquet \emph{et al} \cite{bousquet_04}, Hein \cite{hein_colt_06} &
$p(\xv)$ and $f(\xv)$ supported on $\mathbb{R}^d$ &
fully-connected, weights $w_{ij} =$ $\frac{1}{n\sigma^d} K\left(\frac{\| \Xm_i - \Xm_j\|^2}{\sigma^2}\right)$, where $K(.)$ is a smooth decaying kernel &
$ \frac{1}{n\sigma^2} \fv^T \Lm \fv$ &
$n \to \infty$, $\sigma \to 0$ &
$\int \| \nabla f(\xv)\|^2 p^2(\xv) d\xv$
\\
\hline
Zhou \emph{et al} \cite{zhou_aistats_11} &
Uniformly distributed on $d$-dim. submanifold $\Mc$ &
fully-connected, Gaussian weights &
$ \frac{1}{n\sigma^m} \fv^T \Lm^m \fv$ &
$n \to \infty$, $\sigma \to 0$ &
$\int f(\xv) \Delta^m  f(\xv) d\xv$
\\
\hline
Garc{\'i}a Trillos \& Slep{\v c}ev \cite{trillos_arma_16} &
p(\xv) supported on $D \subset \mathbb{R}^d$ &
fully-connected, weights $w_{ij} =$ $\frac{1}{\varepsilon^d}\eta\left(\frac{\| \Xm_i - \Xm_j\|}{\varepsilon}\right)$, where $\eta(.)$ is a smoothly decaying kernel  &
$ \frac{1}{n^2 \varepsilon} GTV(\fv) $ &
$n \to \infty$, $\varepsilon \to 0$ &
$\int \| \nabla f(\xv)\| p^2(\xv) d\xv$
\\
\hline
El Alaoui \emph{et al} \cite{elalaoui_colt_16}, Slep{\v{c}}ev \& Thorpe~\cite{slepcev_arxiv_17} &
p(\xv) supported on $[0,1]^d$, $\Omega \subset \mathbb{R}^d$ &
fully-connected, weights $w_{ij} =$ $\phi\left(\frac{\| \Xm_i - \Xm_j\|}{h}\right)$, where $\phi(.)$ is a smoothly decaying kernel &
$ \frac{1}{n^2 h^{p + d}} J_p(\fv) $ &
$n \to \infty$, $h \to 0$ &
$\int \| \nabla f(\xv)\|^p p^2(\xv) d\xv$
\\
\hline
\hline
This work &
$p(\xv)$ supported on $\mathbb{R}^d$, separated into $S$ and $S^c$ by smooth hypersurface $\partial S$ &
fully-connected, Gaussian weights &
$\frac{1}{\sigma^{1/m}} \omega_m(\onev_S)$ &
$n \to \infty$, $\sigma \to 0$, $m \to \infty$ & $\sup_{\sv \in \partial S} p(\sv)$
\\
\cline{2-6} &
Drawn from $p_A(\xv)$ and $p_{A^c}(\xv)$ supported on $\mathbb{R}^d$ with probabilities $\alpha_A$ and $\alpha_{A^c}$ &
fully-connected, Gaussian weights &
$ \omega_m(\onev_A)$ &
$n \to \infty$, $\sigma \to 0$, $m \to \infty$ &
$\sup_{\xv \in \partial A} p(\xv)$ 
\\     
\hline
\hline
\end{tabular}
\end{table*}%
%
Existing convergence analyses of the graph-based smoothness measures for various graph construction schemes appear in two different settings -- \emph{classification} and \emph{regression}.
The classification setting assumes that labels indicate class memberships and are discrete, typically with $1/0$ values. Note that both the separable and nonseparable data models considered in our paper are in the classification setting. On the other hand, in the regression setting, one allows the class label signal $\fv$ to be sampled from a smooth function on $\mathbb{R}^d$ with soft values, such that $\fv \in \mathbb{R}^n$, and later applies some thresholding mechanism to infer class memberships. For example, in the two class problem, one can assign $+1$ and $-1$ to the two classes and threshold $\fv$ at $0$.
Convergence analysis of smoothness measures in this setting requires different scaling conditions than the classification setting, and leads to fundamentally different limit values that require differentiability of the label functions in the continuum. Applying these to class indicator functions may lead to ill-defined results.
A summary of convergence results in the literature for both settings is presented in Table~\ref{tab:related_work}. Although these results do not focus on analyzing the bandwidth of class indicator signals, the proof techniques used in this paper are inspired by some of these works. We review them in this section and discuss their connections to our work.

\subsection{Classification setting}

Prior work under this setting assumes the separable data model where the feature space is partitioned by smooth decision boundaries into different classes. When $m=1$, the bandwidth estimate $\omega_m(\onev_S)$ for the separable model in our work reduces (within a scaling factor) to the empirical graph cut for the partitions $S$ and $S^c$ of the feature space, i.e.,
\begin{equation} 
\textit{Cut}(S,S^c) := \sum_{\Xm_i \in S, \Xm_j \in S^c} w_{ij} = n \onev_S^T \Lm \onev_S \;.
\end{equation}
Convergence of this quantity has been studied before in the context of spectral clustering, where one tries to minimize it across the two partitions of the nodes.
It has been shown in~\cite{maier_13} that the cut formed by a hyperplane $\partial S$ in $\mathbb{R}^d$ converges with some scaling under the rate conditions $\sigma \rightarrow 0$ and $n\sigma^{d+1} \rightarrow \infty$ as
\begin{equation}
\label{eq:result_maier}
\frac{1}{n\sigma} \onev_S^T \Lm \onev_S \xrightarrow{p.} \frac{1}{\sqrt{2\pi}}\int_{\partial S} p^2(\sv)d\sv,
\end{equation}
where $d\sv$ ranges over all $(d-1)$-dimensional volume elements tangent to the hyperplane $\partial S$, and $p.$ denotes convergence in probability.
The analysis has also been extended to other graph construction schemes such as the $k$-nearest neighbor graph and the $r$-neighborhood graph, both weighted and unweighted. The condition $\sigma \rightarrow 0$ in~\eqref{eq:result_maier}
is required to have a clear and well-defined limit on the right hand side. We borrow this convergence regime in our work, since it allows a succinct interpretation of the bandwidth of class indicator signals. Intuitively, it enforces sparsity in the similarity matrix $\Wm$ by shrinking the neighborhood volume as the number of data points increases. As a result, one can ensure that the graph remains sparse even as the number of points goes to infinity. 
A similar result for a similarity graph constructed with normalized weights $w'_{ij} = w_{ij}/\sqrt{d_i d_j}$ was shown earlier for an arbitrary hypersurface $\partial S$ in \cite{narayanan_06}, where $d_i$ denotes the degree of node $i$.
In this case, normalization of the graph weights results in convergence to $\frac{1}{\sqrt{2\pi}} \int_{\partial S} p(\sv)d\sv$. Similarly, in~\cite{castro_aap_12}, the convergence of normalized cuts is analyzed for points drawn from a uniform density.
All of these results aim to provide an interpretation for spectral clustering -- up to some scaling, the empirical cut value converges to a weighted volume of the boundary. Thus, spectral clustering is a means of performing low density separation on a finite sample drawn from a distribution in feature space.

Note that these works provide little insight for the convergence analysis of higher-order regularizers, i.e., $\omega_m(\onev_S)$ for $m > 1$ in our case, since these require different scaling factors and rate conditions. Further, we get no clue about the continuum limit values of $\omega_m(\onev_S)$ and $\omega_m(\onev_A)$ from any of these results. However, the definition and some of the proof techniques we use for the separable models in this paper have been inspired by~\cite{narayanan_06,maier_13}.

\subsection{Regression setting}

To predict the labels of unknown samples in the regression setting, one generally minimizes the graph Laplacian regularizer $\fv^T \Lm \fv$ subject to the known label constraints \cite{zhu_03}:
\begin{equation}\label{eq:ssl_formulation}
\mathop{\rm min}_\fv \; \fv^T \Lm \fv \text{ such that } \fv(L) = \yv(L),
\end{equation}
One particular convergence result in this setting assumes that $n$ data points are drawn \emph{i.i.d.} from $p(\xv)$ and are labeled by sampling a smooth function $f(\xv)$ on $\mathbb{R}^d$. Here, the graph Laplacian regularizer $\fv^T \Lm \fv$ can be shown to converge in the asymptotic limit under the conditions $\sigma \rightarrow 0$ and $n\sigma^{d} \rightarrow \infty$ as in ~\cite{bousquet_04, hein_colt_06}:
\begin{equation}\label{eq:beliefresult}
\frac{1}{n \sigma^2} \fv^T \Lm \fv \xrightarrow{\;\;p.\;} C \int_{\mathbb{R}^d} \| \nabla f(\xv) \|^2 p^2(\xv) d\xv,
\end{equation}
where for each $n$, $\fv$ is the $n$-dimensional label vector representing the values of $f(\xv)$ at the $n$ sample points, $\nabla$ is the gradient operator and $C$ is a constant factor independent of $n$ and $\sigma$. The right hand side of the result above is a weighted Dirichlet energy functional that penalizes variation in the label function weighted by the data distribution. Similar to the justification of spectral clustering, this result justifies using the formulation in \eqref{eq:ssl_formulation} for semi-supervised classification: given label constraints, the predicted label function must vary little in regions of high density.
The work of~\cite{hein_colt_06,hein_thesis_06} generalizes this result by using arbitrary kernel functions for defining graph weights, and defining data distributions over manifolds in $\mathbb{R}^d$.
Convergence results for another regularizer called Graph Total Variation, defined as $GTV(\fv) = \sum_{i,j}w_{ij}|f_i - f_j|$, are presented in~\cite{trillos_arma_16,trillos_arxiv_16}. For data points drawn from $p(\xv)$ defined over a domain $D \subset \mathbb{R}^d$, graph weights given by $w_{ij} = \frac{1}{\varepsilon^d} \eta\left( \frac{\|\Xm_i-\Xm_j\|}{\varepsilon} \right)$, one has as $n\rightarrow \infty$ and $\varepsilon \rightarrow 0$: 
\begin{equation}
\frac{1}{n^2\varepsilon} GTV(\fv) \xrightarrow{\;\;\Gamma\;} C\int_D \|\nabla f(\xv)\|p^2(\xv)d\xv,
\end{equation}
where the limit is analyzed in the setting of $\Gamma$-convergence~\cite{trillos_arma_16}.
These results extend to the classification setting when $f(\xv)$ is an indicator function, for example, the limit for $f(\xv) = 1_S(\xv)$ reduces to that of~\eqref{eq:result_maier}. This approach is used in~\cite{trillos_jmlr_16} to analyze convergence of Cheeger and ratio cuts.

Similar convergence results have also been derived for the higher-order Laplacian regularizer $\fv^T \Lm^m \fv$ obtained from uniformly distributed data~\cite{zhou_aistats_11}. In this case, it was shown that for data points obtained from a uniform distribution on a $d$-dimensional submanifold $\Mc \subset \mathbb{R}^N$ such that ${\rm Vol} (\Mc) = 1$ and $2m$-differentiable functions $f(\xv)$, one has as $n \rightarrow \infty$:
\begin{equation}
\label{eq:higher_order}
\frac{1}{n\sigma_n^m} \fv^T \Lm^m \fv \xrightarrow{\;\;p.\;} C \int_{\Mc} f(\xv) \Delta^m f(\xv) d\xv,
\end{equation}
where $\Delta$ is the Laplace operator and $\sigma_n = n^{-1/(2d + 4 + \alpha)}$ is a vanishing sequence with $\alpha > 0$. Extensions for non-uniform probability distributions $p(\xv)$ over the manifold can be obtained using the weighted Laplace-Beltrami operator~\cite{belkin_jcss_08,zhou_kdd_11}. More recently, an $\ell_p$-based Laplacian regularization has been proposed for imposing smoothness constraints in semi-supervised learning problems~\cite{buhler_icml_09,elalaoui_colt_16}. This is similar to a higher-order regularizer but is defined as $J_p(\fv) := \sum_{i,j \in E} w_{ij}^p |f_i - f_j|^p$, where $w_{ij} = \phi(\|\Xm_i - \Xm_j\| / h)$ and $\phi(.)$ is a smoothly decaying Kernel function. It has been shown for a bounded density $p(\xv)$ defined on $[0,1]^d$ that for every $p \geq 2$, as $n \rightarrow \infty$, followed by $h \rightarrow 0$,
\begin{equation}
\label{eq:p_laplacian}
\frac{1}{n^2 h^{p + d}} J_p(\fv) \xrightarrow{\;\;p.\;} C \int_{[0,1]^d} \|\nabla f(\xv)\|^p p^2(\xv) d\xv.
\end{equation}
The work of~\cite{slepcev_arxiv_17} generalizes this result over an open, bounded and connected set $\Omega \subset \mathbb{R}^d$ and analyzes rate conditions such that the scalings $n\rightarrow \infty$, $h\rightarrow 0$ occur jointly.  

Note that although our work also uses higher powers of $\Lm$ in the expressions for $\omega_m(\onev_S)$ and $\omega_m(\onev_A)$, we cannot use the  convergence results in~\eqref{eq:higher_order} and the proof techniques of~\eqref{eq:p_laplacian}, since they are only applicable for smooth functions (i.e., differentiable up to a certain order) on $\mathbb{R}^d$. Specifically, in our case, $\onev_S$ in the separable model is sampled from a discontinuous indicator function $1_S(\xv)$, hence plugging it into existing results does not give a meaningful result for higher values of $m$. Further, the nonseparable model can only be defined in the classification setting, i.e., $\onev_A$ in the nonseparable model does not have a continuum counterpart. Therefore, our analysis has to take a different route that has more similarities with the proof techniques used for the classification setting. We shall later see that a bulk of the effort in proving our results goes into expanding $\omega_m(\onev_S)$ and $\omega_m(\onev_A)$ for any $m$ by keeping track of every term in the expansion. This is followed by a careful evaluation of the integrals in their expected values by reducing them term-by-term.


%
%
%
%
\section{Main results and Discussion}
\label{sec:main_results}
\subsection{Interpretation of bandwidth and bandlimited reconstruction}

We first show that under certain conditions, the bandwidth estimates of class indicator signals for both the data models, i.e., $\omega_m(\onev_S)$ and $\omega_m(\onev_A)$, over Gaussian kernel-based similarity graphs $G_{n,\sigma}$ constructed from data points in $\Xc$, converge to quantities that are functions of the underlying distribution and the class boundary for both data models.
This convergence is achieved under the following asymptotic regime:
\begin{enumerate}
\item Increasing size of dataset: $n \rightarrow \infty$.
\item Shrinking neighborhood volume: $\sigma \rightarrow 0$.
\item Improving bandwidth estimates: $m \rightarrow \infty$.
\end{enumerate}
Note that an increasing size of the dataset $n \rightarrow \infty$ is required for the stochastic convergence of the bandwidth estimate.
$\sigma \rightarrow 0$ ensures that the limiting values are concise and have a simple interpretation in terms of the data geometry.
Intuitively, as the number of data points increases, the neighborhood around each data point shrinks -- as a result, the degree of each node in the graph does not blow up. Finally, $m \rightarrow \infty$ leads to improving values of the bandwidth estimate.

The convergence results are precisely stated in the following theorems:
\begin{theorem}
If $n \rightarrow \infty$, $\sigma \rightarrow 0$ and $m \rightarrow \infty$ while satisfying the following rate conditions
\begin{enumerate}
\item $(n \sigma^{md+1})/(m^2C^m) \rightarrow \infty$, where $C = 2/(2\pi)^{d/2}$,
\item $m 2^m \sigma \rightarrow 0$,
\end{enumerate}
then for the separable model, one has
\begin{equation}
\frac{1}{\sigma^{1/m}} \omega_m(\onev_S) \xrightarrow[]{\;\; \textit{p.} \;} \sup_{\sv \in \partial S} p(\sv),
\label{eq:bwresult_sep}
\end{equation}
where ``p." denotes convergence in probability.
\label{thm:conv_sep}
\end{theorem}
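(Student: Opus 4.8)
The plan is to establish the convergence in two stages: first show that the ``energy'' quantity $\onev_S^T \Lm^m \onev_S$, suitably normalized, has a well-understood limit, and then translate this back to the bandwidth estimate $\omega_m(\onev_S) = (\onev_S^T \Lm^m \onev_S / \onev_S^T \onev_S)^{1/m}$. For the denominator, since each $\Xm_i$ lands in $S$ independently with probability $q := \int_S p(\xv)\,d\xv$, the law of large numbers gives $\tfrac1n \onev_S^T\onev_S = \tfrac1n\sum_i 1_S(\Xm_i) \xrightarrow{p.} q$, a fixed constant; taking the $m$-th root of $1/(nq)$ contributes a factor tending to $1$ as $m\to\infty$ (since $q^{1/m}\to 1$ and $n^{-1/m}\to1$ when $n$ grows sub-exponentially in $m$, which the rate conditions guarantee). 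So the whole game is the numerator.

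For the numerator, I would expand $\onev_S^T \Lm^m \onev_S$ combinatorially. Writing $\Lm = \tfrac1n(\Dm-\Wm)$, the quantity $\onev_S^T(\Dm-\Wm)^m\onev_S$ expands into a sum over walks $i_0,i_1,\dots,i_m$ of products of entries of $\Dm-\Wm$; the off-diagonal part contributes $-w_{i_{k}i_{k+1}}$ and the diagonal part $\Dm_{i_ki_k}=\sum_j w_{i_kj}$. The key algebraic observation — analogous to the $m=1$ identity $\onev_S^T(\Dm-\Wm)\onev_S = \sum_{i\in S, j\in S^c} w_{ij}$ — is that after collecting terms, the sum localizes to walks that cross the boundary $\partial S$: a walk contributes nontrivially only if at least one step connects a point in $S$ to a point in $S^c$. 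Because the Gaussian kernel $K_{\sigma^2}$ with $\sigma\to0$ is sharply concentrated, only short hops survive, so the dominant contribution comes from walks that stay within an $O(\sigma)$-tube around $\partial S$. Taking expectations and converting sums to integrals against $p(\xv)$, each of the $m$ integrations over $\mathbb{R}^d$ of a Gaussian kernel produces, after the boundary-crossing constraint is accounted for, a factor involving $p$ evaluated near $\partial S$ and an explicit Gaussian normalization constant; the bookkeeping should yield something like $\E{\onev_S^T\Lm^m\onev_S} \approx n\,\sigma^{\,\text{(something)}}\,C^{m}\!\!\int_{\partial S}(\text{power of }p)\,d\sv$ up to lower-order terms, where $C = 2/(2\pi)^{d/2}$ as in the theorem statement. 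The precise power of $\sigma$ and the precise functional of $p$ over $\partial S$ must come out so that, after dividing by $\onev_S^T\onev_S\sim nq$ and raising to the power $1/m$, the $m$-dependent constants $C^{m}$, the combinatorial multiplicities, and the $\int_{\partial S}$ term all collapse — as $m\to\infty$ — to $\sigma^{1/m}\sup_{\sv\in\partial S}p(\sv)$. The appearance of the supremum (rather than an integral) is exactly the familiar phenomenon that $\big(\int g^m\big)^{1/m}\to \|g\|_\infty$; heuristically the $m$-th power of the kernel weights concentrate the boundary integral onto the point of maximal density.

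After the expectation is controlled, I would bound the variance of $\onev_S^T\Lm^m\onev_S$ (variance--bias decomposition, as the authors announce) to upgrade convergence of the mean to convergence in probability: the variance is a sum over pairs of walks, and one shows it is of strictly smaller order than the square of the mean provided $n\sigma^{md+1}/(m^2C^m)\to\infty$ — this is precisely rate condition (1), which ensures enough sample points populate every $O(\sigma^m)$-scale neighborhood to make the empirical averages concentrate. Rate condition (2), $m2^m\sigma\to0$, is what kills the cross-terms and higher-order Taylor remainders in the kernel expansion (the $2^m$ tracking the number of ``diagonal vs. off-diagonal'' choices per step), guaranteeing the tube around $\partial S$ is thin enough that $p$ is essentially constant on it and the curvature of $\partial S$ (bounded via $\tau$) does not enter the limit. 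Finally, Slutsky-type arguments combine numerator, denominator, and the $1/m$-th power.

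The main obstacle, I expect, is the careful term-by-term expansion of $\onev_S^T\Lm^m\onev_S$ and the matching $\sigma$-scaling: one must identify which of the exponentially many ($\sim 2^m$ times walk multiplicities) terms are dominant, show that the boundary-crossing constraint forces the right power of $\sigma$, and prove that the combinatorial constants assemble into exactly $C^m$ times a boundary functional whose $m$-th root tends to the supremum — all while the error terms are shown negligible under the stated rate conditions. This is the ``bulk of the effort'' the authors allude to: it is not a single clever trick but a sustained, bookkeeping-heavy induction or generating-function argument over walk structure, with the Gaussian integral identities (products and convolutions of Gaussians) applied repeatedly and uniformly in $m$.
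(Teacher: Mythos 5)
Your proposal follows essentially the same route as the paper: a variance--bias decomposition with the denominator handled by the law of large numbers, a $2^m$-term combinatorial expansion of $\onev_S^T(\Dm-\Wm)^m\onev_S$, repeated Gaussian convolution identities reducing the expectation to a boundary functional proportional to $\int_{\partial S}p^{m+1}(\sv)\,d\sv$, a U-statistic/Bernstein concentration step governed by rate condition (1), error terms of order $m2^m\sigma$ governed by rate condition (2), and the final passage $\bigl(\int_{\partial S}p^{m+1}\bigr)^{1/m}\to\sup_{\partial S}p$. The one imprecision --- writing the leading constant as $C^m$, whereas in the paper $C^m/\sigma^{md}$ is only the uniform bound on the kernel used in the error and concentration estimates and the actual constant is the alternating binomial sum $t(m)=\sum_{r}\binom{m-1}{r}(-1)^r(\sqrt{r+1}-\sqrt{r})$ arising from pairing the $2^m$ terms --- is immaterial since that constant washes out under the $1/m$-th root.
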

\begin{theorem}
If $n \rightarrow \infty$, $\sigma \rightarrow 0$ and $m \rightarrow \infty$ while satisfying the following rate conditions
\begin{enumerate}
\item $(n \sigma^{md})/(m^2C^m) \rightarrow \infty$, where $C = 2/(2\pi)^{d/2}$,
\item $m2^m\sigma^2 \rightarrow 0$,
\end{enumerate}
then for the non-separable model, one has
\begin{equation}
\omega_m(\onev_A) \xrightarrow[]{\;\; \textit{p.} \;} \sup_{\xv \in \partial A} p(\xv).
\label{eq:bwresult_nonsep}
\end{equation}
\label{thm:conv_nonsep}
\end{theorem}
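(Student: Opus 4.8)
The plan is to mirror the strategy for the separable case (Theorem~\ref{thm:conv_sep}) via a variance--bias decomposition of the quantity $\omega_m(\onev_A)^m = (\onev_A^T \Lm^m \onev_A)/(\onev_A^T \onev_A)$, analyzing numerator and denominator separately and then combining. First I would handle the denominator: $\onev_A^T \onev_A = \sum_i (\onev_A)_i$ is the number of points drawn from class $A$, a sum of i.i.d.\ Bernoulli$(\alpha_A)$ variables, so by the law of large numbers $\frac1n \onev_A^T \onev_A \xrightarrow{p.} \alpha_A$. This is the easy part. The bulk of the work is the numerator $\onev_A^T \Lm^m \onev_A$. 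Writing $\Lm = \frac1n(\Dm - \Wm)$ and expanding $\Lm^m$ by the binomial-type multinomial expansion over the noncommuting matrices $\Dm$ and $\Wm$, I would track every term of the form $\frac{1}{n^m} \onev_A^T (\text{product of } \Dm\text{'s and } \Wm\text{'s}) \onev_A$. Each such term becomes a multi-index sum $\sum_{i_0, i_1, \ldots, i_m}$ over chains of nodes, with a product of kernel weights $w_{i_k i_{k+1}}$ (from the $\Wm$ factors) and degrees (from the $\Dm$ factors), multiplied by $(\onev_A)_{i_0}(\onev_A)_{i_m}$ — or, for degree factors, a contracted version.

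Next I would pass to expectations. Because the $\Xm_i$ are i.i.d.\ from the mixture $p(\xv) = \alpha_A p_A(\xv) + \alpha_{A^c} p_{A^c}(\xv)$, and because $(\onev_A)_i$ indicates that $\Xm_i$ was drawn from the $A$ component, conditioning on the component labels and then integrating, each chain sum turns into an iterated integral of products of Gaussian kernels $K_{\sigma^2}$ against the appropriate densities. The key analytic device is that as $\sigma \to 0$, $K_{\sigma^2}(\xv, \cdot)$ acts like an approximate identity: $\int K_{\sigma^2}(\xv,\yv) g(\yv)\, d\yv = g(\xv) + O(\sigma^2)$ for smooth $g$, using the assumed twice-differentiability and boundedness of all densities. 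Iterating this collapse along the chain, the dominant contribution of $\frac{1}{n^m}\onev_A^T\Lm^m\onev_A$ reduces (after the Laplacian's difference structure kills the bulk terms) to an integral concentrated on the overlap region $\partial A = \{p_A p_{A^c} > 0\}$, because only there do chains that start and end in the $A$-component but traverse the mixture pick up a nonvanishing net contribution — away from $\partial A$ the signal $\onev_A$ is locally constant and $\Lm$ annihilates constants. After extracting the leading order in $\sigma$ and $n$, I expect $\frac1n \onev_A^T \Lm^m \onev_A$ to behave like $\alpha_A \big(\sup_{\xv \in \partial A} p(\xv)\big)^m$ up to lower-order terms controlled by rate condition~(2), $m 2^m \sigma^2 \to 0$; dividing by the denominator asymptotics $\alpha_A$ and taking the $m$-th root then yields $\sup_{\xv\in\partial A} p(\xv)$. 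The emergence of the \emph{supremum} (rather than an integral, as in $m=1$) comes from the $m$-th root applied to an integral of a high power of $p$ restricted to $\partial A$ — an $L^m \to L^\infty$ phenomenon as $m\to\infty$ — which is why the combined limit $m\to\infty$ is essential and why the rate conditions couple $m$, $\sigma$, and $n$.

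For the variance / concentration step, I would bound $\mathrm{Var}(\onev_A^T\Lm^m\onev_A)$ by expanding the square into pairs of chains and counting shared indices; overlapping chains contribute smaller-order terms in $n$, and the bound $(n\sigma^{md})/(m^2 C^m) \to \infty$ with $C = 2/(2\pi)^{d/2}$ is exactly what is needed to make the standard deviation vanish relative to the mean (the $C^m$ tracks the growth of the kernel normalization $1/(2\pi\sigma^2)^{d/2}$ raised to powers along a chain, and the $m^2$ accounts for the number of terms in the $\Lm^m$ expansion). Chebyshev's inequality then gives convergence in probability for the numerator, and Slutsky-type arguments combine numerator, denominator, and the $m$-th root. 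The main obstacle, as in the separable case, is the bookkeeping of the $\Lm^m$ expansion: controlling the combinatorial explosion of chain-terms uniformly in $m$ while showing that all but the "diagonal-dominant'' terms are lower order, and making the approximate-identity collapse rigorous with explicit $O(\sigma^2)$ remainders that survive being multiplied by $2^m$ and $m$. Relative to Theorem~\ref{thm:conv_sep}, the simplification here is that $\onev_A$ has no continuum counterpart so there is no boundary-layer integral over a hypersurface $\partial S$ of measure zero to handle — instead $\partial A$ has positive volume, which removes the need for the $\sigma^{1/m}$ rescaling and the delicate surface-measure limit, and correspondingly the rate conditions are milder ($m2^m\sigma^2\to 0$ rather than $m2^m\sigma\to0$, and $n\sigma^{md}$ rather than $n\sigma^{md+1}$).
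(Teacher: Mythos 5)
Your proposal follows essentially the same route as the paper's proof: a variance--bias decomposition of $\omega_m(\onev_A)^m$, the law of large numbers for the denominator $\tfrac{1}{n}\onev_A^T\onev_A$, a term-by-term expansion of $(\Dm-\Wm)^m$ collapsed via the approximate-identity property $\int K_{\sigma^2}(\xv,\yv)g(\yv)\,d\yv = g(\xv)+O(\sigma^2)$ so that the bias converges to $\int \alpha_A\alpha_{A^c}p_A(\xv)p_{A^c}(\xv)p^{m-1}(\xv)\,d\xv$ (supported exactly on $\partial A$), and finally the $L^m\to L^\infty$ argument under the $m$-th root; your reading of why the rate conditions are milder than in the separable case (no measure-zero hypersurface, hence no $\sigma^{1/m}$ rescaling) also matches the paper. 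The only methodological difference is the concentration step, where you propose a second-moment/Chebyshev bound over pairs of chains while the paper rewrites the V-statistic as a U-statistic and applies Bernstein's inequality with the sup-norm bound $\|g\|_\infty\le C^m/\sigma^{md}$ --- both suffice for convergence in probability under rate condition~(1).
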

The dependence of the results on the rate conditions will be explained later in the proofs section. An example of parameter choices for scaling laws to hold simultaneously is illustrated in the following remark:
\begin{remark}
Equations~\eqref{eq:bwresult_sep} and~\eqref{eq:bwresult_nonsep} hold if for each value of $n$, we choose $m$ and $\sigma$ as follows:
\begin{align}
m &= [m_0 \; (\log{n})^y], \\
\sigma &= \sigma_0 \; n^{-x/md },
\end{align}
for constants $m_0, \sigma_0 > 0$, $0 < y < 1/2$ and $0 < x < 1$. $[\;.\;]$ indicates taking the nearest integer value.
\label{cor:bw_ex} 
\end{remark}

Theorems~\ref{thm:conv_sep} and~\ref{thm:conv_nonsep} give an explicit connection between bandwidth estimates of class indicator signals and class boundaries in the dataset. This interpretation forms the basis of justifying the choice of bandwidth as a smoothness constraint in graph-based learning algorithms.
Theorem~\ref{thm:conv_sep} suggests that for the separable model, if the boundary $\partial S$ passes through regions of low probability density, then the bandwidth of the corresponding class indicator vector $\omega(\onev_S)$ is low. A similar conclusion is suggested for the nonseparable model from Theorem~\ref{thm:conv_nonsep}, i.e., if the density of data points in the overlap region $\partial A$ is low, then the bandwidth $\omega(\onev_A)$ is low. In other words, low density of data in the boundary regions leads to smooth indicator functions.

From our results, we also get an intuition behind the smoothness constraint imposed in the bandlimited reconstruction approach~\eqref{eq:bl_interp} for semi-supervised learning. Basically, enforcing smoothness on classes in terms of indicator bandwidth ensures that the algorithm chooses a boundary passing through regions of low data density in the separable case. Similarly, in the nonseparable case, it ensures that variations in labels occur in regions of low density. Further, the bandwidth constraint $\theta$ in~\eqref{eq:bl_interp} effectively imposes a constraint on the complexity of the hypothesis space -- a larger value increases the size of the hypothesis space and opens up choices consisting of more complex boundaries.

%

%
Note that Theorems~\ref{thm:conv_sep} and~\ref{thm:conv_nonsep} can be improved and their assumptions generalized in several ways:
\begin{itemize}[leftmargin=10pt]
\item The convergence results can be generalized to graphs with edge weights computed using any non-increasing kernel $\eta_\sigma(\|\zv\|) = \frac{1}{\sigma^d} \eta(\|\zv\|)$, where $\sigma$ is a scaling parameter that controls the kernel width and goes to zero as $n\rightarrow \infty$. The limits of $\omega_m(\onev_S)$ and $\omega_m(\onev_A)$ stay the same as in~\eqref{eq:bwresult_sep} and~\eqref{eq:bwresult_nonsep}, up to a constant factor.
\item The domain of the data density $p(\xv)$ can be generalized to open, bounded and connected sets $D \subset \mathbb{R}^d$ with Lipschitz boundary similar to the work of~\cite{trillos_arma_16,trillos_jmlr_16,trillos_arxiv_16,slepcev_arxiv_17}, or a low dimensional compact manifold embedded in $\mathbb{R}^d$ as in~\cite{narayanan_06,hein_colt_06}.
\item Convergence of the bandwidth estimates $\omega_m(\onev_S)$ and $\omega_m(\onev_A)$ does not imply convergence of the actual bandwidths $\omega(\onev_S)$ and $\omega(\onev_A)$, respectively, to the same continuum limiting values. This is because the scaling of $m$ is tied to $n$ and $\sigma$ in our rate conditions, whereas ideally, one should take the limit $m \rightarrow \infty$ first, and independently of $n$ and $\sigma$ while analyzing the estimates. In this case, the scaling factor $\frac{1}{\sigma^{1/m}}$ in the left hand side of~\eqref{eq:bwresult_sep} also disappears. The analysis for this interchange of limits is challenging and we do not know how to approach this problem at the moment, so we leave it for future work. However, based on experiments in Section~\ref{sec:exp}, where we use actual bandwidths instead of their estimates to validate convergence, we conjecture that the same results hold for both, i.e.,
\begin{conjecture}
\label{conjecture:bw}
As $n \rightarrow \infty$ and $\sigma \rightarrow 0$ at appropriate rates, $\omega(\onev_S) \rightarrow \sup_{\sv \in \partial S} p(\sv)
$ and $\omega(\onev_A) \rightarrow \sup_{\xv \in \partial A}$.
\end{conjecture}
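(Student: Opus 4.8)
The plan is to read the conjecture as a statement about \emph{iterated} limits and to reduce it to justifying an interchange of limits with Theorems~\ref{thm:conv_sep} and~\ref{thm:conv_nonsep}. Writing $\omega_m^{(n,\sigma)}:=\omega_m(\onev_S)$ to expose the dependence on the random graph $G_{n,\sigma}$, recall from~\eqref{eq:bw} that for each fixed graph the nondecreasing sequence $\{\omega_m^{(n,\sigma)}\}_m$ converges up to the true bandwidth $\omega(\onev_S)$. The conjecture is therefore the assertion that $\lim_{n\to\infty,\,\sigma\to0}\lim_{m\to\infty}\omega_m^{(n,\sigma)}=\sup_{\sv\in\partial S}p(\sv)$, whereas Theorem~\ref{thm:conv_sep} controls only the \emph{diagonal} limit in which $m=m(n,\sigma)$ is coupled to $(n,\sigma)$. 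The natural device is the Moore--Osgood theorem: if the inner limit $\omega_m^{(n,\sigma)}\to\omega(\onev_S)$ were uniform over the family indexed by $(n,\sigma)$, the two iterated limits would agree and coincide with the diagonal limit. I would thus split the work into a continuum reduction, an evaluation of the iterated limit, and the uniformity estimate that legitimizes the swap.

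For the continuum reduction I would fix $m$ and $\sigma$ and send $n\to\infty$. The normalized Laplacian $\Lm$ converges to the bounded, self-adjoint, positive operator $(\mathcal{L}_\sigma f)(\xv)=d_\sigma(\xv)f(\xv)-\int K_{\sigma^2}(\xv,\yv)f(\yv)p(\yv)\,d\yv$ on $L^2(\mathbb{R}^d,p\,d\xv)$, where $d_\sigma:=K_{\sigma^2}*p$ is the limiting degree function and $\mathcal{L}_\sigma$ is the same operator whose moments underlie the proof of Theorem~\ref{thm:conv_sep}. Hence for each fixed $m$ the Rayleigh quotient converges in probability to the spectral moment $\omega_m^{(\sigma)}(1_S):=(\langle 1_S,\mathcal{L}_\sigma^m 1_S\rangle_p/\|1_S\|_p^2)^{1/m}$, and as $m\to\infty$ these moments increase to $\omega^{(\sigma)}(1_S)$, the top of the support of the spectral measure of $1_S$ under $\mathcal{L}_\sigma$. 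Note that the finite-$m$ prefactor $\sigma^{1/m}$ of~\eqref{eq:bwresult_sep} tends to $1$ in this order of limits, which is exactly why the conjecture carries no such scaling.

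The evaluation of $\lim_{\sigma\to0}\omega^{(\sigma)}(1_S)$ rests on two exact facts. First, the constant function lies in the kernel of $\mathcal{L}_\sigma$, since $(\mathcal{L}_\sigma 1)(\xv)=d_\sigma(\xv)-\int K_{\sigma^2}(\xv,\yv)p(\yv)\,d\yv=0$; consequently every eigenfunction with nonzero eigenvalue is orthogonal to constants, so a high-frequency mode supported inside $S$ (or inside $S^c$) is orthogonal to $1_S$. Second, $(\mathcal{L}_\sigma 1_S)(\xv)=\int_{S^c}K_{\sigma^2}(\xv,\yv)p(\yv)\,d\yv$ vanishes once $\mathrm{dist}(\xv,\partial S)\gg\sigma$, so each iterate $\mathcal{L}_\sigma^k 1_S$ with $k\ge1$ is concentrated in an $O(k\sigma)$ boundary layer about $\partial S$ (the growth of this layer matching the rate condition $m2^m\sigma\to0$). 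Together these say that the high-frequency content of $1_S$ is carried entirely by boundary-straddling modes, whose eigenvalues are governed by $d_\sigma$ restricted to $\partial S$. I would turn this into a two-sided bound: an upper bound $\omega^{(\sigma)}(1_S)\lesssim\sup_{\mathrm{dist}(\xv,\partial S)\le c\sigma}d_\sigma(\xv)$ from the support estimate, and a matching lower bound via an approximate eigenfunction, namely a localized boundary bump at the point of $\partial S$ attaining the supremum, whose Rayleigh quotient approaches $\sup_{\sv\in\partial S}p(\sv)$ as $\sigma\to0$. The nonseparable case~\eqref{eq:bwresult_nonsep} is treated by the same mechanism with $\mathcal{L}_\sigma$ built from the mixture $p=\alpha_A p_A+\alpha_{A^c}p_{A^c}$ and $1_S$ replaced by the soft membership $\pi_A=\alpha_A p_A/p$, whose continuum residual is supported on the overlap region $\partial A$, yielding $\sup_{\xv\in\partial A}p(\xv)$.

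The main obstacle, and the reason this is left as a conjecture, is the uniformity step and the exponential tails it must absorb. At finite $n$ one has $\omega(\onev_S)=\lambda_n$ whenever $\uv_n^T\onev_S\neq0$, and $\lambda_n\to\sup_{\mathbb{R}^d}p$ (the \emph{global} density supremum) because $\tfrac1n\Dm_{ii}\to p(\Xm_i)$ and $\lambda_n\in[\tfrac1n\max_i\Dm_{ii},\tfrac2n\max_i\Dm_{ii}]$; likewise $\omega^{(\sigma)}(1_S)$ equals the global supremum once one accounts for the exponentially small leakage of interior high-frequency modes across $\partial S$. The identity $\uv_n^T\onev_S=\sum_{i:\Xm_i\in S}(\uv_n)_i=-\sum_{i:\Xm_i\in S^c}(\uv_n)_i$ (from $\uv_n\perp\uv_1\propto\onev$) shows that this overlap is only as large as the eigenvector's mass on the ``wrong'' side of the boundary, hence exponentially small but not identically zero. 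The crux is therefore quantitative: one must prove Agmon-type exponential localization for the high-frequency eigenvectors of $\Lm$ and show that the resulting interior leakage contributes negligibly to the spectral measure of $\onev_S$ in the regime of interest. What makes this genuinely delicate, and is precisely where the interchange ceases to be automatic, is that resolving the very top of the support requires $m\gtrsim\log(1/\varepsilon_{\mathrm{leak}})$, which grows with $n$ faster than the admissible rate $m\sim(\log n)^{y}$ of Remark~\ref{cor:bw_ex}; coupling the localization length to $m$ and $\sigma$ so as to reconcile these two scales is the step I expect to be hardest and is what would close the gap between Theorems~\ref{thm:conv_sep}--\ref{thm:conv_nonsep} and the conjecture.
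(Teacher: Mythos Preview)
The paper does not prove this statement; it is explicitly stated as a conjecture, supported only by numerical experiments in Section~\ref{sec:exp}. The authors write that ``the analysis for this interchange of limits is challenging and we do not know how to approach this problem at the moment, so we leave it for future work.'' There is therefore no proof in the paper against which to compare your proposal.

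That said, your analysis is well aligned with the paper's own diagnosis: you correctly isolate the limit interchange (taking $m\to\infty$ before, rather than coupled to, $n\to\infty$ and $\sigma\to0$) as the missing ingredient, and you correctly note that the prefactor $\sigma^{1/m}$ disappears in that order of limits. Your continuum-operator reduction and boundary-layer localization heuristics are reasonable and go somewhat beyond what the paper offers.

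Your final paragraph, however, surfaces a genuine concern that the paper does not address and that may in fact threaten the conjecture as literally stated. With the exact definition $\omega(\fv)=\max\{\lambda_i:\uv_i^T\fv\neq0\}$, one generically has $\omega(\onev_S)=\lambda_n$ whenever the top eigenvector has any nonzero overlap with $\onev_S$, and as you observe this overlap is typically only exponentially small rather than identically zero. Since $\lambda_n$ tracks $\sup_{\mathbb{R}^d}p$ rather than $\sup_{\partial S}p$, the exact bandwidth need not converge to the claimed boundary quantity. This is consistent with the paper's experimental methodology, which does \emph{not} use the exact bandwidth but rather a thresholded version (the smallest $\nu$ above which at most $0.01\%$ of the energy lies). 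So your proposal is not a proof, and your own closing analysis suggests the conjecture may require reformulation in terms of an effective or thresholded bandwidth before it can be made rigorous.
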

\item Note that Theorems~\ref{thm:conv_sep} and~\ref{thm:conv_nonsep} show pointwise convergence for fixed underlying data models, i.e., convergence is proven for a given indicator signal $\onev_S$ specified by $\{ p(\xv), \partial S \}$, and $\onev_A$ specified by $\{p_A(\xv), p_{A^c}(\xv)\}$. This is not sufficient when we want to interpret the behavior of a bandwidth-based learning algorithm, since we cannot guarantee that the solution returned by the algorithm matches the solution of its continuum limit version. We need stronger convergence results for this case, such as those recently covered in~\cite{trillos_arma_16,trillos_jmlr_16,slepcev_arxiv_17}.
\end{itemize}

Finally, as a special case of our analysis, we also get a convergence result for the graph cut in the nonseparable model analogous to the results of~\cite{maier_13} for the separable model. Note that the cut in this case equals the sum of weights of edges connecting points that belong to class $A$ to points that do not belong to class $A$, i.e.,
\begin{equation}
\textit{Cut}(A,A^c)  := \sum_{\Xm_i \in A, \Xm_j \in A^c} w_{ij} = n\onev_A^t \Lm \onev_A.
\end{equation}
With this definition, we have the following result:
\begin{theorem}
\label{thm:cut_nonsep}
If $n \rightarrow \infty$, $\sigma \rightarrow 0$ such that $n\sigma^{d} \rightarrow \infty$, then
\begin{equation}
\frac{1}{n} \text{Cut}(A,A^c) \xrightarrow[]{\;\; \textit{p.} \;} \int \alpha_A \alpha_{A^c} p_A(\xv) p_{A^c}(\xv) d\xv.
\end{equation}
\end{theorem}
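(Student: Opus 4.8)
\emph{Proof plan.} The plan is a variance--bias argument for a second-order sum. By the identity recorded above, $\mathrm{Cut}(A,A^c)=n\,\onev_A^{T}\Lm\onev_A=\sum_{i<j}\big|(\onev_A)_i-(\onev_A)_j\big|\,w_{ij}$, so, rescaled so that its mean stays bounded, the quantity to study is the U-statistic-type sum
\begin{equation*}
\frac{1}{n^{2}}\,\mathrm{Cut}(A,A^c)=\frac{1}{n}\,\onev_A^{T}\Lm\onev_A=\frac{1}{n^{2}}\sum_{i<j} h_\sigma(\xi_i,\xi_j),\qquad h_\sigma(\xi_i,\xi_j):=\big|(\onev_A)_i-(\onev_A)_j\big|\,K_{\sigma^2}(\Xm_i,\Xm_j),
\end{equation*}
where $\xi_i:=(C_i,\Xm_i)$ is the (class, feature) pair of the $i$-th draw, the $\xi_i$ being i.i.d.\ with $C_i=A$ with probability $\alpha_A$, $\Xm_i\mid\{C_i=A\}\sim p_A$ and $\Xm_i\mid\{C_i\ne A\}\sim p_{A^c}$. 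This is the $m=1$ instance of the term-by-term expansion of $\omega_m(\onev_A)$ developed for Theorem~\ref{thm:conv_nonsep}, stripped of the $m\to\infty$ bookkeeping and of the normalization by $\onev_A^{T}\onev_A$; so I would (i) compute the mean and send $\sigma\to0$, (ii) bound the variance and invoke $n\sigma^{d}\to\infty$, and (iii) conclude with Chebyshev.

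\emph{Step (i): the mean.} Since $h_\sigma$ is supported on pairs with exactly one endpoint of class $A$ --- two symmetric cases, each of probability $\alpha_A\alpha_{A^c}$ --- conditioning on the class labels and using independence of distinct samples gives
\begin{equation*}
\mathbb{E}\,h_\sigma(\xi_1,\xi_2)=2\,\alpha_A\alpha_{A^c}\,I_\sigma,\qquad I_\sigma:=\iint K_{\sigma^2}(\xv,\yv)\,p_A(\xv)\,p_{A^c}(\yv)\,d\xv\,d\yv ,
\end{equation*}
hence $\mathbb{E}\big[n^{-2}\mathrm{Cut}(A,A^c)\big]=(1+o(1))\,\alpha_A\alpha_{A^c}\,I_\sigma$. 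For $\lim_{\sigma\to0}I_\sigma$ I would use that $K_{\sigma^2}(\xv,\cdot)$ is a Gaussian mollifier of unit mass concentrating at $\xv$: the convolution $K_{\sigma^2}*p_{A^c}$ tends to $p_{A^c}$ pointwise and is bounded by $\|p_{A^c}\|_\infty$, so dominated convergence (dominating function $\|p_{A^c}\|_\infty\,p_A\in L^1$) gives $I_\sigma=\int p_A\,\big(K_{\sigma^2}*p_{A^c}\big)\to\int p_A\,p_{A^c}$; twice differentiability of the densities would even furnish the $O(\sigma^2)$ rate via the vanishing first moment of the kernel, but only $\sigma\to0$ is used. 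Thus $\mathbb{E}\big[n^{-2}\mathrm{Cut}(A,A^c)\big]\to\int\alpha_A\alpha_{A^c}\,p_A(\xv)p_{A^c}(\xv)\,d\xv$, the stated limit. As a sanity check, the integrand is supported exactly on the overlap region $\partial A=\{p_A p_{A^c}>0\}$ --- the nonseparable counterpart of the separable-case limit $\int_{\partial S}p^2\,d\sv$ of~\cite{maier_13}, the difference being that $\partial A$ is full-dimensional, which is why no factor of $\sigma$ enters the normalization.

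\emph{Step (ii)--(iii): variance and conclusion.} I would bound the variance by the usual U-statistic decomposition: pair-terms with disjoint indices are uncorrelated; those sharing one index give an $O(1/n)$ contribution carrying the projection covariance $\mathrm{Cov}\big(h_\sigma(\xi_1,\xi_2),h_\sigma(\xi_1,\xi_3)\big)$; the diagonal terms give an $O(1/n^2)$ contribution carrying $\mathrm{Var}\big(h_\sigma(\xi_1,\xi_2)\big)$. The projection covariance is $O(1)$ because $\mathbb{E}[h_\sigma(\xi_1,\xi_2)\mid\xi_1]$ equals $\alpha_{A^c}(K_{\sigma^2}*p_{A^c})(\Xm_1)$ or $\alpha_A(K_{\sigma^2}*p_A)(\Xm_1)$ according to $C_1$, and these convolutions are bounded by the relevant density suprema. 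The diagonal term carries the only blow-up: $\mathbb{E}\,h_\sigma(\xi_1,\xi_2)^2\le\mathbb{E}\,K_{\sigma^2}(\Xm_1,\Xm_2)^2$, and the substitution $\uv=(\xv-\yv)/\sigma$ with $p$ bounded gives $\mathbb{E}\,K_{\sigma^2}(\Xm_1,\Xm_2)^2=\Theta(\sigma^{-d})$. Hence $\mathrm{Var}\big[n^{-2}\mathrm{Cut}(A,A^c)\big]=O(1/n)+O\big(1/(n^{2}\sigma^{d})\big)\to0$ precisely because $\sigma\to0$ and $n\sigma^{d}\to\infty$ (indeed $n^{2}\sigma^{d}\to\infty$ would already suffice), and Chebyshev's inequality together with Step (i) then yields $n^{-2}\mathrm{Cut}(A,A^c)\xrightarrow{\,\textit{p.}\,}\int\alpha_A\alpha_{A^c}p_A(\xv)p_{A^c}(\xv)\,d\xv$. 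The one genuinely delicate point is this $\Theta(\sigma^{-d})$ growth of $\mathbb{E}\,K_{\sigma^2}(\Xm_1,\Xm_2)^2$: it is exactly the obstruction that forces the rate condition $n\sigma^{d}\to\infty$, and it is the same near-diagonal blow-up of powers of $\Lm$ that dictates the rate conditions in Theorems~\ref{thm:conv_sep} and~\ref{thm:conv_nonsep}; the rest (which pairs straddle $A$, the conditioning on labels, the mollifier limit) is routine.
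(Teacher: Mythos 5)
Your proposal is correct, and it reaches the limit by a genuinely more elementary route than the paper. The paper obtains Theorem~\ref{thm:cut_nonsep} as the $m=1$ specialization of its general machinery: the bias limit is the $m=1$ case of Lemma~\ref{lem:bias_nonseparable} (your Step (i) is essentially the paper's computation of $h_0+h_1$ via Lemma~\ref{lem:kernelone}, i.e.\ the same mollifier argument), while stochastic convergence comes from the Bernstein-type concentration inequality for U-statistics (Lemma~3), whose rate condition $(n\sigma^{md})/(mC^m)\to\infty$ reduces to $n\sigma^{d}\to\infty$ at $m=1$. You instead run a self-contained second-moment argument: the Hoeffding decomposition isolates the degenerate diagonal part, whose $\Theta(\sigma^{-d})$ blow-up of $\E{K_{\sigma^2}(\Xm_1,\Xm_2)^2}$ you correctly flag as the sole obstruction, and Chebyshev closes the proof. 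What your route buys is simplicity and a visibly weaker sufficient condition ($n^{2}\sigma^{d}\to\infty$) for convergence in probability; what the paper's route buys is exponential concentration, which is what the general-$m$ Theorems~\ref{thm:conv_sep} and~\ref{thm:conv_nonsep} actually need, so the authors simply reuse it. One remark on normalization: you prove convergence of $n^{-2}\,\text{Cut}(A,A^c)=\frac{1}{n}\onev_A^T\Lm\onev_A$, which is the quantity the paper's own proof controls (the concluding display after Lemma~\ref{lem:bias_nonseparable}); given the definition $\text{Cut}(A,A^c)=n\,\onev_A^T\Lm\onev_A$, the factor $\frac{1}{n}$ in the theorem's displayed equation is off by one power of $n$, and your reading is the consistent one.
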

The result above indicates that if the overlap between the conditional distributions of a particular class and its compliment is low, then the value of the graph cut is lower. This justifies the use of spectral clustering in the context of nonseparable models.

\subsection{Label complexity}
\begin{figure*}
\begin{center}
\begin{subfigure}{0.45\textwidth}
\centering
\includegraphics[width=0.85\linewidth]{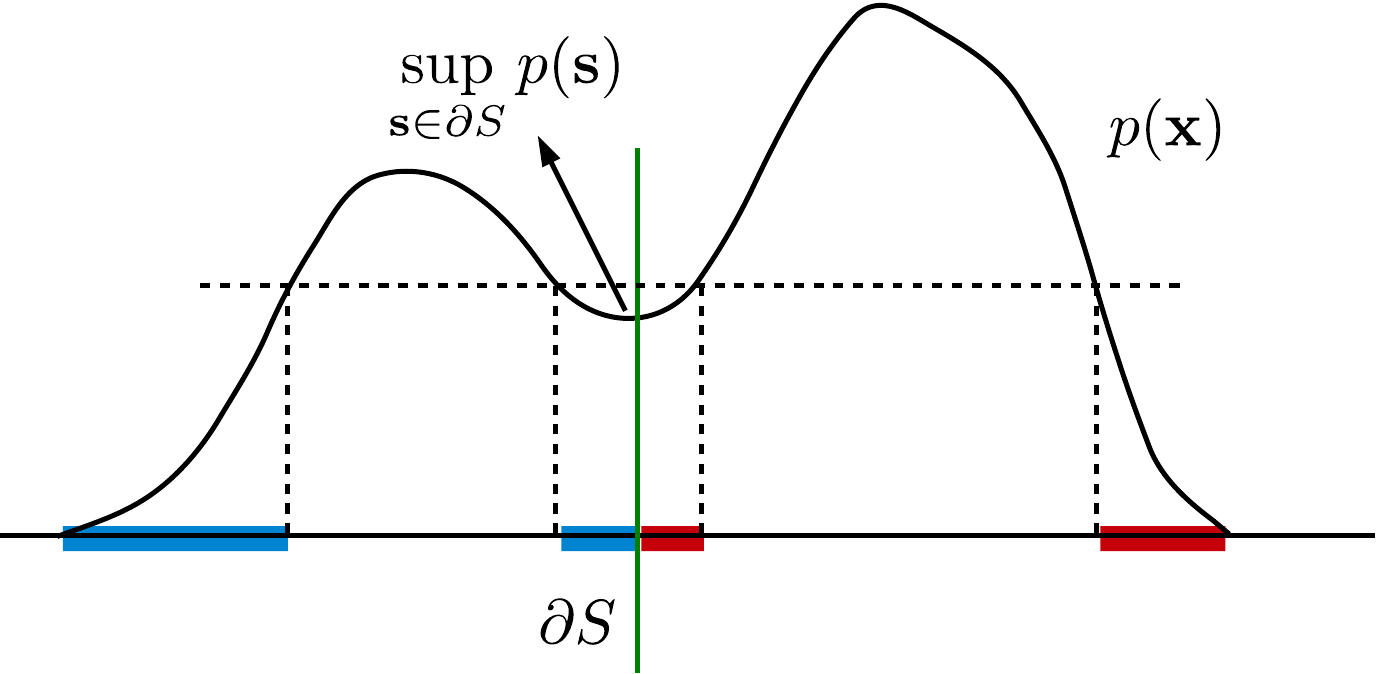}
\caption{}
\end{subfigure}
\qquad
\begin{subfigure}{0.45\textwidth}
\centering
\includegraphics[width=0.85\linewidth]{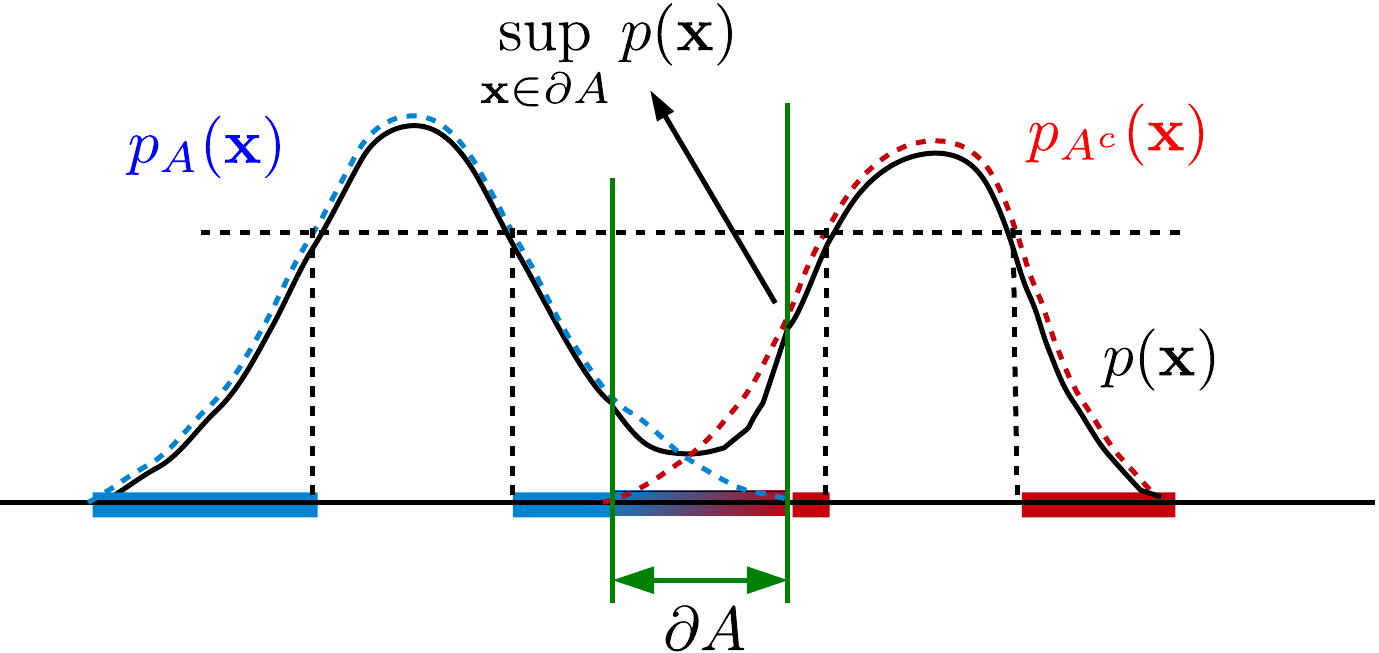}
\caption{}
\end{subfigure}
\end{center}
\caption{1-D example illustrating ideal label complexities for (a) the separable model, and (b) the nonseparable model. Note that for an unlabeled dataset, labeling all points for the sublevel set defined by the supremum density over the boundary resolves all ambiguity and results in perfect prediction of the unknown labels.}
\label{fig:lab_comp}
\end{figure*}
In the context of our work, we define the label complexity of learning class indicators over the graph using a sampling theoretic approach, as the fraction of labeled nodes required for perfectly predicting the labels of the unlabeled nodes. Formally, for a given class indicator $\onev_C \in \{ 0,1,\}^n$ over the graph $G_n$, we define it as the fraction of points that need to be labeled so that a sampling theory-based reconstruction algorithm (such as bandlimited reconstruction of~\eqref{eq:bl_interp}) outputs a solution $\fv^*$ with zero reconstruction error: $\| \fv^* - \onev_C \| = 0$. Note that perfect reconstruction is a strong requirement that can be relaxed by allowing an error tolerance $\epsilon$, in which case the amount of labeling required is lower. However, this requirement simplifies our analysis since we can directly use results from sampling theory to evaluate this quantity. Specifically, we can simply use Lemma~\ref{lemma:num_labels} to calculate the label complexity for $\onev_C$ over the graph as $\frac{1}{n} \Nc_\Lm(\omega(\onev_C))$. In our context, label complexity is essentially an indicator of how ``good" the semi-supervised problem is, i.e., how much help we get from geometry while predicting the unknown labels. A low label complexity is indicative of a favorable situation, where one is able to learn from only a few known labels by exploiting data geometry.

Note that our definition of label complexity is concerned with reconstructing class indicators only on the nodes of the graph. This pertains to the transductive learning philosophy, a common setting considered in most graph-based semi-supervised learning literature, where the goal is to simply predict the labels of the unlabeled points and not learn a general labeling rule/classifier. Further, our definition is different and simpler than the more general $(\epsilon,\delta)$ definition of sample/label complexity in Probably Approximately Correct (PAC) learning~\cite{hanneke_jmlr_15}, i.e., it is concerned with reconstructing only a given class indicator, with zero error, using a sampling theory-based learning approach, over a graph constructed from a given data model.

\subsubsection*{Ideal label complexities}
A simple way to compute the label complexity, for the data models we consider, is to find the fraction of points belonging to a region that fully encompasses the boundary. To formalize this, 
let us define the following two sublevel sets in $\mathbb{R}^d$:
\begin{align}
\Xc_S &:= \{\xv : p(\xv) \leq \sup_{\sv \in \partial S} p(\sv)\},\\
\Xc_A &:= \{\xv : p(\xv) \leq \sup_{\xv \in \partial A} p(\xv)\}.
\end{align}
Note that by definition, $\partial S$ is fully contained in $\Xc_S$ and $\partial A$ is fully contained in $\Xc_A$ (see Figure~\ref{fig:lab_comp} for an example in $\mathbb{R}^1$). Therefore, to perfectly reconstruct the indicator signals $\onev_S$ and $\onev_A$ for any $n$, it is sufficient to know the labels of all points in $\Xc_S$ and $\Xc_A$, respectively, as this strategy removes all ambiguity in labeling the two classes; a good learning algorithm can simply propagate the known labels on to the unlabeled points. Based on this and using the law of large numbers, we arrive at the following conclusion:
\begin{remark}
\label{remark:label_complexity}
The ideal label complexities of learning $\onev_S$ and $\onev_A$ in the asymptotic limit are given by $P(\Xc_S)$ and $P(\Xc_A)$, respectively, where $P(\Omega) = \int_\Omega p(\xv)d\xv$.
\end{remark}

\subsubsection*{Label complexity of $\onev_S$ and $\onev_A$ using a sampling theory-based approach}
Note that from Lemma~\ref{lemma:num_labels}, we know that the label complexities  for $\onev_S$ and $\onev_A$ are given as $\frac{1}{n} \Nc_\Lm(\omega(\onev_S))$ and $\frac{1}{n} \Nc_\Lm(\omega(\onev_A))$, respectively.
Since our bandwidth convergence results relate the bandwidth of indicators for the two data models with data geometry, we only need to asymptotically relate the fraction of eigenvalues of $\Lm$ below any constant. This is achieved by first proving the following:
\begin{theorem}
\label{thm:esd}
Let $\Nc_\Lm(t)$ be the number of eigenvalues of $\Lm$ below a constant $t$. Then, as $n \rightarrow \infty$ and $\sigma \rightarrow 0$, we have
\begin{equation}
\E{\frac{1}{n}\mathcal{N}_\Lm(t)} \longrightarrow P\left( \{\xv : p(\xv) \leq t\} \right).
\end{equation}
\label{eq:esd}
\end{theorem}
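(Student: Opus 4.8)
The plan is to compare the spectrum of $\Lm$ with that of the diagonal degree matrix $\Dm_n := \frac{1}{n}\Dm$ via resolvents. Fix $z\in\mathbb{C}$ with $\Im z \neq 0$, and set $s_n(z) := \frac{1}{n}\trace[(\Lm - z\Id)^{-1}]$ and $\hat s_n(z) := \frac{1}{n}\trace[(\Dm_n - z\Id)^{-1}] = \frac{1}{n}\sum_{i=1}^{n}(\frac{1}{n}\Dm_{ii} - z)^{-1}$. By the Stieltjes continuity theorem, the expected empirical spectral distribution $\bar\mu_n$ of $\Lm$ (a probability measure, with spectrum confined to a bounded interval with high probability by the Gershgorin bound $0\le\lambda_i\le\frac{2}{n}\Dm_{ii}$) converges weakly to a probability measure $\nu$ as soon as $\E{s_n(z)}\to\int(\lambda-z)^{-1}d\nu(\lambda)$ for all such $z$; I will show this limit equals $g(z) := \int_{\mathbb{R}^d}\frac{p(\xv)}{p(\xv)-z}d\xv = \E{(p(\Xm)-z)^{-1}}$ with $\Xm\sim p$, which is precisely the Stieltjes transform of the law of $p(\Xm)$, i.e.\ of the measure with CDF $t\mapsto P(\{\xv : p(\xv)\le t\})$. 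Weak convergence then gives $\E{\frac{1}{n}\mathcal{N}_\Lm(t)}=\bar\mu_n((-\infty,t])\to P(\{\xv:p(\xv)\le t\})$ at every continuity point $t$ of that CDF.

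First I would identify the limit of $\E{\hat s_n(z)}$. For each $i$, $\frac{1}{n}\Dm_{ii}=\frac{1}{n}\sum_{j\neq i}K_{\sigma^2}(\Xm_i,\Xm_j)$, so conditionally on $\Xm_i$ its mean is $\frac{n-1}{n}\int K_{\sigma^2}(\Xm_i,\yv)p(\yv)d\yv$, which tends to $p(\Xm_i)$ because $K_{\sigma^2}$ is an approximate identity and $p$ is Lipschitz (indeed $|\int K_{\sigma^2}(\xv,\yv)(p(\yv)-p(\xv))d\yv|\le L\int K_{\sigma^2}(\xv,\yv)\|\yv-\xv\|d\yv = O(\sigma)$), while its conditional variance is at most $\frac{1}{n}\|K_{\sigma^2}\|_\infty\|p\|_\infty = O(1/(n\sigma^d))\to 0$ provided $n\sigma^d\to\infty$. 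Hence $\frac{1}{n}\Dm_{ii}\to p(\Xm_i)$ in probability; since $w\mapsto(w-z)^{-1}$ is continuous and bounded by $|\Im z|^{-1}$ on $\mathbb{R}$ and the $\Xm_i$ are exchangeable, $\E{\hat s_n(z)}=\E{(\frac{1}{n}\Dm_{11}-z)^{-1}}\to\E{(p(\Xm_1)-z)^{-1}}=g(z)$.

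The key step is $\E{s_n(z)-\hat s_n(z)}\to 0$. With $\Wm_n:=\frac{1}{n}\Wm$ so that $\Lm=\Dm_n-\Wm_n$, the resolvent identity gives $(\Lm-z\Id)^{-1}-(\Dm_n-z\Id)^{-1}=(\Lm-z\Id)^{-1}\Wm_n(\Dm_n-z\Id)^{-1}$, so by the trace Cauchy--Schwarz inequality and the fact that $\Lm$ is real symmetric, $|\trace[(\Lm-z\Id)^{-1}-(\Dm_n-z\Id)^{-1}]| \le \|\Wm_n\|_F\,\|(\Lm-z\Id)^{-1}\|_{\mathrm{op}}\,\|(\Dm_n-z\Id)^{-1}\|_F \le \|\Wm_n\|_F\cdot\frac{1}{|\Im z|}\cdot\frac{\sqrt{n}}{|\Im z|}$. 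It remains to bound $\E{\|\Wm_n\|_F^2}=\frac{1}{n^2}\sum_{i\neq j}\E{K_{\sigma^2}(\Xm_i,\Xm_j)^2}\le\E{K_{\sigma^2}(\Xm_1,\Xm_2)^2}$; since $K_{\sigma^2}(\xv,\yv)^2=(2\pi\sigma^2)^{-d}e^{-\|\xv-\yv\|^2/\sigma^2}$, a Gaussian integral gives $\E{K_{\sigma^2}(\Xm_1,\Xm_2)^2}\le\|p\|_\infty\int(2\pi\sigma^2)^{-d}e^{-\|\zv\|^2/\sigma^2}d\zv=O(\sigma^{-d})$, hence by Jensen $\E{\|\Wm_n\|_F}=O(\sigma^{-d/2})$. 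Combining, $|\E{s_n(z)}-\E{\hat s_n(z)}|\le\frac{\E{\|\Wm_n\|_F}}{\sqrt{n}\,|\Im z|^2}=O(1/(|\Im z|^2\sqrt{n\sigma^d}))\to 0$ in the regime $n\sigma^d\to\infty$ (the operative regime here, implied by the rate conditions of Theorems~\ref{thm:conv_sep} and~\ref{thm:conv_nonsep}). With the previous step this yields $\E{s_n(z)}\to g(z)$, which is the claim.

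The main obstacle is this comparison step: $\Wm_n$ is \emph{not} small in operator norm --- its row sums converge to $p(\Xm_i)$, not to $0$ --- so one cannot treat it as a perturbation of $\Dm_n$ at the level of individual eigenvalues. What makes the Stieltjes approach go through is that $\Wm_n$ \emph{is} small in Hilbert--Schmidt norm relative to $\sqrt{n}$, precisely when $n\sigma^d\to\infty$: each node has $\sim n\sigma^d$ ``effective neighbours'', each contributing $O(1/(n\sigma^d))$ to the corresponding entry, which is what the estimate $\E{K_{\sigma^2}(\Xm_1,\Xm_2)^2}=O(\sigma^{-d})$ captures. A secondary technical point is the passage from weak convergence of $\bar\mu_n$ to pointwise convergence of $\E{\frac{1}{n}\mathcal{N}_\Lm(t)}$, which needs $t$ to be a continuity point of $t\mapsto P(\{\xv:p(\xv)\le t\})$, equivalently $\mathrm{Leb}(\{\xv:p(\xv)=t\})=0$; this holds for all but at most countably many $t$ and, given the $C^2$ regularity assumed on $p$, at every regular value of $p$. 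An alternative route is the method of moments (showing $\E{\frac{1}{n}\trace\Lm^k}\to\E{p(\Xm)^k}$ and invoking that the compactly supported limit law is moment-determined), but bounding the mixed trace terms containing powers of $\Wm_n$ is more delicate there because of the fluctuations of the degrees; the resolvent comparison sidesteps that.
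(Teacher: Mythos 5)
Your proof is correct, but it takes a genuinely different route from the paper's. The paper proves Theorem~\ref{thm:esd} by the method of moments: it expands $\E{\frac{1}{n}\Tr{\Lm^m}}$ through the $2^m$-term expansion of $(\Dm-\Wm)^m$, observes that every term containing at least one $\Wm$ factor closes a cycle and therefore carries only $m$ free summation indices instead of $m+1$ --- hence is suppressed by a factor of $1/n$ --- so that only the pure $\Dm^m$ term survives, with expectation $\int p^{m+1}(\xv)d\xv + O(m\sigma^2)$; weak convergence then follows from moment determinacy of the compactly supported law of $p(\Xm)$ together with the Portmanteau theorem. Your resolvent argument reaches the same intermediate conclusion --- that $\Lm$ is spectrally indistinguishable in the bulk from the diagonal matrix $\frac{1}{n}\Dm$, whose entries obey a law of large numbers toward $p(\Xm_i)$ --- but replaces all the repeated-index combinatorics by the single estimate $\|\frac{1}{n}\Wm\|_F/\sqrt{n}=O(1/\sqrt{n\sigma^d})$. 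This is cleaner, and it makes explicit the rate condition $n\sigma^d\to\infty$ that the paper leaves implicit in the phrase ``terms with repeated indices disappear'' (each repeated index there trades a factor of $n$ in the number of summands for a factor of $\|K_{\sigma^2}\|_\infty=O(\sigma^{-d})$ in their size, so the same condition is in fact needed). Your observation that the comparison cannot be made in operator norm, only in normalized Hilbert--Schmidt norm, is exactly the right point. Both arguments share the same caveat, which you flag and the paper does not: the conclusion $\E{\frac{1}{n}\Nc_\Lm(t)}\to P(\{\xv: p(\xv)\le t\})$ holds at continuity points of the limit CDF, i.e., for $t$ with $\mathrm{Leb}(\{\xv : p(\xv)=t\})=0$.
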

\begin{proof}
See Section~\ref{sec:proof_esd}.
\end{proof}
Note that Theorem \ref{thm:esd} can be strengthened by proving convergence of $\frac{1}{n}\Nc_\Lm(t)$ rather than its expected value. This requires further analysis, which we leave for future work. Plugging in $\omega(\onev_S)$ and $\omega(\onev_A)$ in place of $t$ in Theorem~\ref{thm:esd}, and using the convergence results from Theorems~\ref{thm:conv_sep} and~\ref{thm:conv_nonsep}, and Conjecture~\ref{conjecture:bw}, we speculate the following convergence for the label complexities of $\onev_S$ and $\onev_A$:
\begin{conjecture}
\label{conjecture:lc}
As $n \rightarrow \infty$, $\sigma \rightarrow 0$, we have
\begin{align}
\label{eq:sep_lc_limit}
\frac{1}{n} \Nc_\Lm\left(\omega(\onev_S)\right) \rightarrow P(\Xc_S), \\
\label{eq:nonsep_lc_limit}
\frac{1}{n} \Nc_\Lm\left(\omega(\onev_A)\right) \rightarrow P(\Xc_A).
\end{align}
\end{conjecture}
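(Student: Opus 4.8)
The plan is to combine the bandwidth convergence of Conjecture~\ref{conjecture:bw} with the eigenvalue-counting convergence of Theorem~\ref{thm:esd} through a monotonicity (squeeze) argument, closing the gap by invoking continuity of the limiting empirical spectral distribution at the limiting threshold. Throughout, write $F(t) := P(\{\xv : p(\xv) \le t\})$, and set $t_S := \sup_{\sv \in \partial S} p(\sv)$ and $t_A := \sup_{\xv \in \partial A} p(\xv)$, so that the targets are $P(\Xc_S) = F(t_S)$ and $P(\Xc_A) = F(t_A)$. I give the argument for $\onev_S$; the nonseparable case is identical with $t_A$ in place of $t_S$ and Theorem~\ref{thm:conv_nonsep} in place of Theorem~\ref{thm:conv_sep}.

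The core difficulty is that $\omega(\onev_S)$ is a \emph{random} threshold depending on the same graph as $\Nc_\Lm$, so Theorem~\ref{thm:esd} cannot be applied directly. I would first upgrade Theorem~\ref{thm:esd} from convergence in expectation to convergence in probability at every fixed threshold, i.e., $\frac{1}{n}\Nc_\Lm(t) \xrightarrow{p.} F(t)$; this is precisely the strengthening flagged after Theorem~\ref{thm:esd}. It should follow from a variance bound $\Var{\frac{1}{n}\Nc_\Lm(t)} \to 0$ combined with Chebyshev's inequality: since the entries of $\Lm$ are normalized sums of i.i.d.\ kernel contributions, a bounded-difference (McDiarmid) estimate on the trace of the relevant spectral projector, or a direct second-moment computation analogous to the variance--bias analysis used elsewhere in the paper, should give the required concentration.

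Given this, the squeeze is routine. Fix $\delta>0$ and choose $\epsilon>0$ small. Because $\Nc_\Lm(\cdot)$ is non-decreasing, on the event $E_n := \{|\omega(\onev_S) - t_S| \le \epsilon\}$ we have
\begin{equation*}
\frac{1}{n}\Nc_\Lm(t_S - \epsilon) \;\le\; \frac{1}{n}\Nc_\Lm(\omega(\onev_S)) \;\le\; \frac{1}{n}\Nc_\Lm(t_S + \epsilon).
\end{equation*}
By Conjecture~\ref{conjecture:bw}, $\omega(\onev_S)\xrightarrow{p.} t_S$, so $P(E_n)\to 1$. By the strengthened counting convergence, the two deterministic-threshold bounds converge in probability to $F(t_S - \epsilon)$ and $F(t_S + \epsilon)$. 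Assuming $F$ is continuous at $t_S$ --- equivalently, that the level set $\{p = t_S\}$ has zero $p$-measure --- these limits can be made within $\delta$ of $F(t_S)$ by taking $\epsilon$ small. Combining the high-probability sandwich, the convergence of the bounds, and continuity yields $\frac{1}{n}\Nc_\Lm(\omega(\onev_S)) \xrightarrow{p.} F(t_S) = P(\Xc_S)$, and identically $\frac{1}{n}\Nc_\Lm(\omega(\onev_A)) \xrightarrow{p.} P(\Xc_A)$.

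The main obstacle is the concentration step, since the eigenvalue counting function is an integer-valued, discontinuous functional of the random Laplacian whose fluctuations couple all $n$ eigenvalues; a clean variance bound likely requires either a Lipschitz regularization of the step function $t\mapsto \Nc_\Lm(t)$ or a concentration inequality for the empirical spectral distribution of the dependent-entry kernel matrix $\Lm$. A secondary technical point, easy to overlook, is the continuity of $F$ at the supremum value $t_S$: because $t_S$ is a supremum of $p$ over the boundary, the level set $\{p=t_S\}$ could in degenerate cases carry positive mass, so a mild non-degeneracy assumption on $p$ near this level (e.g., $\nabla p \neq 0$ on $\{p=t_S\}$, yielding a zero-measure level set via the coarea formula) is needed to guarantee the sandwich closes. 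Finally, the whole argument is conditional on Conjecture~\ref{conjecture:bw}, which is why the statement is itself phrased as a conjecture rather than a theorem.
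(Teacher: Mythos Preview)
Your proposal follows the same heuristic the paper uses --- combine Conjecture~\ref{conjecture:bw} with Theorem~\ref{thm:esd} --- but is considerably more careful than the paper's own treatment. The paper does not give a proof of Conjecture~\ref{conjecture:lc}; it simply writes ``Plugging in $\omega(\onev_S)$ and $\omega(\onev_A)$ in place of $t$ in Theorem~\ref{thm:esd}, and using the convergence results from Theorems~\ref{thm:conv_sep} and~\ref{thm:conv_nonsep}, and Conjecture~\ref{conjecture:bw}, we speculate the following convergence\dots'' and then supports the claim with the consistency check of Remark~\ref{remark:label_complexity} and the experiments of Figure~\ref{fig:exp3}. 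The paper explicitly flags the needed strengthening of Theorem~\ref{thm:esd} (from convergence of $\E{\frac{1}{n}\Nc_\Lm(t)}$ to convergence of $\frac{1}{n}\Nc_\Lm(t)$ itself) as future work.

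Your contribution beyond the paper is twofold. First, you correctly observe that one cannot simply ``plug in'' a random threshold $\omega(\onev_S)$ into a fixed-threshold limit statement, and you supply the missing monotonicity/squeeze argument that makes the substitution rigorous once the concentration upgrade is in hand. Second, you isolate a regularity condition the paper does not mention: continuity of $F(t) = P(\{p \le t\})$ at $t_S$, i.e., that the level set $\{p = t_S\}$ has zero $p$-measure. This is genuinely needed for the sandwich to close and is not automatic from the paper's standing smoothness assumptions on $p$. Your honest accounting of the two remaining obstacles --- concentration of the empirical spectral distribution and the level-set non-degeneracy --- matches exactly why the paper leaves this as a conjecture rather than a theorem.
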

The limiting values in~\eqref{eq:sep_lc_limit} and~\eqref{eq:nonsep_lc_limit} are the same as those predicted by Remark~\ref{remark:label_complexity}; this is encouraging as far as the validity of Conjecture~\ref{conjecture:lc} is concerned. Additionally, we see strong evidence in our experiments to support our claims; specifically, the average error of predicting the labels of the unlabeled nodes goes to zero as the fraction of labeled examples crosses the limit values of~\eqref{eq:sep_lc_limit} and~\eqref{eq:nonsep_lc_limit} (see Figure~\ref{fig:exp3}).

The limiting values in~\eqref{eq:sep_lc_limit} and~\eqref{eq:nonsep_lc_limit} essentially indicate how the low density separation assumption can benefit semi-supervised learning, since in this case, one can forgo the task of labeling a significant fraction of the points and still reconstruct the indicator by exploiting data geometry. A classic example of where this can be useful is the two-step learning process, where the first step uses semi-supervised learning in a transductive setting to create a large training set using a combination of unlabeled and labeled data, and the second step involves learning a classifier using supervised learning. If the low density separation is satisfied by the data, then semi-supervised learning using a sampling theory-based approach effectively reduces the sample complexity of the supervised learning step by a constant fraction, equal to the limiting values in~\eqref{eq:sep_lc_limit} and~\eqref{eq:nonsep_lc_limit}.

\section{Proofs}
\label{sec:proofs}

We now present the proofs\footnote{A partial sketch of the proof for the separable model is also provided in our parallel work~\cite{anis_15}; here we provide the complete proof.} of Theorems~\ref{thm:conv_sep} and~\ref{thm:conv_nonsep}. The main idea is to perform a variance-bias decomposition of the bandwidth estimate and then prove the convergence of each term independently. Specifically, for any indicator vector $\onev_R \in \{0,1 \}^n$, we consider the random variable:
\begin{align}
\left(\omega_m(\onev_R)\right)^{m} = \frac{\onev_R^T \Lm^m \onev_R}{\onev_R^T \onev_R} = \frac{ \frac{1}{n} \onev_R^T \Lm^m \onev_R }{ \frac{1}{n} \onev_R^T \onev_R }.
\end{align}
We study the convergence of this quantity by considering the numerator and denominator separately (it is easy to show that the fraction converges if both the numerator and denominator converge). By the strong law of large numbers, the following can be concluded for the denominator as $n \rightarrow \infty$:
\begin{equation}
\frac{1}{n} \onev_R^T \onev_R \; \xrightarrow[]{a.s.} \; \int_{\xv \in R} p(\xv)d\xv,
\end{equation}
where $a.s.$ denotes almost sure convergence.
For the numerator, we decompose it into two parts -- a variance term for which we show stochastic convergence using a concentration inequality, and a bias term for which we prove deterministic convergence.

\subsection{Expansion of $\frac{1}{n} \onev_R^T \Lm^m \onev_R$}
\label{sec:expansion_of_V}
Let $V := \frac{1}{n} \onev_R^T \Lm^m \onev_R$. We begin by expanding $V$ as
\begin{align}
V &= \frac{1}{n^{m+1}} \onev_R^T ( \Dm - \Wm)^m \onev_R \nonumber \\
&= \frac{1}{n^{m+1}} \onev_R^T \left( \sum_{k = 0}^{2^m - 1}  \Bm_k \right) \onev_R,
\label{eq:V_expansion}
\end{align}
where $\Bm_k$ denotes the $k^\text{th}$ term out of the $2^m$ terms in the expansion of $(\Dm - \Wm)^m$. $\Bm_k$ is composed of a product of $m$ matrices, each of which can be either $\Dm$ or $-\Wm$. In order to write it down explicitly, one can use the $m$-bit binary representation of the index $k$ and replace $0$s with $\Dm$ and $1$s with $-\Wm$, i.e., if $b_v(k)$ denotes the $v^\text{th}$ most-significant bit in the $m$-bit binary representation of $k$ for $v \in \{1,\dots,m\}$ and $s(k)$ denotes the number of ones in it (i.e., $s(k) := \sum_{v=1}^m b_v(k)$), then
\begin{align}
\Bm_k &= \prod_{v=1}^{m} \left(\Dm^{1- b_v(k)}.(-\Wm)^{b_v(k)} \right) \nonumber \\
&= (-1)^{s(k)} \prod_{v=1}^{m} \left(\Dm^{1- b_v(k)}.\Wm^{b_v(k)} \right),
\end{align}
where the product notation assumes that the ordering of the matrices is kept fixed, i.e., $\prod_{p=1}^{m} \Am_p = \Am_1 \Am_2 \dots \Am_m$.

Noting that $\Dm$ and $\Wm$ are composed of the edge weights $w_{ij} = \frac{1}{(2\pi\sigma^2)^{d/2}}K(\Xm_i,\Xm_j)$, we now describe how to expand the quadratic form $V$ by considering each term $\onev_R^T \Bm_k \onev_R$ individually:

\begin{enumerate}[leftmargin=13pt]
\item
The sign of the term $\onev_R^T \Bm_k \onev_R$ is determined by the number of $(-\Wm)$ matrices in the product $\Bm_k$.
\item 
By using the definitions of $\Dm$ and $\Wm$ in the product expansion of $\Bm_k$, the absolute value of $\onev_R^T \Bm_k \onev_R$ can be expressed through the following template:
\begin{align}
\label{eq:template}
\sum_{i_1,\dots,i_{m+1}} (\onev_R)_{i_1} w_{i_1 i_2} w_{* i_3} 
\dots w_{* i_{m}} w_{* i_{m+1}} (\onev_R)_*,
\end{align}
where $(\onev_R)_i$ denotes the $i^\text{th}$ element of of the indicator vector, and the locations with a ``$*$'' need to be filled with appropriate indices in $\{i_1,\dots,i_{m+1}\}$. Note that the template consists of a product of $m$ edge weights $w_{ij}$, each contributed by either a $\Dm$ or $\Wm$ depending on its location in the expression.
\item By performing an explicit matrix multiplication, we fill the locations from left to right one-by-one using the following rule: let a term containing a $*$ be preceded by an edge-weight $w_{ab}$, then,
\begin{itemize}
\item If $w_{ab}$ is contributed by $\Dm$, then $* = a$.
\item If $w_{ab}$ is contributed by $\Wm$, then $* = b$.
\end{itemize}
\end{enumerate}
Since the binary representation of $k$ is closely tied to the ordering of $\Dm$ and $\Wm$ in the product term $\Bm_k$, we can once again use it to explicitly express $\onev_R^T \Bm_k \onev_R$. In order to populate any ``*'' location according to the rules above, we require a quantity that depends on the position of the last occurring $\Wm$ with respect to any location in the product expression of $\Bm_k$. Therefore, using the $m$-bit binary representation of $k$, we define for location $u \in \{1,\dots,m\}$:
\begin{equation}
c_u(k) := 1 + \max \big( \{0\} \cup \{v | 1 \leq v \leq u, b_v(k) = 1 \} \big),
\label{eq:def_c}
\end{equation}
where $\max(\{.\})$ returns the maximum element in a set of numbers. The template described in~\eqref{eq:template} can then be completed using the rules to obtain
\begin{align}
&\onev_R^T \Bm_k \onev_R \nonumber \\
&= (-1)^{s(k)} \sum_{i_1,\dots,i_{m+1}} \Big[ (\onev_R)_{i_1} w_{i_1 i_2} w_{i_{c_1(k)} i_3} w_{i_{c_2(k)} i_4} \dots \nonumber \\
&\qquad\quad \dots w_{i_{c_{m-2}(k)} i_{m}} w_{i_{c_{m-1}(k)} i_{m+1}} (\onev_R)_{i_{c_{m}(k)}} \Big].
\label{eq:b_k_expression}
\end{align}
Finally, the expansion of $V$ can be obtained by summing the $2^m$ quadratic forms in~\eqref{eq:V_expansion}:
\begin{align}
&V = \frac{1}{n^{m+1}} \sum_{k=0}^{2^m-1} \onev_R^T \Bm_k \onev_R \nonumber \\
&= \frac{1}{n^{m+1}} \sum_{k=0}^{2^m-1} (-1)^{s(k)} \sum_{i_1,\dots,i_{m+1}} (\onev_R)_{i_1} w_{i_1 i_2} w_{i_{c_1(k)} i_3} \dots \nonumber \\
&\qquad\quad \dots w_{i_{c_{m-2}(k)} i_{m}} w_{i_{c_{m-1}(k)} i_{m+1}} (\onev_R)_{i_{c_{m}(k)}} \nonumber \\ 
&= \frac{1}{n^{m+1}} \sum_{i_1,i_2,\dots,i_{m+1}} g\left( \Xm_{i_1},\Xm_{i_2},\dots,\Xm_{i_{m+1}} \right),
\label{eq:V_V_statistic}
\end{align}
where we defined
\begin{align}
&g\left( \Xm_{i_1},\Xm_{i_2},\dots,\Xm_{i_{m+1}} \right) \nonumber \\
&\quad:=  \sum_{k=0}^{2^m-1} (-1)^{s(k)} \Big[ (\onev_R)_{i_1} w_{i_1 i_2} w_{i_{c_1(k)} i_3} w_{i_{c_2(k)} i_4} \dots \nonumber \\
&\qquad\quad \dots w_{i_{c_{m-2}(k)} i_{m}} w_{i_{c_{m-1}(k)} i_{m+1}} (\onev_R)_{i_{c_{m}(k)}} \Big].
\label{eq:g_def}
\end{align}

\subsection{Convergence of variance terms}
For $V = \frac{1}{n} \onev_R^T \Lm^m \onev_R$, we have the following concentration result:
\begin{lemma}[Concentration]
For every $\epsilon > 0$, we have:
\begin{align}
&\Pr{\left( \left| V - \E{V } \right| > \epsilon \right)} \nonumber \\
& \quad \leq 2 \exp{ \left( \frac{-[n/(m+1)]\sigma^{md}\epsilon^2}{2C^m \E{V} + \frac{2}{3}\left| C^m - \sigma^{md} \E{V}\right|\epsilon} \right) },
\label{eq:v_conc}
\end{align}
where $C = 2/(2\pi)^{d/2}$.
\end{lemma}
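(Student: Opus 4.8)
The plan is to apply a bounded-differences style concentration inequality --- specifically a Bernstein-type inequality for sums of dependent random variables arising as $U$-statistics --- to the random variable $V = \frac{1}{n}\onev_R^T \Lm^m \onev_R$. The starting point is the expansion~\eqref{eq:V_V_statistic}, which writes $n^{m+1} V = \sum_{i_1,\dots,i_{m+1}} g(\Xm_{i_1},\dots,\Xm_{i_{m+1}})$ as a sum over all $(m+1)$-tuples of indices of a symmetric-enough kernel $g$ built from products of $m$ Gaussian kernel weights $w_{ij}$ (each weight carrying a prefactor $(2\pi\sigma^2)^{-d/2}$). The key structural observations I would exploit are: first, each $w_{ij}$ satisfies the pointwise bound $0 \le w_{ij} \le (2\pi\sigma^2)^{-d/2}$, so each of the $2^m$ summands in $g$ is bounded in absolute value by $(2\pi\sigma^2)^{-dm/2}$, giving $|g| \le 2^m (2\pi\sigma^2)^{-dm/2}$; and second, after the normalization by $n^{m+1}$ and accounting for the $\sigma^{dm/2}$ Jacobian-type factors absorbed when integrating each kernel, the natural scale of $V$ is $O(\sigma^{-dm/2}\cdot\ldots)$ --- more precisely the variance proxy is controlled by $C^m \sigma^{-md}$ where $C = 2/(2\pi)^{d/2}$, which is where the constant $C$ and the factor $\sigma^{md}$ in the denominator of~\eqref{eq:v_conc} come from.

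The concrete route I would take is to represent $V$ as an incomplete/complete $U$-statistic of order $m+1$ and invoke a Hoeffding-type decomposition, or more directly, to use the standard device of reducing a $U$-statistic of order $r = m+1$ to an average of $\lfloor n/r \rfloor$ i.i.d.\ blocks (the Hoeffding averaging trick): one writes $V$ (up to the diagonal terms, which are lower-order and can be bounded separately) as an average over permutations of sums of $\lfloor n/(m+1) \rfloor$ independent terms, each of which is a single evaluation of the kernel $g$ on disjoint tuples of data points. Each such independent term $Y_j := g(\Xm_{\pi(1)},\dots,\Xm_{\pi(m+1)})/(\text{normalization})$ satisfies $|Y_j - \E{Y_j}| \le C^m\sigma^{-md} + |\text{something}|$ (matching the $|C^m - \sigma^{md}\E{V}|$ shape once rescaled) and has variance bounded by $C^m \sigma^{-md}\E{V}$ up to constants, because $\E{g^2}$ picks up only one factor of $\sigma^{-md}$ rather than two after the leading cancellation --- this is the analogue of the ``variance is smaller than the square of the sup-norm'' phenomenon that drives Bernstein over Hoeffding. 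Applying Bernstein's inequality to the average of these $\lfloor n/(m+1) \rfloor$ i.i.d.\ bounded-variance terms, then using convexity of $t \mapsto e^{-t}$ to pass from the per-permutation bound to a bound on $V$ itself (Jensen, since $V$ is an average of such permutation-averages), yields exactly the exponent $-[n/(m+1)]\sigma^{md}\epsilon^2 / (2C^m\E{V} + \tfrac23|C^m - \sigma^{md}\E{V}|\epsilon)$, with the factor $2$ in front coming from the two-sided tail bound.

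The main obstacle I anticipate is twofold. First, one must carefully verify the variance bound $\Var{Y_j} \lesssim C^m\sigma^{-md}\E{V}$ --- naively each evaluation of $g$ is a sum of $2^m$ terms each of size $\sigma^{-md/2}$, so a crude bound gives variance $\lesssim 4^m\sigma^{-md}$, which is too weak by a factor involving $\E{V}$ and would not produce the Bernstein form stated. Getting the right bound requires showing that $\E{Y_j^2}$, after integrating out the data points against $p(\xv)$, scales like $\sigma^{-md}$ (one power, not two) times a quantity comparable to $\E{V}$; this rests on the fact that in $\E{g^2}$ the chain of kernels $w_{i_1 i_2}\cdots$ concentrates the $(m+1)$ integration variables into an $O(\sigma)$-neighborhood of a single point, collapsing all but one volume factor. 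Second, one must control the diagonal terms (tuples with repeated indices) discarded in the Hoeffding averaging step and confirm they contribute negligibly to both mean and fluctuations after the $n^{-(m+1)}$ normalization --- routine but needs the counting $O(n^m)$ diagonal tuples versus $O(n^{m+1})$ total. I would handle the bookkeeping by treating $V$ as a genuine (complete) $U$-statistic plus an explicit diagonal remainder, bounding the remainder deterministically, and applying the block decomposition to the $U$-statistic part; the constants $C$, the $1/(m+1)$ blocking loss, the $\sigma^{md}$ scaling, and the $2/3$ Bernstein constant then all fall out of the standard inequality once the sup-norm and variance of a single block are pinned down as above.
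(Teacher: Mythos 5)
Your proposal follows essentially the same route as the paper: expand $V$ as a V-statistic in the kernel $g$, bound $\|g\|_\infty \le 2^m \|K\|_\infty^m = C^m/\sigma^{md}$, bound the variance by $\|g\|_\infty \E{V}$, and invoke Bernstein's inequality for U-statistics of order $m+1$ (whose proof is precisely the Hoeffding blocking-plus-Jensen argument you sketch, and which is where the $n/(m+1)$ factor comes from). The one detail the paper handles differently is the repeated-index terms: rather than splitting off a diagonal remainder to be bounded separately, it uses Hoeffding's exact regrouping to write $V$ as a genuine U-statistic with a modified kernel $g^*$ (a weighted average of instances of $g$, so that $\|g^*\|_\infty \le \|g\|_\infty$ and $\E{g^*} = \E{V}$), which is what makes the inequality come out exactly in the stated form with no additive slack.
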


\begin{proof}
Note that the expansion of $V$ in \eqref{eq:V_V_statistic} has the form of a V-statistic. Further, as defined in \eqref{eq:g_def}, $g$ is composed of a sum of $2^m$ terms, each a product of $m$ kernel functions $K$ that are non-negative. Therefore, we have the following upper bound:
\begin{equation}
g \leq 2^m\|K\|_\infty^m = \left(\frac{2}{(2\pi\sigma^2)^{d/2}}\right)^m = \frac{C^m}{\sigma^{md}}.
\label{eq:g_bound}
\end{equation}
In order to apply a concentration inequality for V, we first re-write it in the form of a U-statistic by regrouping terms in the summation in order to remove repeated indices, as given in \cite{hoeffding_63}:
\begin{align}
V &= \frac{1}{n^{(m+1)}} \sum_{(n,m+1)} g^*\left( \Xm_{i_1},\Xm_{i_2},\dots,\Xm_{i_{m+1}} \right),                         \label{eq:g_star_def}      
\end{align}
where $\sum_{(n,m+1)}$ denotes summation over all ordered (m+1)-tuples $(i_1,\dots,i_{m+1})$ of distinct indices taken from the set $\{1,\dots,n\}$, $n^{(m+1)} = n.(n-1)\dots(n-m)$ is the falling factorial (or number of (m+1)-permutations of $n$) and $g^*$ is a weighted arithmetic mean of specific instances of $g$ that avoids repeating indices:
\begin{align}
&g^*\left( \xv_1,\xv_2,\dots,\xv_{m+1} \right) \nonumber \\
&\quad= \sum_{j=0}^{m+1} \frac{n^{(j)}}{n^{m+1}} \sum\nolimits_{(j)}^* g\left( \xv_{l_1},\xv_{l_2},\dots,\xv_{l_{m+1}} \right),
\end{align}
where $\sum_{(j)}^*$ denotes summation over all $(m+1)$-tuples $(l_1,l_2,\dots,l_{m+1})$ formed from $\{1,\dots,j\}$ with exactly $j$ distinct indices. Note that the number of such $(m+1)$-tuples is given by $\genfrac{\{}{\}}{0pt}{}{m+1}{j}$, which is a Stirling number of the second kind. Hence, we have
\begin{equation}
\|g_*\|_\infty \leq \sum_{j=0}^{m+1} \frac{n^{(j)}}{n^{m+1}} \genfrac{\{}{\}}{0pt}{}{m+1}{j}\|g\|_\infty = \|g\|_\infty,
\end{equation}
where we used the property $\sum_{j=0}^{m+1} n^{(j)} \genfrac{\{}{\}}{0pt}{}{m+1}{j} = n^{m+1}$. Therefore, $g^*$ has the same upper bound as that of $g$ derived in \eqref{eq:g_bound}. Moreover, using the fact that $\E{V} = \E{g^*(\Xm_{i_1},\Xm_{i_2},\dots,\Xm_{i_{m+1}})}$, we can bound the variance of $g^*$ as
\begin{equation}
\Var{g^*} 
\leq \|g^*\|_\infty \E{g^*} = \frac{C^m}{\sigma^{md}} \E{V}.
\end{equation}
Finally, plugging in the bound and variance of $g^*$ in Bernstein's inequality for U-statistics as stated in~\cite{hoeffding_63, hein_thesis_06}, we arrive at the desired result of \eqref{eq:v_conc}.
\end{proof}
Note that as $n \rightarrow \infty$ and $\sigma \rightarrow 0$ with rates satisfying $(n \sigma^{md}) / (mC^m) \rightarrow \infty$, we have $P(|V - \E{V}| > \epsilon) \rightarrow 0$ for all $\epsilon > 0$. The continuous mapping theorem then allows us to conclude that $V^{1/m} \xrightarrow[]{p.} (\E{V})^{1/m}$.

\subsection{Expansion of $\E{\frac{1}{n}\onev_R^T \Lm^m \onev_R}$}
The V-statistic expansion of $V = \frac{1}{n} \onev_R^T \Lm^m \onev_R$ in \eqref{eq:V_V_statistic} has summands with repeating indices, hence we first define a U-statistic counterpart that avoids these repetitions:
\begin{equation}
U := \frac{1}{n^{(m+1)}} \sum_{(n,m+1)} g(\Xm_{i_1}, \Xm_{i_2}, \dots, \Xm_{i_{m+1}}),
\end{equation}
where $g(\Xm_{i_1}, \Xm_{i_2}, \dots, \Xm_{i_{m+1}})$ are the kernels defined in~\eqref{eq:g_def}, and the definitions of $\sum_{(n,m+1)}$ and $n^{(m+1)}$ are the same as those for~\eqref{eq:g_star_def}. The $U$-statistic definition is convenient since
\begin{equation}
\E{U} = \E{g(\Xm_{i_1}, \Xm_{i_2}, \dots, \Xm_{i_{m+1}})},
\end{equation}
as opposed to $\E{V}$, where one would have to deal with terms with repeated indices separately. Further, note that
\begin{equation}
n^{m+1}V = n^{(m+1)} U + \sum_{(n,m+1)^*} g(\Xm_{i_1}, \Xm_{i_2}, \dots, \Xm_{i_{m+1}}),
\end{equation}
where $\sum_{(n,m+1)^*}$ denotes summation over all ordered $(m+1)$-tuples $(i_1,\dots,i_{m+1})$ of indices obtained from $\{1,2,\dots,n\}$ such that at least two of them are equal. Note that there are $n^{m+1} - n^{(m+1)}$ terms in the summation $\sum_{(n,m+1)^*}$. Therefore, we have
\begin{align}
\E{V} &= \frac{n^{(m+1)}}{n^{m+1}} \E{U} + \frac{1}{n^{m+1}} \sum_{(n,m+1)^*} g(\Xm_{i_1}, \dots, \Xm_{i_{m+1}}) \nonumber \\
&= \E{g} + \frac{n^{m+1} - n^{(m+1)}}{n^{m+1}}.\E{g} \nonumber \\
&\qquad\quad+ \frac{1}{n^{m+1}}\sum_{(n,m+1)^*} g(\Xm_{i_1}, \dots, \Xm_{i_{m+1}}) \nonumber \\
&= \E{g} + O\left( \frac{m^2C^m}{n\sigma^{md}} \right),
\label{eq:exp_V_exp_g_relation}
\end{align}
where we used $n^{m+1} - n^{(m+1)} = O(m^2 n^{m})$, $\E{g} \leq \|g\|_\infty$ and $\|g\|_\infty = \frac{C^m}{\sigma^{md}}$ from \eqref{eq:g_bound}.

We now focus on computing $\E{g(\Xm_{i_1}, \Xm_{i_2}, \dots, \Xm_{i_{m+1}})}$. Based on~\eqref{eq:g_def}, we can express it as follows:
\begin{equation}
\E{g(\Xm_{i_1}, \Xm_{i_2}, \dots, \Xm_{i_{m+1}})} = \sum_{k=0}^{2^m-1} h_k,
\label{eq:exp_g}
\end{equation}
where we define:
\begin{align}
h_k &:= (-1)^{s(k)} \int_{\xv_1} \int_{\xv_2} \dots \int_{\xv_{m+1}}  \nonumber \\
&\Big[ 1_R(\xv_1) K\left(\xv_1, \xv_2\right) K\left(\xv_{c_1(k)}, \xv_3\right) K\left(\xv_{c_2(k)}, \xv_4\right) \dots \nonumber \\
&\quad \dots K\left(\xv_{c_{m-2}(k)}, \xv_{m}\right) K\left(\xv_{c_{m-1}(k)}, \xv_{m+1}\right) 1_R(\xv_{c_m(k)}) \Big] \nonumber \\
&\qquad\quad p(\xv_1)d\xv_1 p(\xv_2) d\xv_2 \dots p(\xv_{m+1}) d\xv_{m+1},
\label{eq:exp_h_k}
\end{align}
with $c_u(k)$ defined as in~\eqref{eq:def_c}.

\subsection{Convergence of bias term for the separable model}
To evaluate the convergence of bias terms, we shall require the following properties of the $d$-dimensional Gaussian kernel:
\begin{lemma}
\label{lem:kernelone}
If $p(\xv)$ is twice differentiable, then
\begin{align}
\int K_{\sigma^2}(\xv,\yv)p(\yv)d\yv = p(\xv) + O\left( \sigma^2 \right).
\label{eq:K_int1}
\end{align}
\end{lemma}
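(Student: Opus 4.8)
The plan is to reduce the integral to a local one by exploiting the concentration of the Gaussian kernel around $\yv = \xv$ and then Taylor-expand $p$. First I would write the identity $\int K_{\sigma^2}(\xv,\yv) p(\yv)\, d\yv = \int K_{\sigma^2}(\xv,\yv)\, [p(\yv) - p(\xv)]\, d\yv + p(\xv) \int K_{\sigma^2}(\xv,\yv)\, d\yv$, and observe that the last integral equals $1$ because $K_{\sigma^2}(\xv,\cdot)$ is the density of a $\mathcal{N}(\xv, \sigma^2 \Id)$ random vector. So the whole task collapses to bounding the error term $E(\xv) := \int K_{\sigma^2}(\xv,\yv)\, [p(\yv) - p(\xv)]\, d\yv = O(\sigma^2)$.

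For $E(\xv)$ I would substitute $\yv = \xv + \sigma \zv$, so that $d\yv = \sigma^d\, d\zv$ and $K_{\sigma^2}(\xv, \xv+\sigma\zv)\, \sigma^d\, d\zv = \phi(\zv)\, d\zv$ where $\phi$ is the standard $d$-dimensional Gaussian density. This gives $E(\xv) = \int \phi(\zv)\, [\,p(\xv + \sigma\zv) - p(\xv)\,]\, d\zv$. Now apply the second-order Taylor expansion of $p$ about $\xv$: $p(\xv + \sigma\zv) = p(\xv) + \sigma\, \nabla p(\xv)^\transp \zv + \tfrac{\sigma^2}{2} \zv^\transp \Hm_p(\bm{\xi})\, \zv$ for some $\bm{\xi}$ on the segment between $\xv$ and $\xv+\sigma\zv$ (using that $p$ is twice differentiable, so its Hessian exists; together with the boundedness/Lipschitz hypotheses on $p$ and its derivatives the Hessian is bounded). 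The linear term integrates to zero by the symmetry $\int \phi(\zv)\, \zv\, d\zv = \zerov$. The quadratic remainder is bounded in absolute value by $\tfrac{\sigma^2}{2} \|\Hm_p\|_\infty \int \phi(\zv)\, \|\zv\|^2\, d\zv = \tfrac{\sigma^2}{2}\, d\, \|\Hm_p\|_\infty$, which is $O(\sigma^2)$ uniformly in $\xv$. Hence $E(\xv) = O(\sigma^2)$, and adding back $p(\xv)$ yields the claim.

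The main subtlety — more a technical caveat than a genuine obstacle — is justifying the interchange of the limit (or the Taylor remainder bound) with the integral over all of $\mathbb{R}^d$: since the domain is unbounded, one must be sure the tails of the Gaussian are not amplified by growth of $p$. Here the stated hypotheses rescue us: $p$ is bounded, so $|p(\xv+\sigma\zv) - p(\xv)| \le 2\|p\|_\infty$ gives a crude dominating bound, and one can split $\RR^d$ into $\|\zv\| \le R$ and $\|\zv\| > R$; on the former the Taylor bound applies, and the latter contributes at most $2\|p\|_\infty \Pr(\|\Zv\| > R)$, which can be made $o(\sigma^2)$ by choosing, e.g., $R = \sigma^{-1/2}$ and using the Gaussian tail decay. (Alternatively, one cites the uniform bound on the Hessian directly; the paper is working at a level where the $O(\sigma^2)$ is understood to hold with a constant depending only on $\|p\|_\infty$, the Lipschitz constant, and $\|\Hm_p\|_\infty$.) Everything else is a routine Gaussian-moment computation, so I would not belabor it.
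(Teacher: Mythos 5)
Your proposal is correct and follows essentially the same route as the paper's proof: substitute to center the Gaussian, Taylor-expand $p$ about $\xv$, kill the linear term by symmetry, and read off the $O(\sigma^2)$ from the second-order term. Your treatment of the remainder (Lagrange form plus the tail split) is somewhat more careful than the paper's informal term-by-term integration, but it is the same argument.
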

\begin{proof}
Using the substitution $\yv = \xv + \tv$ followed by a Taylor series expansion about $\xv$, we have
\begin{align}
\int &K_{\sigma^2}(\xv,\yv)p(\yv)d\yv \nonumber \\
&= \int \frac{1}{(2\pi\sigma^2)^{d/2}} e^{-\|\tv\|^2/2\sigma^2} p(\xv + \tv) d\tv \nonumber \\
&= \int \frac{1}{(2\pi\sigma^2)^{d/2}} e^{-\|\tv\|^2/2\sigma^2} \bigg( p(\xv) + \tv^T {\bm \nabla} p(\xv) \nonumber \\
&\quad\quad + \frac{1}{2} \tv^T {\bm \nabla}^2 p(\xv) \tv + \dots \bigg) d\tv \nonumber \\
&= p(\xv) + 0 + \frac{\sigma^2}{2} {\rm Tr}({\bm \nabla}^2 p(\xv)) + \dots \nonumber \\
&= p(\xv) + O(\sigma^2), \nonumber
\end{align}
where ${\rm Tr}(.)$ denotes the trace of a matrix, and the third step follows from simple component-wise integration.
\end{proof}

\begin{lemma}
\label{lem:kerneltwo}
If $p(\xv)$ is twice differentiable, then
\begin{align}
&\int K_{a\sigma^2}(\xv,\zv) K_{b\sigma^2}(\zv,\yv) p(\zv) d\zv \nonumber \\
&\quad = K_{(a+b)\sigma^2}(\xv,\yv) \; \left( p\left( \frac{b \xv + a \yv}{a + b} \right) + O\left( \sigma^2 \right) \right).
\label{eq:K_int2}
\end{align}
\end{lemma}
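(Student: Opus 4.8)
The plan is to reduce the two-kernel convolution to a single Gaussian kernel in $(\xv,\yv)$ times a residual integral, and then estimate the residual by a Taylor expansion exactly as in the proof of Lemma~\ref{lem:kernelone}. First I would write out the product of the two Gaussian densities explicitly and complete the square in the integration variable $\zv$. Using
\begin{equation*}
\frac{\|\xv-\zv\|^2}{2a\sigma^2} + \frac{\|\zv-\yv\|^2}{2b\sigma^2}
= \frac{\|\xv-\yv\|^2}{2(a+b)\sigma^2} + \frac{a+b}{2ab\sigma^2}\left\|\zv - \frac{b\xv + a\yv}{a+b}\right\|^2,
\end{equation*}
the exponential factorizes into a term depending only on $\xv,\yv$ and a Gaussian in $\zv$ centered at $\muv := (b\xv + a\yv)/(a+b)$ with variance $\tfrac{ab}{a+b}\sigma^2$. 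Tracking the normalization constants $(2\pi a\sigma^2)^{-d/2}(2\pi b\sigma^2)^{-d/2}$ against the normalizer of the new $\zv$-Gaussian, I would verify that the $\xv,\yv$-dependent prefactor is exactly $K_{(a+b)\sigma^2}(\xv,\yv)$ and that what remains is $\int K_{\frac{ab}{a+b}\sigma^2}(\muv,\zv)\, p(\zv)\, d\zv$.

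Next I would apply Lemma~\ref{lem:kernelone} to this residual integral, but with effective kernel width $\frac{ab}{a+b}\sigma^2$ in place of $\sigma^2$. Since $a$ and $b$ are fixed constants, $\frac{ab}{a+b}$ is an absolute constant, so Lemma~\ref{lem:kernelone} gives $\int K_{\frac{ab}{a+b}\sigma^2}(\muv,\zv)\,p(\zv)\,d\zv = p(\muv) + O(\sigma^2)$, which is precisely $p\!\left(\frac{b\xv + a\yv}{a+b}\right) + O(\sigma^2)$. Combining this with the prefactor computed in the first step yields the claimed identity. The only mild care needed is that the Taylor remainder is uniform, which follows from the assumed bounds on the second derivatives of $p$ (bounded, Lipschitz, twice differentiable), so the $O(\sigma^2)$ is genuinely controlled independently of $\xv,\yv$.

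The main obstacle, such as it is, is purely bookkeeping: getting the completion-of-the-square constants right so that the normalization of the leftover $\zv$-Gaussian matches the prefactor cleanly and one really recovers $K_{(a+b)\sigma^2}$ with no stray powers of $2\pi$ or $\sigma$. There is no analytic difficulty beyond that; once the algebraic identity for the exponents is in hand, the result is an immediate corollary of Lemma~\ref{lem:kernelone}. I would present the square-completion as a one-line display, state that the $\zv$-integral is a Gaussian convolution of width $\Theta(\sigma^2)$, invoke Lemma~\ref{lem:kernelone}, and conclude.
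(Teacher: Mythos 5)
Your proposal is correct and follows essentially the same route as the paper's proof: complete the square to factor the product of kernels as $K_{(a+b)\sigma^2}(\xv,\yv)\,K_{\frac{ab}{a+b}\sigma^2}\bigl(\tfrac{b\xv+a\yv}{a+b},\zv\bigr)$, then apply Lemma~\ref{lem:kernelone} to the remaining $\zv$-integral. Your square-completion identity and the observation that the effective width $\tfrac{ab}{a+b}\sigma^2$ still yields an $O(\sigma^2)$ remainder are both exactly what the paper does.
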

\begin{proof}
Note that
\begin{align}
&K_{a\sigma^2}(\xv,\zv) K_{b\sigma^2}(\zv,\yv) \nonumber \\
&\quad=\frac{1}{(2\pi a\sigma^2)^{\frac{d}{2}}} e^{-\frac{\|\xv -\zv\|^2}{2a\sigma^2}} \frac{1}{(2\pi b\sigma^2)^\frac{d}{2}} e^{-\frac{\|\zv -\yv\|^2}{2b\sigma^2}} \nonumber \\
&\quad=\frac{1}{(2\pi(a+b)\sigma^2)^\frac{d}{2}} e^{-\frac{\|\xv -\yv\|^2}{2(a+b)\sigma^2}} \frac{1}{(2\pi\frac{ab}{a+b}\sigma^2)^\frac{d}{2}} e^{-\frac{\|\zv - \frac{b\xv + a\yv}{a + b} \|^2}{2\left(\frac{ab}{a+b}\right)\sigma^2}} \nonumber \\
&\quad= K_{(a+b)\sigma^2}(\xv,\yv) \;\; K_{\frac{ab}{a+b}\sigma^2}\left(\frac{b\xv + a\yv}{a + b},\zv \right). \nonumber
\end{align}
Therefore, we have
\begin{align}
&\int K_{a\sigma^2}(\xv,\zv) K_{b\sigma^2}(\zv,\yv) p(\zv) d\zv \nonumber \\
&\quad = K_{(a+b)\sigma^2}(\xv,\yv) \int K_{\frac{ab}{a+b}\sigma^2}\left(\frac{b\xv + a\yv}{a + b}, \zv \right) p(\zv) d\zv \nonumber \\
&\quad = K_{(a+b)\sigma^2}(\xv,\yv) \; \left( p\left( \frac{b \xv + a \yv}{a + b} \right) + O\left( \sigma^2 \right) \right), \nonumber
\end{align}
where the last step follows from Lemma~\ref{lem:kernelone}.
\end{proof}
In order to prove convergence for the separable model, we need the following results:
\begin{lemma}
If $p(\xv)$ is Lipschitz continuous, then for a smooth hypersurface $\partial S$ that divides $\mathbb{R}^d$ into $S_1$ and $S_2$, and whose curvature has radius lower-bounded by $\tau > 0$,
\begin{align}
&\lim_{\sigma \rightarrow 0} \frac{1}{\sigma} \int_{S_1} \int_{S_2} K_{\sigma^2}(\xv_1,\xv_2) p^\alpha(\xv_1) p^\beta(\xv_2) d\xv_1 d\xv_2 \nonumber \\
&\quad \quad = \frac{1}{\sqrt{2\pi}}\int_{\partial S} p^{\alpha + \beta}(\sv) d\sv,
\label{eq:flow_one}
\end{align}
where $\alpha$ and $\beta$ are positive integers. Moreover, for positive integers $a,b$, and $\alpha, \beta, \alpha', \beta'$ such that $\alpha + \beta = \alpha'+\beta' = \gamma$, we have: 
\begin{align}
&\lim_{\sigma \rightarrow 0} \frac{1}{\sigma} \int_{S_1} \int_{S_1} \Big[ K_{a\sigma^2}(\xv_1,\xv_2) p^\alpha(\xv_1) p^\beta(\xv_2) \nonumber \\
&\quad \quad \quad \quad \quad \quad - K_{b\sigma^2}(\xv_1,\xv_2) p^{\alpha'}(\xv_1) p^{\beta'}(\xv_2) \Big] d\xv_1 d\xv_2 \nonumber \\
&\quad \quad \quad \quad \quad \quad \quad \quad = \frac{\sqrt{b}-\sqrt{a}}{\sqrt{2\pi}}\int_{\partial S} p^\gamma(\sv) d\sv.
\label{eq:flow_two}
\end{align}
\label{lemma:flow}
\end{lemma}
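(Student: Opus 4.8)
\textbf{Proof proposal for Lemma~\ref{lemma:flow}.}

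The plan is to reduce both identities to a single one-dimensional computation localized near the hypersurface $\partial S$. First I would tackle~\eqref{eq:flow_one}. The key observation is that the Gaussian kernel $K_{\sigma^2}(\xv_1,\xv_2)$ concentrates on pairs with $\|\xv_1 - \xv_2\| = O(\sigma)$, so as $\sigma \to 0$ only points within an $O(\sigma)$-tube of $\partial S$ contribute to the double integral over $S_1 \times S_2$ (a point $\xv_1 \in S_1$ can only be paired with $\xv_2 \in S_2$ if it is within $O(\sigma)$ of the boundary, and vice versa). I would therefore introduce, for each boundary point $\sv \in \partial S$, a local coordinate system: the signed normal distance $r$ to $\partial S$ together with $d-1$ tangential coordinates parametrizing $\partial S$. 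Using the lower bound $\tau$ on the radius of curvature, this tube is a genuine diffeomorphic change of variables for $\sigma$ small enough, with Jacobian $1 + O(\sigma)$ uniformly. After rescaling $r = \sigma u$ and the tangential displacement, the integrand factorizes: the tangential Gaussian integrates to a constant that combines with the normalization, the two density factors $p^\alpha(\xv_1)p^\beta(\xv_2)$ both converge to $p^{\alpha+\beta}(\sv)$ by Lipschitz continuity (the error being $O(\sigma)$ after the $1/\sigma$ prefactor is absorbed — this is where Lipschitz, not just continuity, is used), and what remains is
\begin{equation*}
\frac{1}{\sqrt{2\pi}} \int_{\partial S} p^{\alpha+\beta}(\sv)\, d\sv \cdot \int_{u_1 > 0 > u_2} \frac{1}{\sqrt{2\pi}} e^{-(u_1-u_2)^2/2}\, du_1\, du_2 \Big/ (\text{const}),
\end{equation*}
and one checks the remaining scalar integral equals the right normalizing constant so that the prefactor is exactly $1/\sqrt{2\pi}$. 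The $\frac{1}{\sigma}$ prefactor in the statement is precisely what is needed to compensate the single factor of $\sigma$ lost from the normal-direction rescaling (the other $d-1$ factors of $\sigma$ from the tangential rescaling cancel against the $\sigma^{-d}$ in the kernel normalization, leaving one $\sigma$).

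For~\eqref{eq:flow_two}, the situation is different because now \emph{both} integrations are over the same side $S_1$, so there is no boundary-localization forcing — the bulk contributes. The idea is that each of the two terms $\int_{S_1}\int_{S_1} K_{a\sigma^2} p^\alpha p^\beta$ behaves, to leading order, like $\int_{S_1} p^{\gamma+1}(\xv)\,d\xv$ (a bulk $O(1)$ term), plus a boundary-defect correction of order $\sigma$: integrating $K_{a\sigma^2}(\xv_1,\cdot)$ over $S_1$ rather than all of $\mathbb{R}^d$ loses the ``missing half'' of the Gaussian mass when $\xv_1$ is within $O(\sqrt{a}\,\sigma)$ of $\partial S$. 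Since $\alpha+\beta = \alpha'+\beta' = \gamma$, the leading bulk terms of the two pieces are \emph{identical} and cancel in the difference, leaving only the two $O(\sigma)$ boundary defects. Each defect, by the same tube-coordinate computation as above, contributes $\pm\frac{\sqrt{a}}{\sqrt{2\pi}}\int_{\partial S} p^\gamma(\sv)\,d\sv$ respectively $\pm\frac{\sqrt{b}}{\sqrt{2\pi}}\int_{\partial S} p^\gamma(\sv)\,d\sv$ — the $\sqrt{a}$ versus $\sqrt{b}$ arising because the kernel $K_{a\sigma^2}$ has effective width $\sqrt{a}\,\sigma$, so the rescaling that strips one factor of $\sigma$ leaves a residual $\sqrt{a}$ (one can also see this directly via Lemma~\ref{lem:kerneltwo} or by the substitution $\sigma \mapsto \sqrt{a}\,\sigma$ in~\eqref{eq:flow_one}-type half-space integrals). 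Taking the difference with the stated signs yields $\frac{\sqrt{b}-\sqrt{a}}{\sqrt{2\pi}}\int_{\partial S} p^\gamma(\sv)\,d\sv$.

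The main obstacle I anticipate is making the tube/collar change of variables rigorous and uniform: one must verify that for $\sigma$ below a threshold depending only on $\tau$, every point within normal distance $O(\sigma)$ of $\partial S$ has a unique nearest boundary point, control the Jacobian of the coordinate map to be $1 + O(\sigma)$ uniformly in the tangential variable, and handle the ``far'' contributions (pairs straddling the boundary but with one point deep inside) by a Gaussian-tail estimate showing they are $o(\sigma)$. The curvature bound $\tau$ is exactly the hypothesis that makes the collar well-defined; the Lipschitz bound on $p$ is what turns the density-factor replacement errors into $O(\sigma)$ rather than merely $o(1)$, which matters since we have already divided by $\sigma$. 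Everything else — the tangential Gaussian integration, the scalar normal integral, bookkeeping of the $\sigma$-powers against the $\sigma^{-d}$ normalization — is routine once the geometric setup is in place.
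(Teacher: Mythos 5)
Your proposal is correct and follows essentially the same route as the paper's Appendix A proof: a tubular-neighborhood decomposition around $\partial S$ with Gaussian-tail control of the far contributions, a tangent-hyperplane (half-space) approximation justified by the curvature bound $\tau$, Lipschitz freezing of the densities, and a one-dimensional normal-direction Gaussian computation that produces the $1/\sqrt{2\pi}$ and $\sqrt{a},\sqrt{b}$ factors, with the divergent bulk pieces in~\eqref{eq:flow_two} cancelling exactly because $\alpha+\beta=\alpha'+\beta'=\gamma$. (The bulk term you write as $\int_{S_1}p^{\gamma+1}$ should be $\int_{S_1}p^{\gamma}$, but this is immaterial since it cancels in the difference.)
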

\begin{proof}
See Appendix~\ref{app:proof_lemma_5}. 
\end{proof}
We now prove the deterministic convergence of $\E{\frac{1}{n}\onev_S^T \Lm^m \onev_S}$ in the following lemma:
\begin{lemma}
\label{lem:bias_separable}
As $n \rightarrow \infty$, $\sigma \rightarrow 0$ such that $m2^m\sigma \rightarrow 0$ and $\frac{m^2C^m}{n\sigma^{md+1}}\rightarrow 0$, we have
\begin{equation}
\frac{1}{\sigma}\E{\frac{1}{n}\onev_S^T \Lm^m \onev_S} \rightarrow \frac{t(m)}{\sqrt{2\pi}} \int_{\partial S} p^{m+1}(\sv)d\sv,
\label{eq:conv_bias_sep}
\end{equation}
where $t(m) = \sum_{r = 0}^{m-1} \binom{m-1}{r} (-1)^r (\sqrt{r+1} - \sqrt{r})$.
\end{lemma}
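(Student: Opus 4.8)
The plan is to evaluate the large-$n$, small-$\sigma$ limit of the expectation $\E{V}$ with $V = \frac1n \onev_S^T \Lm^m \onev_S$ by working from the explicit expansion already set up in Sections~\ref{sec:expansion_of_V} and the expression $\E{V} = \E{g} + O(m^2 C^m / (n\sigma^{md}))$ of~\eqref{eq:exp_V_exp_g_relation}. The second term vanishes under the stated rate condition $\frac{m^2 C^m}{n\sigma^{md+1}}\to0$ after we divide by $\sigma$, so everything reduces to showing $\frac1\sigma \E{g} \to \frac{t(m)}{\sqrt{2\pi}}\int_{\partial S} p^{m+1}(\sv)d\sv$, where $\E{g} = \sum_{k=0}^{2^m-1} h_k$ with $h_k$ given by~\eqref{eq:exp_h_k}.

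The key step is to collapse each $(m+1)$-fold integral $h_k$ down to a single two-dimensional integral over $S\times S^c$ (or $S_1 \times S_2$). For a fixed $k$, the chain of kernels $K(\xv_1,\xv_2)K(\xv_{c_1(k)},\xv_3)\cdots$ together with the two indicator factors $1_S(\xv_1)1_S(\xv_{c_m(k)})$ describes a "tree" of Gaussian convolutions; integrating out the internal variables $\xv_j$ one at a time, repeatedly applying Lemma~\ref{lem:kerneltwo} (for a variable that sits between two kernels) and Lemma~\ref{lem:kernelone} (for leaf variables), each such integration leaves behind a single Gaussian kernel with an enlarged variance parameter times a density evaluated at a convex combination of the surviving points, up to $O(\sigma^2)$ corrections. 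Carrying this through, $h_k$ should reduce (up to $O(\sigma^2)$ relative error, controlled since there are at most $2^m$ terms each touched $m$ times, using $m2^m\sigma\to0$ to absorb the accumulated error after dividing by $\sigma$) to $(-1)^{s(k)}$ times an integral of the form $\int_{?}\int_{?} K_{a_k\sigma^2}(\xv,\yv)\,p^{m+1}(\text{something})\,d\xv\,d\yv$, where $a_k$ is the total accumulated variance scaling, and whether the two surviving indicators both land in $S$ or split across $\partial S$ depends on the combinatorial structure of $k$. At this point Lemma~\ref{lemma:flow} applies: if both surviving indicators are in the same region the contribution either cancels in pairs (via the difference formula~\eqref{eq:flow_two}) or scales like $\sigma$ with a $(\sqrt b - \sqrt a)$-type coefficient, and if they straddle $\partial S$ then~\eqref{eq:flow_one} gives $\frac1{\sqrt{2\pi}\sqrt{a_k}}\int_{\partial S}p^{m+1}(\sv)d\sv$ after accounting for the kernel's variance scaling.

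The final step is the combinatorial bookkeeping: one must identify, for each $k\in\{0,\dots,2^m-1\}$, the sign $(-1)^{s(k)}$ and the effective variance scaling $a_k$, then sum $\sum_k (-1)^{s(k)} a_k^{-1/2}$ (plus the straddle-vs-same-region multiplicity) and check it equals $t(m) = \sum_{r=0}^{m-1}\binom{m-1}{r}(-1)^r(\sqrt{r+1}-\sqrt r)$. The natural guess is that the relevant statistic is the number $r$ of $\Wm$-factors (i.e.\ the number of $1$-bits) appearing \emph{after} the position that controls where the right indicator attaches, and that grouping the $2^m$ terms by this $r$ produces the binomial coefficient $\binom{m-1}{r}$ and the telescoping differences $\sqrt{r+1}-\sqrt r$ via~\eqref{eq:flow_two}; the sign $(-1)^r$ comes from $(-1)^{s(k)}$ restricted appropriately.

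\textbf{Main obstacle.} I expect the hard part to be precisely this last combinatorial identification --- tracking, through the indices $c_u(k)$ of~\eqref{eq:def_c} and the iterated application of Lemmas~\ref{lem:kernelone}--\ref{lem:kerneltwo}, exactly which variance scalings $a_k$ arise and which pairs of terms cancel, so that the messy alternating sum over $2^m$ trees reorganizes into the clean expression $t(m)$. Controlling the accumulated $O(\sigma^2)$ errors uniformly in $m$ (so that, after multiplying by $\frac1\sigma$ and summing $2^m$ terms, they still vanish under $m2^m\sigma\to0$) is a secondary technical point but should follow from the boundedness and Lipschitz hypotheses on $p$ together with the curvature bound $\tau$ that makes Lemma~\ref{lemma:flow} applicable.
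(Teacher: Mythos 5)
Your overall route is the paper's route: start from $\E{V}=\sum_k h_k + O(m^2C^m/(n\sigma^{md}))$, reduce each $(m+1)$-fold integral $h_k$ to a two-point integral by iterating Lemmas~\ref{lem:kernelone} and~\ref{lem:kerneltwo} (the accumulated variance is exactly $s(k)\sigma^2$, one increment per $\Wm$-factor, and the Lipschitz/$m\sigma^2$ argument controls the density evaluated at convex combinations), and then feed the result into Lemma~\ref{lemma:flow}. The combinatorial statistic you guess is also essentially right: the paper groups by $r=s(2l)$, the number of $1$-bits among the first $m-1$ bits, which produces $\binom{m-1}{r}$ and the sign $(-1)^r$.

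The one idea you are missing --- and it is precisely the device that dissolves your stated ``main obstacle'' --- is the even--odd pairing $\sum_k h_k=\sum_{l=0}^{2^{m-1}-1}(h_{2l}+h_{2l+1})$. Note first that your picture of individual terms ``straddling'' $\partial S$ is not quite right: by~\eqref{eq:exp_h_k} every $h_k$ carries the two indicators $1_S(\xv_1)$ and $1_S(\xv_{c_m(k)})$, so after collapsing, each $h_k$ is an integral over $S\times S$ (or $S\times\mathbb{R}^d$ when the last factor is $\Dm$, since then $c_m(k)$ points back to an earlier variable and $\xv_{m+1}$ is integrated freely); no single $h_k$ produces an $S\times S^c$ integral, and no single $h_k$ is $O(\sigma)$ on its own, so you cannot apply~\eqref{eq:flow_one} or~\eqref{eq:flow_two} term by term. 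The indices $2l$ and $2l+1$ differ only in the last bit ($\Dm$ versus $-\Wm$ in the final position), so they collapse to the \emph{same} region pair with opposite signs and kernel variances $s(2l)\sigma^2$ versus $(s(2l)+1)\sigma^2$ and matching total exponent $\alpha+\beta=\alpha'+\beta'=m+1$ --- exactly the hypotheses of the difference formula~\eqref{eq:flow_two}, which yields the $(\sqrt{r+1}-\sqrt{r})$ factor. The single exceptional pair $h_0+h_1=\int_S\int_{\mathbb{R}^d}-\int_S\int_S=\int_S\int_{S^c}$ is where the boundary integral via~\eqref{eq:flow_one} comes from, and it supplies the $r=0$ term of $t(m)$. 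With this pairing in hand, the remaining bookkeeping is exactly the grouping you anticipated, and the error control ($O(m2^m\sigma)$ after dividing by $\sigma$, killed by $m2^m\sigma\to0$) goes through as you describe.
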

\begin{proof}
Using~\eqref{eq:exp_V_exp_g_relation} and~\eqref{eq:exp_g}, and replacing $\onev_R$ with $\onev_S$, we have
\begin{equation}
\frac{1}{\sigma}\E{\frac{1}{n} \onev_S^T \Lm^m \onev_S} = \frac{1}{\sigma} \sum_{k=0}^{2^m-1} h_k + O\left( \frac{m^2C^m}{n\sigma^{md+1}} \right).
\label{eq:exp_sep}
\end{equation}
We pair all even-indexed and odd-indexed terms together to rewrite the summation as:
\begin{equation}
\sum_{k=0}^{2^m-1} h_k = \sum_{l=0}^{2^{m-1}-1} (h_{2l} + h_{2l+1}).
\label{eq:h_pairwise_sum}
\end{equation}
Now, $h_0$ and $h_1$ can be evaluated by repeatedly applying \eqref{eq:K_int1} for every Gaussian kernel in the definition from~\eqref{eq:exp_h_k}. Hence, for the first summation pair, we obtain:
\begin{align}
h_0 + h_1 &= \int_{S} \int_{\mathbb{R}^d} K_{\sigma^2}(\xv,\yv) p^m(\xv) p(\yv) d\xv d\yv \nonumber \\
&\quad - \int_{S} \int_S K_{\sigma^2}(\xv,\yv) p^m(\xv) p(\yv) d\xv d\yv + O(m\sigma^2) \nonumber \\
&= \int_{S} \int_{S^c} K_{\sigma^2}(\xv,\yv) p^m(\xv) p(\yv) d\xv d\yv + O(m\sigma^2).
\label{eq:sep_1st_term}
\end{align}
For the rest of the terms, we also require the use of~\eqref{eq:K_int2}. However, in this case, we encounter several terms of the form $p(\theta \xv + (1-\theta) \yv)$ for some $\theta \in [0,1]$. Since $m\sigma^2 \rightarrow 0$ and $p(\xv)$ is assumed to be Lipschitz continuous, we can approximate such terms by $p(\xv)$ or $p(\yv)$. Further, the number of times we have to apply~\eqref{eq:K_int2} in any $h_k$ is equal to the number of occurrences of $\Wm$ in $\Bm_k$ (which is $s(k)$). Therefore, for $1 \leq l \leq 2^{m-1}-1$, we have
\begin{align}
&h_{2l} + h_{2l+1} \nonumber \\
&\quad= (-1)^{s(2l)}\bigg[ \int_S \int_S K_{s(2l)\sigma^2}(\xv,\yv) p^\alpha(\xv) p^\beta(\yv) d\xv d\yv \nonumber \\
&\quad- \int_S \int_S K_{s(2l+1)\sigma^2}(\xv,\yv) p^{\alpha'}(\xv) p^{\beta'}(\yv) d\xv d\yv \bigg] + O(m\sigma^2),
\label{eq:sep_rth_term}
\end{align}
where $\alpha, \beta, \alpha', \beta'$ are positive integers such that $\alpha + \beta = \alpha' + \beta' = m+1$. Plugging~\eqref{eq:sep_1st_term} and~\eqref{eq:sep_rth_term} into~\eqref{eq:exp_sep}, we get:
\begin{align}
&\frac{1}{\sigma}\E{\frac{1}{n} \onev_S^T \Lm^m \onev_S} \nonumber \\
&= \frac{1}{\sigma} \int_{S} \int_{S^c} K_{\sigma^2}(\xv,\yv) p^m(\xv) p(\yv) d\xv d\yv \nonumber \\
&\quad+ \sum_{r=1}^{m-1} \binom{m-1}{r} (-1)^r \nonumber \\
&\qquad\qquad \frac{1}{\sigma} \bigg[\int_S \int_S K_{r\sigma^2}(\xv,\yv) p^\alpha(\xv) p^\beta(\yv) d\xv d\yv \nonumber \\
&\qquad\qquad\quad- \int_S \int_S K_{(r+1)\sigma^2}(\xv,\yv) p^{\alpha'}(\xv) p^{\beta'}(\yv) d\xv d\yv \bigg] \nonumber \\
&\quad + O(m2^m\sigma) + O\left( \frac{m^2C^m}{n\sigma^{md+1}} \right),
\label{eq:sep_exp}
\end{align}
where we grouped terms based on $r = s(2l)$ in the summation (note that there are $\binom{m-1}{r}$ for a given $r$).

Using Lemma \ref{lemma:flow}, we conclude that the right hand side of \eqref{eq:sep_exp} converges as $n\rightarrow \infty$ and $\sigma \rightarrow 0$ to 
\begin{equation*}
\frac{1}{\sqrt{2\pi}}\int_{\partial S} p^{m+1}(\sv) d\sv + \sum_{r=1}^{m-1} \frac{\sqrt{r+1}-\sqrt{r}}{\sqrt{2\pi}}\int_{\partial S} p^{m+1}(\sv) d\sv,
\end{equation*}
which is the desired result.
\end{proof}

\noindent Using the continuous mapping theorem on~\eqref{eq:conv_bias_sep}, we can conclude
\begin{align}
\left( \frac{1}{\sigma} \E{\frac{1}{n}\onev_S^T \Lm^m \onev_S} \right)^{1/m}  \rightarrow \left( \frac{t(m)}{\sqrt{2\pi}} \int_{\partial S} p^{m+1}(\sv)d\sv \right)^{1/m}.
\end{align}
Finally, we note that as $m \rightarrow \infty$, we have
\begin{equation}
\left( \frac{\frac{t(m)}{\sqrt{2\pi}} \int_{\partial S} p^{m+1}(\sv)d\sv}{ \int_S p(\xv)d\xv} \right)^{1/m}
\longrightarrow
\sup_{\sv \in \partial S} p(\sv).
\end{equation}
Therefore, we conclude for the separable model
\begin{equation}
\frac{1}{\sigma^{1/m}} \omega_m(\onev_S) \rightarrow \sup_{\sv \in \partial S} p(\sv).
\end{equation}
\subsection{Convergence of bias term for the nonseparable model}

For the nonseparable model, we need to prove convergence of $\E{\frac{1}{n}\onev_A^T \Lm^m \onev_A}$. This is illustrated in the following lemma:
\begin{lemma}
\label{lem:bias_nonseparable}
As $n \rightarrow \infty$, $\sigma \rightarrow 0$ such that $m2^m\sigma^2 \rightarrow 0$ and $\frac{m^2C^m}{n\sigma^{md}}\rightarrow 0$, we have
\begin{equation}
\E{\frac{1}{n}\onev_A^T \Lm^m \onev_A} \rightarrow \int \alpha_A \alpha_{A^c} p_A(\xv) p_{A^c}(\xv) p^{m-1}(\xv) d\xv.
\end{equation}
\end{lemma}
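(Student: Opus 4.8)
The plan is to mirror the structure of the proof of Lemma~\ref{lem:bias_separable}, but replace the surface-integral machinery of Lemma~\ref{lemma:flow} with an argument that does not need $\sigma\to 0$ to produce a boundary term. Starting from \eqref{eq:exp_V_exp_g_relation} and \eqref{eq:exp_g} with $\onev_R$ replaced by $\onev_A$, I would write
\begin{equation*}
\E{\tfrac{1}{n}\onev_A^T \Lm^m \onev_A} = \sum_{k=0}^{2^m-1} h_k + O\!\left(\tfrac{m^2 C^m}{n\sigma^{md}}\right),
\end{equation*}
where now each $h_k$ is an $(m{+}1)$-fold integral against the \emph{mixture} density $p = \alpha_A p_A + \alpha_{A^c} p_{A^c}$, and the two "boundary" factors are $1_R$-type weights. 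Here is the key difference from the separable case: $\onev_A$ has no continuum indicator function, so the factors $(\onev_A)_{i_1}$ and $(\onev_A)_{i_{c_m(k)}}$ must be handled at the level of the expectation. Conditioning on the point locations, $\E{(\onev_A)_{i_1}\mid \Xm_{i_1}=\xv_1} = \alpha_A p_A(\xv_1)/p(\xv_1)$ when $i_1$ is the index carrying the first indicator, and likewise for the last index; these conditional class-membership probabilities are what make the $p_A p_{A^c}$ product appear. So I would first rewrite $h_k$ as an integral where the outer two density factors $p(\xv_1)\,p(\xv_{c_m(k)})$ get replaced by $\alpha_A p_A(\xv_1)$ and either $\alpha_A p_A$ or $\alpha_{A^c}p_{A^c}$ at $\xv_{c_m(k)}$, depending on whether $c_m(k)=1$ or not.

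Next I would collapse the kernel chain. Exactly as in the separable proof, repeated application of Lemma~\ref{lem:kernelone} and Lemma~\ref{lem:kerneltwo} reduces every $h_k$ to a two-point integral of the form $\pm\int\!\int K_{s(k)\sigma^2}(\xv,\yv)\,(\text{density product})\,d\xv\,d\yv$, with the convex-combination arguments inside $p$ approximated by $\xv$ or $\yv$ at cost $O(m\sigma^2)$ (valid since $m2^m\sigma^2\to 0$), and a leftover error $O(m2^m\sigma^2)$ from summing these across all $2^m$ terms. Pairing even/odd indices as in \eqref{eq:h_pairwise_sum}, the pair $h_0+h_1$ should give the only term whose two indicator endpoints are \emph{different} endpoints: one carries $\alpha_A p_A$ and the other, because $b_1(1)=1$ forces $c_m(1)=$ the last index reached from a $\Wm$, carries $\alpha_{A^c}p_{A^c}$, producing
\begin{equation*}
h_0 + h_1 \;=\; \int\!\!\int K_{\sigma^2}(\xv,\yv)\,\alpha_A p_A(\xv)\,p^{m-1}(\xv)\,\alpha_{A^c}p_{A^c}(\yv)\,d\xv\,d\yv + O(m\sigma^2),
\end{equation*}
up to which of $\xv,\yv$ carries the extra $p^{m-1}$ factor. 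The remaining $2^{m-1}-1$ pairs $h_{2l}+h_{2l+1}$ should telescope in the same way they did in \eqref{eq:sep_rth_term}: each pair is a difference of two kernels with bandwidths $s(2l)\sigma^2$ and $s(2l{+}1)\sigma^2=(s(2l){+}1)\sigma^2$ against the \emph{same} density product, and the grouping by $r=s(2l)$ gives a factor $\binom{m-1}{r}(-1)^r$. The crucial point — and the main obstacle — is that in the nonseparable case I do \emph{not} divide by $\sigma$ and do \emph{not} invoke Lemma~\ref{lemma:flow}; instead, as $\sigma\to 0$ each two-point integral $\int\!\int K_{c\sigma^2}(\xv,\yv)\,q(\xv,\yv)\,d\xv\,d\yv$ converges by Lemma~\ref{lem:kernelone} (applied once more, treating the inner integral over $\yv$) to $\int q(\xv,\xv)\,d\xv$, \emph{independently of the constant $c$}. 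Hence every difference-of-kernels pair $h_{2l}+h_{2l+1}$ converges to $0$, and only $h_0+h_1$ survives.

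So the workflow is: (i) express each $h_k$ with the correct conditional-membership endpoint factors; (ii) collapse the kernel chain via Lemmas~\ref{lem:kernelone}--\ref{lem:kerneltwo}, tracking the $O(m\sigma^2)$-per-application and $O(m2^m\sigma^2)$-total errors; (iii) pair even/odd indices, observe that all pairs except $(h_0,h_1)$ are differences of kernels against a common density product; (iv) take $\sigma\to 0$, using that $\int\!\int K_{c\sigma^2}(\xv,\yv)q(\xv,\yv)\,d\xv\,d\yv \to \int q(\xv,\xv)\,d\xv$ for any constant $c$, so the differences vanish and $h_0+h_1 \to \int \alpha_A\alpha_{A^c}p_A(\xv)p_{A^c}(\xv)p^{m-1}(\xv)\,d\xv$; (v) check that the error terms $O(m2^m\sigma^2)$ and $O(m^2C^m/(n\sigma^{md}))$ vanish under the stated rate conditions. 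I expect step~(i) to be the delicate one: keeping straight which of the $m{+}1$ integration variables carries each of the two indicator factors for a given $k$ (this is governed by whether $c_m(k)=1$, i.e. by whether bit $b_1(k),\dots$ ever equals $1$), and verifying that for $k\in\{0,1\}$ the two endpoints land on distinct variables with opposite class labels so that the $p_A p_{A^c}$ cross term is produced with the right sign after the $(-1)^{s(k)}$ bookkeeping. The rest is a routine adaptation of the separable-case algebra with $1/\sigma$ scaling removed.
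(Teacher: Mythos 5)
Your overall route is the paper's: expand $\E{\frac{1}{n}\onev_A^T \Lm^m \onev_A}$ via \eqref{eq:exp_V_exp_g_relation}--\eqref{eq:exp_g}, collapse the kernel chains with Lemmas~\ref{lem:kernelone}--\ref{lem:kerneltwo}, pair even/odd indices so that only $h_0+h_1$ survives while each pair $h_{2l}+h_{2l+1}$ with $l\geq 1$ is $O(m\sigma^2)$ (because the limiting two-point integrals are independent of the kernel width and the signs are opposite), and kill the $O(m2^m\sigma^2)$ and $O\left(m^2C^m/(n\sigma^{md})\right)$ residuals with the rate conditions. Steps (ii)--(v) match the paper essentially verbatim.

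However, your step (i) --- which you yourself flag as the delicate one --- states an incorrect rule for the endpoint factors, and applied literally it does not produce your own claimed expression for $h_0+h_1$. The quadratic form is $\onev_A^T \Bm_k \onev_A$, so \emph{both} indicator endpoints are memberships in class $A$: conditioning on positions, $\E{(\onev_A)_i \mid \Xm_i = \xv} = \alpha_A p_A(\xv)/p(\xv)$ for \emph{every} endpoint; an $\alpha_{A^c}p_{A^c}$ factor never appears inside any single $h_k$. Concretely, when $c_m(k)\neq 1$ the two endpoints are distinct independent points and \emph{both} density factors become $\alpha_A p_A$; when $c_m(k)=1$ the endpoints coincide, $(\onev_A)_{i_1}^2 = (\onev_A)_{i_1}$, and the single factor $p(\xv_1)$ becomes $\alpha_A p_A(\xv_1)$. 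Hence
\begin{equation*}
h_0 = \int \alpha_A p_A(\xv)\, p^{m}(\xv)\, d\xv + O(m\sigma^2), \qquad h_1 = -\int \left(\alpha_A p_A(\xv)\right)^2 p^{m-1}(\xv)\, d\xv + O(m\sigma^2),
\end{equation*}
and the cross term emerges only \emph{after} adding these, via the identity $p - \alpha_A p_A = \alpha_{A^c} p_{A^c}$. Your rule (put $\alpha_{A^c}p_{A^c}$ at $\xv_{c_m(k)}$ whenever $c_m(k)\neq 1$) would instead give $h_1 \to -\int \alpha_A\alpha_{A^c} p_A p_{A^c} p^{m-1}\,d\xv$, so that $h_0+h_1 \to \int \alpha_A p_A\, p^{m-1}(p - \alpha_{A^c}p_{A^c})\,d\xv = \int (\alpha_A p_A)^2 p^{m-1}\,d\xv$, which is not the stated limit; the claimed cross-term form of $h_0+h_1$ in your proposal is therefore reached only by asserting it, not by your rule. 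Once the endpoint rule is corrected, the rest of your argument goes through and coincides with the paper's proof.
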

\begin{proof}
Similar to the proof of Lemma~\ref{lem:bias_separable}, we use~\eqref{eq:exp_V_exp_g_relation} and~\eqref{eq:exp_g}, and replace $\onev_R$ with $\onev_A$ to obtain
\begin{equation}
\E{\frac{1}{n} \onev_A^T \Lm^m \onev_A} = \sum_{l=0}^{2^{m-1}-1} \left( h_{2l} + h_{2l+1} \right) + O\left( \frac{m^2C^m}{n\sigma^{md}} \right).
\label{eq:exp_nonsep}
\end{equation}
Using~\eqref{eq:K_int1} repeatedly in the definition~\eqref{eq:exp_h_k}, we get
\begin{align}
h_0 + h_1 &= \int \alpha p_A(\xv) p^{m}(\xv) d\xv  \nonumber \\
&\quad - \int \left( \alpha p_A(\xv) \right)^2 p^{m-1}(\xv) d\xv + O(m\sigma^2) \nonumber \\
&= \int \alpha_A \alpha_{A^c} p_A(\xv) p_{A^c}(\xv) p^{m-1}(\xv) d\xv + O(m\sigma^2),
\label{eq:nonsep_first_pair}
\end{align}
where we used the fact that $p(\xv) = \alpha_A p_A(\xv) + \alpha_{A^c} p_{A^c}(\xv)$.
Similarly, for $1 \leq l \leq 2^{m-1}-1$, we have
\begin{align}
h_{2l} + h_{2l+1} = O\left(m\sigma^2 \right). 
\label{eq:nonsep_remaining_pairs}
\end{align}
Putting together~\eqref{eq:nonsep_first_pair} and~\eqref{eq:nonsep_remaining_pairs} into~\eqref{eq:exp_nonsep}, we get
\begin{align}
\E{\frac{1}{n} \onev_A^T \Lm^m \onev_A} &= \int \alpha_A \alpha_{A^c} p_A(\xv) p_{A^c}(\xv) p^{m-1}(\xv) d\xv \nonumber \\
&\quad + O(m2^m\sigma^2) + O\left( \frac{m^2C^m}{n\sigma^{md}} \right).
\end{align}
Taking limits while satisfying the stated rate conditions, we get the desired result.
\end{proof}
\noindent We finally note that as $m \rightarrow \infty$, we have
\begin{equation}
\left( \frac{\int \alpha_A \alpha_{A^c} p_A(\xv) p_{A^c}(\xv) p^{m-1}(\xv) d\xv}{\int_A p(\xv)d\xv} \right)^{1/m} \xrightarrow{\;\;\text{s.}\;} \sup_{\xv \in \partial A} p(\xv).
\end{equation}
Therefore, we conclude for the nonseparable model
\begin{equation}
\omega_m(\onev_A) \rightarrow \sup_{\xv \in \partial A} p(\xv).
\end{equation}
Note that Lemma~\ref{lem:bias_nonseparable} for the special case of $m = 1$ yields
\begin{equation}
\frac{1}{n} \onev_A^T \Lm \onev_A \to \int \alpha_A \alpha_{A^c} p_A(\xv) p_{A^c}(\xv) d\xv,
\end{equation}
which proves Theorem~\ref{thm:cut_nonsep}.
\subsection{Proof of Theorem~\ref{thm:esd}}
\label{sec:proof_esd}
We begin by recalling the definition of the empirical spectral distribution (ESD) of $\Lm$:
\begin{equation}
\mu_n(x) := \frac{1}{n} \sum_{i=1}^n \delta(x - \lambda_i),
\end{equation}
where $\{\lambda_i\}$ are the eigenvalues of $\Lm$.
For each $x$, $\mu_n(x)$ is a function of $\Xm_1, \dots, \Xm_n$, and thus a random variable. 
%
Note that the fraction of eigenvalues of $\Lm$ below a constant $t$, and its expected value can be computed from the ESD as
\begin{align}
\frac{1}{n} \mathcal{N}_\Lm(t) &= \int_0^t \mu_n(x) dx, \\
\E{\frac{1}{n} \mathcal{N}_\Lm(t)} &= \int_0^t \E{\mu_n(x)} dx.
\end{align}
Therefore, to understand the behavior of the expected fraction of eigenvalues of $\Lm$ below $t$, we need to analyze the convergence of the expected ESD in the asymptotic limit.
The idea is to show the convergence of the moments of $\E{\mu_n(x)}$ to the moments of a limiting distribution $\mu(x)$. Then, by a standard convergence result, $\E{\mu_n(I)} \rightarrow \mu(I)$ for intervals $I$. More precisely, let the $\Rightarrow$ symbol denote weak convergence of measures, then we use the following result that follows from the Weierstrass approximation theorem:
\begin{lemma}
Let $\mu_n$ be a sequence of probability measures and $\mu$ be a compactly supported probability measure. If $\int x^m \mu_n(dx) \rightarrow \int x^m \mu(dx)$ for all $m \geq 1$, then $\mu_n \Rightarrow \mu$.
\end{lemma}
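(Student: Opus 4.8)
The plan is to prove this by the classical method of moments, reducing everything to the fact that a compactly supported measure is determined by its moments (which is precisely where the Weierstrass approximation theorem enters). First I would fix $R > 0$ so that $\mu$ is supported on $[-R,R]$, so that $\int x^{2k}\,\mu(dx) \le R^{2k}$ for every $k$. Since $\int x^2\,\mu_n(dx) \to \int x^2\,\mu(dx) < \infty$, the second moments $\int x^2\,\mu_n(dx)$ are uniformly bounded in $n$, and Chebyshev's inequality then gives tightness of the family $\{\mu_n\}$. By Prokhorov's theorem, every subsequence of $\{\mu_n\}$ admits a further subsequence $\mu_{n_j}$ converging weakly to some probability measure $\nu$; by the usual subsequence principle, it suffices to show $\nu = \mu$ for every such subsequential limit, as this forces $\mu_n \Rightarrow \mu$.

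To identify $\nu$, fix $m \ge 1$. Along $\mu_{n_j}$ the moments $\int |x|^{2m}\,\mu_{n_j}(dx)$ converge, hence are bounded, which makes the functions $x \mapsto x^m$ uniformly integrable with respect to $\{\mu_{n_j}\}$; combined with $\mu_{n_j} \Rightarrow \nu$ this yields $\int x^m\,\mu_{n_j}(dx) \to \int x^m\,\nu(dx)$. By hypothesis $\int x^m\,\mu_{n_j}(dx) \to \int x^m\,\mu(dx)$, so $\int x^m\,\nu(dx) = \int x^m\,\mu(dx)$ for all $m \ge 1$ (and trivially for $m=0$). In particular $\int x^{2k}\,\nu(dx) = \int x^{2k}\,\mu(dx) \le R^{2k}$, so for any $R' > R$, Chebyshev gives $\nu(\{|x| > R'\}) \le R^{2k}/(R')^{2k} \to 0$ as $k \to \infty$; hence $\nu$ is also supported on $[-R,R]$. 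Now both $\mu$ and $\nu$ are probability measures on the compact interval $[-R,R]$ with identical moments, so $\int P\,d\mu = \int P\,d\nu$ for every polynomial $P$; by the Weierstrass approximation theorem polynomials are dense in $C([-R,R])$, so $\int f\,d\mu = \int f\,d\nu$ for all $f \in C([-R,R])$, and therefore $\nu = \mu$. Since every subsequential weak limit equals $\mu$, we conclude $\mu_n \Rightarrow \mu$.

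The only real subtlety, and the step I expect to need the most care, is the passage of moments through the weak limit: weak convergence by itself does not preserve moments, so one must separately exploit boundedness of a higher moment to obtain uniform integrability of $x \mapsto x^m$ before concluding $\int x^m\,\mu_{n_j}(dx) \to \int x^m\,\nu(dx)$. The remaining ingredients (tightness via Chebyshev, compact support of $\nu$ from its moment bounds, and polynomial approximation) are routine.
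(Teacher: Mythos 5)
Your proof is correct, and it follows the same route the paper indicates: the paper states this lemma without proof, remarking only that it ``follows from the Weierstrass approximation theorem,'' and your argument is the standard method-of-moments proof that supplies the missing details (tightness via Chebyshev, Prokhorov plus the subsequence principle, uniform integrability to pass moments to the subsequential limit, and Weierstrass to identify that limit with $\mu$). You correctly flag the one genuinely delicate step---that weak convergence alone does not transport moments, so the bounded higher moments must be used to get uniform integrability---and handle it properly.
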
 
We then use the following result on equivalence of different notions of weak convergence of measures~\cite[Theorem 25.2]{billingsley_95} in order to prove our result for cumulative distribution functions.
\begin{lemma}
$\mu_n \Rightarrow \mu$ if and only if $\mu_n(A) \rightarrow \mu(A)$ for every $\mu$-continuity set $A$.
\end{lemma}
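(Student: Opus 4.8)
The plan is to prove both implications of this equivalence, which is one form of the portmanteau theorem, by passing back and forth between bounded continuous test functions (the definition of $\mu_n \Rightarrow \mu$) and indicators of sets, with all the work concentrated on controlling the discrepancy carried by boundaries. Throughout, $\mu_n$ and $\mu$ are probability measures on $\mathbb{R}$, a $\mu$-continuity set is a Borel set $A$ with $\mu(\partial A) = 0$, and I write $\mathrm{dist}(x,F) = \inf_{y \in F}|x-y|$.

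First I would prove necessity, that $\mu_n \Rightarrow \mu$ forces $\mu_n(A) \to \mu(A)$ on every continuity set, by way of the two one-sided estimates $\limsup_n \mu_n(F) \le \mu(F)$ for closed $F$ and $\liminf_n \mu_n(G) \ge \mu(G)$ for open $G$. For closed $F$ I introduce the Lipschitz approximants $f_\epsilon(x) = \max(0,\, 1 - \mathrm{dist}(x,F)/\epsilon)$, which are bounded, continuous, and satisfy $\mathbf{1}_F \le f_\epsilon \le \mathbf{1}_{F^\epsilon}$ where $F^\epsilon = \{x : \mathrm{dist}(x,F) < \epsilon\}$ is open. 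Weak convergence then gives $\limsup_n \mu_n(F) \le \lim_n \int f_\epsilon\, d\mu_n = \int f_\epsilon\, d\mu \le \mu(F^\epsilon)$, and letting $\epsilon \downarrow 0$ with $F^\epsilon \downarrow F$ (here closedness of $F$ and continuity of $\mu$ from above are used) yields $\limsup_n \mu_n(F) \le \mu(F)$; the open-set bound is its complement. For a $\mu$-continuity set $A$ the chain $A^\circ \subseteq A \subseteq \overline{A}$ together with $\mu(\overline A) = \mu(A) = \mu(A^\circ)$, which holds because $\mu(\partial A) = 0$, sandwiches $\mu(A) = \mu(A^\circ) \le \liminf_n \mu_n(A^\circ) \le \liminf_n \mu_n(A) \le \limsup_n \mu_n(A) \le \limsup_n \mu_n(\overline A) \le \mu(\overline A) = \mu(A)$, giving $\mu_n(A) \to \mu(A)$.

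For sufficiency I assume $\mu_n(A) \to \mu(A)$ on every continuity set and must recover $\int f\, d\mu_n \to \int f\, d\mu$ for bounded continuous $f$; after an affine normalization I may take $0 \le f \le 1$. The device is the layer-cake identity $\int f\, d\nu = \int_0^1 \nu(\{f > t\})\, dt$, valid for any probability measure $\nu$. Each superlevel set $\{f > t\}$ is open, and its boundary is contained in the level set $\{f = t\}$, so $\{f > t\}$ is a $\mu$-continuity set whenever $\mu(\{f = t\}) = 0$. The hard part will be exactly this step: I must argue that $\{f > t\}$ is a $\mu$-continuity set for Lebesgue-almost every threshold $t$. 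This follows because the level sets $\{f = t\}$ over $t \in [0,1]$ are pairwise disjoint and $\mu$ is a finite measure, so only countably many of them can carry positive mass, whence $\mu(\{f=t\}) = 0$ for almost all $t$. For such $t$ the hypothesis gives $\mu_n(\{f>t\}) \to \mu(\{f>t\})$, and since these functions of $t$ are bounded by $1$ on the finite-measure interval $[0,1]$, the dominated convergence theorem transfers the convergence through the $t$-integral, yielding $\int f\, d\mu_n = \int_0^1 \mu_n(\{f>t\})\, dt \to \int_0^1 \mu(\{f>t\})\, dt = \int f\, d\mu$, i.e.\ $\mu_n \Rightarrow \mu$.

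The remaining ingredients, namely continuity of the finite measure $\mu$ from above, the one-Lipschitz approximants $f_\epsilon$, and the dominated convergence step, are routine; the conceptual crux is the almost-every-level continuity-set argument, which is what converts the set-wise hypothesis into integral convergence. In the paper's application $\mu$ is compactly supported, so all integrals involved are finite and the reduction to $0 \le f \le 1$ entails no loss of generality.
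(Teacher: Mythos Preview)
Your proof is correct and is essentially the standard portmanteau argument one finds in Billingsley. Note, however, that the paper does not actually prove this lemma: it simply cites it as \cite[Theorem~25.2]{billingsley_95} and moves on. So there is no paper proof to compare against; you have supplied a full proof where the authors relied on a reference. Your argument matches the textbook route: Lipschitz bump functions to get the closed-set/open-set half of the portmanteau theorem for the forward direction, and the layer-cake representation combined with the observation that at most countably many level sets $\{f=t\}$ can carry positive $\mu$-mass for the converse.
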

Therefore, we simply need to analyze the convergence of moments of $\E{\mu_n(x)}$.
Note that the $m^\text{th}$ moment of $\E{\mu_n(x)}$ can be written as:
\begin{equation}
\int x^m \E{\mu_n(x)} dx = \frac{1}{n} \sum_{i=1}^{n} \E{\lambda_i^m} =  \E{ \frac{1}{n}\Tr{\Lm^m} }.
\label{eq:mom_exp_esd}
\end{equation}
We reuse our analysis in Section~\ref{sec:expansion_of_V}, specifically the expansion in~\eqref{eq:V_expansion} to obtain
\begin{equation}
\frac{1}{n} \Tr{\Lm^m} = \frac{1}{n} \Tr{\frac{1}{n^m} (\Dm-\Wm)^m} = \frac{1}{n^{m+1}} \sum_{k=0}^{2^m-1} \Tr{\Bm_k}.
\end{equation}
Using the binary representation of $k$ once again similar to~\eqref{eq:b_k_expression}, we can compute:
\begin{align}
\Tr{\Bm_0} &= \sum_{i_1,\dots,i_{m+1}} \Big[ w_{i_1 i_2} w_{i_1 i_3} w_{i_1 i_4} \dots  w_{i_1 i_{m}} w_{i_1 i_{m+1}} \Big], \nonumber \\
\Tr{\Bm_1} &= \sum_{i_1,\dots,i_{m}} \Big[ w_{i_1 i_2} w_{i_1 i_3} w_{i_1 i_4} \dots  w_{i_1 i_{m}} w_{i_m i_1} \Big], \nonumber \\
&\quad\vdots \nonumber \\
\Tr{\Bm_k} &= \sum_{i_1,\dots,i_{m}} \Big[ w_{i_1 i_2} w_{i_{c_1(k)} i_3} w_{i_{c_2(k)} i_4} \dots  \nonumber \\
&\qquad\qquad\qquad \dots w_{i_{c_{m-2}(k)} i_{m}} w_{i_{c_{m-1}(k)} i_1} \Big].
\end{align}
Note that $\Tr{\Bm_k}$ has a summation over $m$ indices for $k>1$, as a result, a factor of $\frac{1}{n}$ remains in the expectation.
Similarly, terms with repeated indices disappear and thus, we have the following for the right hand side of \eqref{eq:mom_exp_esd} as $n\rightarrow \infty$:
\begin{align}
&\E{ \frac{1}{n}\Tr{\Lm^m} } \nonumber \\
&= \bigg[ \int \left( \int K(\xv_{1},\xv_{2}) p(\xv_{2}) d\xv_{2} \right) \dots \nonumber \\
&\quad \dots \left( \int K(\xv_{1},\xv_{m+1}) p(\xv_{m+1}) d\xv_{m+1} \right) p(\xv_{1}) d\xv_{1} \bigg].
\end{align}
Using \eqref{eq:K_int1} repeatedly in the equation above, we get:
\begin{align}
\E{ \frac{1}{n}\Tr{\Lm^m} } &= \int p^{m+1}(\xv)d\xv + O\left(m\sigma^2 \right).
\end{align}
Therefore, as $n \rightarrow \infty$ and $\sigma \rightarrow 0$, we have:
\begin{align}
\int x^m \E{\mu_n(x)} dx \rightarrow \int p^{m}(\xv)p(\xv)d\xv.
\end{align}
From the right hand side of the equation above, we conclude that the $m^\text{th}$ moment of the expected ESD of $\Lm$ converges to the $m^\text{th}$ moment of the distribution of a random variable $Y = p(\Xm)$, where $p(\xv)$ is the probabilty density function of $\Xm$.
Moreover, since $p_Y(y)$ has compact support, $\E{\mu_n(x)}$ converges weakly to the probability density function of $p_Y(y)$. Hence, the following can be said about the expected fraction of eigenvalues of $\Lm$:
\begin{align}
&\E{\frac{1}{n} \mathcal{N}_\Lm(t)} = \int_0^t \E{\mu_n(x)}dx \nonumber \\
&\quad\quad\quad\quad \xrightarrow{\;\; \text{s.} \;} \int_0^t p_Y(y) dy = \int_{p(\xv)\leq t} p(\xv)d\xv.
\end{align}
This proves our claim in Theorem~\ref{thm:esd}. Note that, to prove the stochastic convergence of the fraction itself rather than its expected value, we would need a condition similar to those in Theorems~\ref{thm:conv_sep} and~\ref{thm:conv_nonsep} to hold for each moment. In that case, $\sigma$ will go to 0 in a prohibitively slow fashion. We believe that this is an artifact of the methods we employ for proving the result. Hence, our conjecture is that the convergence result holds for $\frac{1}{n} \mathcal{N}_\Lm(t)$ itself, and we leave the analysis of this statement for future work.

\section{Numerical validation}
\label{sec:exp}

We now present simple numerical experiments\footnote{Link to code: \url{https://github.com/aamiranis/asymptotics_graph_ssl}} to validate our results and demonstrate their usefulness in practice.
A key focus in our experiments is to confirm Conjecture~\ref{conjecture:bw}, i.e., the convergence results for the bandwidth estimates also hold for the actual bandwidths. In order to achieve this, we work directly with the bandwidths of the indicators instead of their estimates and numerically validate their convergence for both the separable and nonseparable models.

For simulating the separable model, we first consider a data distribution based on a 2D Gaussian Mixture Model (GMM) with two Gaussians: $\mu_1 = [-1, \; 0], \Sigma_1 = 0.25\Id$ and $\mu_2 = [1 ,\; 0], \Sigma_2 = 0.16\Id$, and mixing proportions $\alpha_1 = 0.4$ and $\alpha_2 = 0.6$ respectively. 
The probability density function is illustrated in Figure~\ref{fig:density}. 
Next, we evaluate the claim of Theorem~\ref{thm:conv_sep} on five boundaries, described in Table~\ref{tab:boundaries}. 
These boundaries are depicted in Figure~\ref{fig:boundaries} and are illustrative of typical separation assumptions such as linear or non-linear and low or high density.

\begin{table}[h]
\caption{Illustrative boundaries used in the separable model.} \label{tab:boundaries}
\vspace{0.1in}
\begin{center}
\begin{tabular*}{0.9\linewidth}{@{\extracolsep{\fill} }  c  c  c  }
\hline
Boundary         & Description     & $\sup_{\sv \in \partial S} p(\sv)$ \\
\hline
$\partial S_1$   & $x = 0$         &  0.0607 \\
$\partial S_2$   & $x = -1$        &  0.2547 \\
$\partial S_3$   & $x = y^2 - 1$   &  0.2547 \\
$\partial S_4$   & $y = 0$         &  0.5969 \\
$\partial S_5$   & $x^2 + y^2 = 1$ &  0.5969 \\
\hline
\end{tabular*}
\end{center}
\end{table}

For simulating the nonseparable model, we first construct the following smooth (twice-differentiable) 2D probability density function
\begin{align}
q(x,y) = \begin{cases} \frac{3}{\pi} \left[ 1 - (x^2 + y^2) \right]^2, \quad & x^2 + y^2 \leq 1 \\ 0, \quad & x^2 + y^2 > 1 \end{cases}
\end{align}
Note that data points $(X,Y)$ can be sampled from this distribution by setting the coordinates $X = \sqrt{1 - U^{1/4}} \cos(2\pi V)$, $Y = \sqrt{1 - U^{1/4}} \sin(2\pi V)$, where $U,V \sim \text{Uniform}(0,1)$. 
We then use $q(x,y)$ to define a nonseparable 2D model with mixture density $p(x,y) = \alpha_A p_A(x,y) + \alpha_{A^c} p_{A^c}(x,y)$, where $p_A(x,y) = q(x-0.75,y)$, $p_{A^c}(x,y) = q(x+0.75,y)$ and $\alpha_A = \alpha_{A^c} = 0.5$. The probability density function is illustrated in Figure~\ref{fig:density}. The overlap region or boundary $\partial A$ for this model is given by 
\begin{align}
\partial A &= \big\{ (x,y) : (x-0.75)^2 + y^2 < 1 \nonumber \\
&\qquad\qquad\qquad \text{ and } (x+0.75)^2 + y^2 < 1 \big\}.
\end{align}
Further, for this model, we have $\sup_{\partial A} p(\xv) = 0.2517$.
\begin{figure}[t]
\begin{center}
\begin{subfigure}{0.99\linewidth}
\centering
\includegraphics[width=0.75\linewidth]{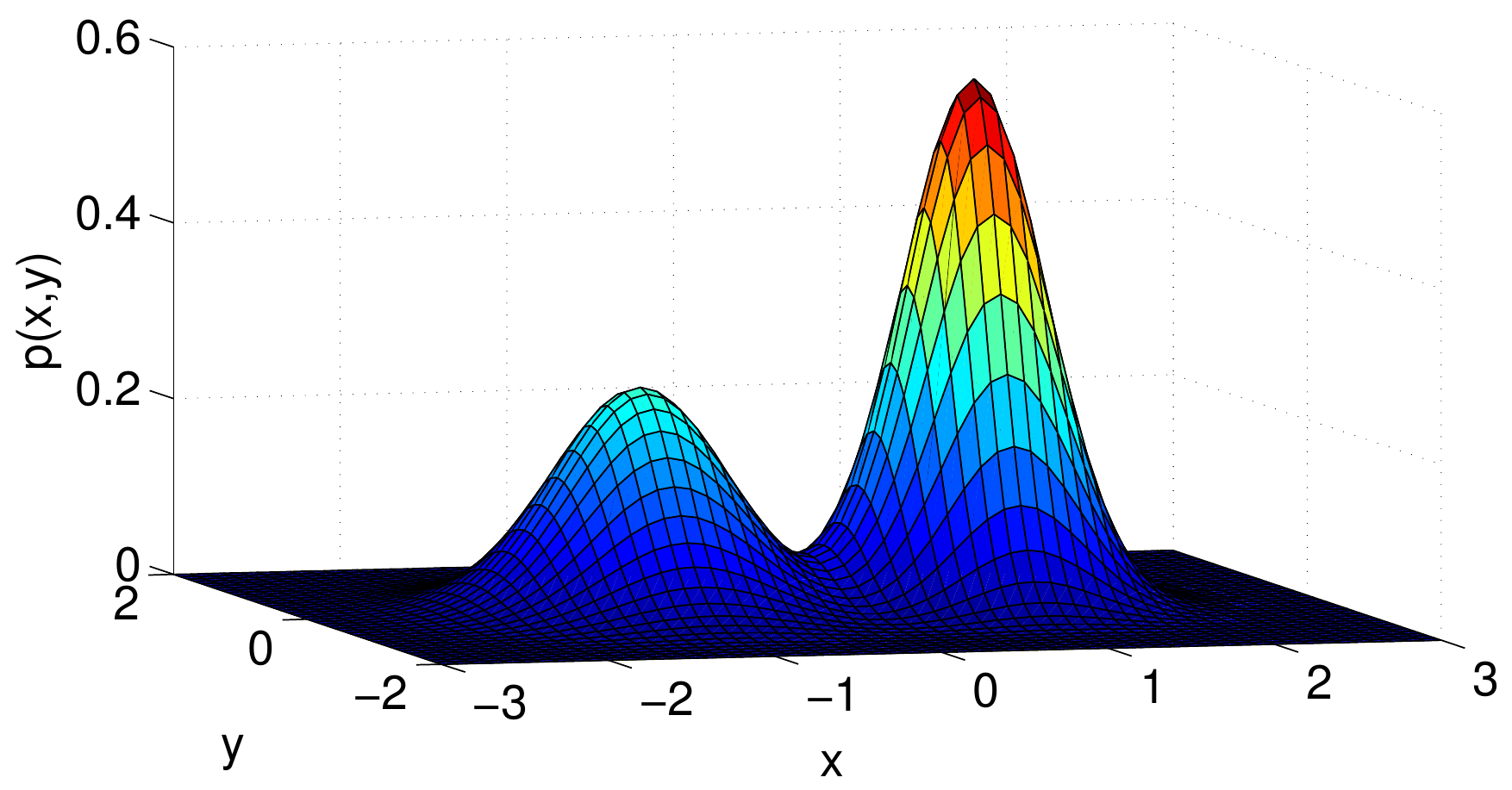}
\caption{}
\end{subfigure}
\\
\begin{subfigure}{0.99\linewidth}
\centering
\includegraphics[width=0.75\linewidth]{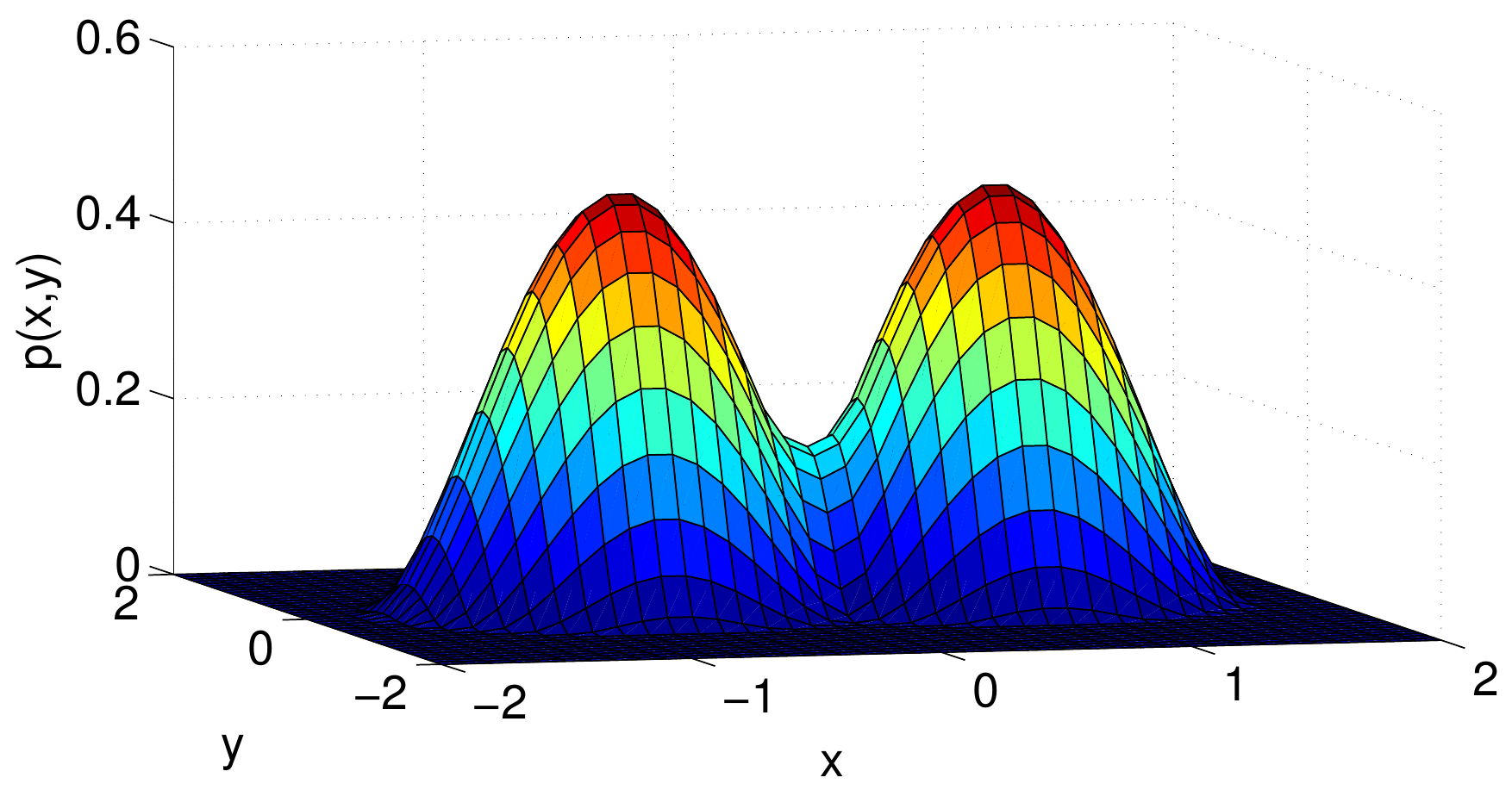}
\caption{}
\end{subfigure}
\end{center}
\caption{Probability density functions to generate data for (a) separable model, (b) nonseparable model.}
\label{fig:density}
\end{figure}
\begin{figure}[t]
\centerline{\includegraphics[width=0.55\linewidth]{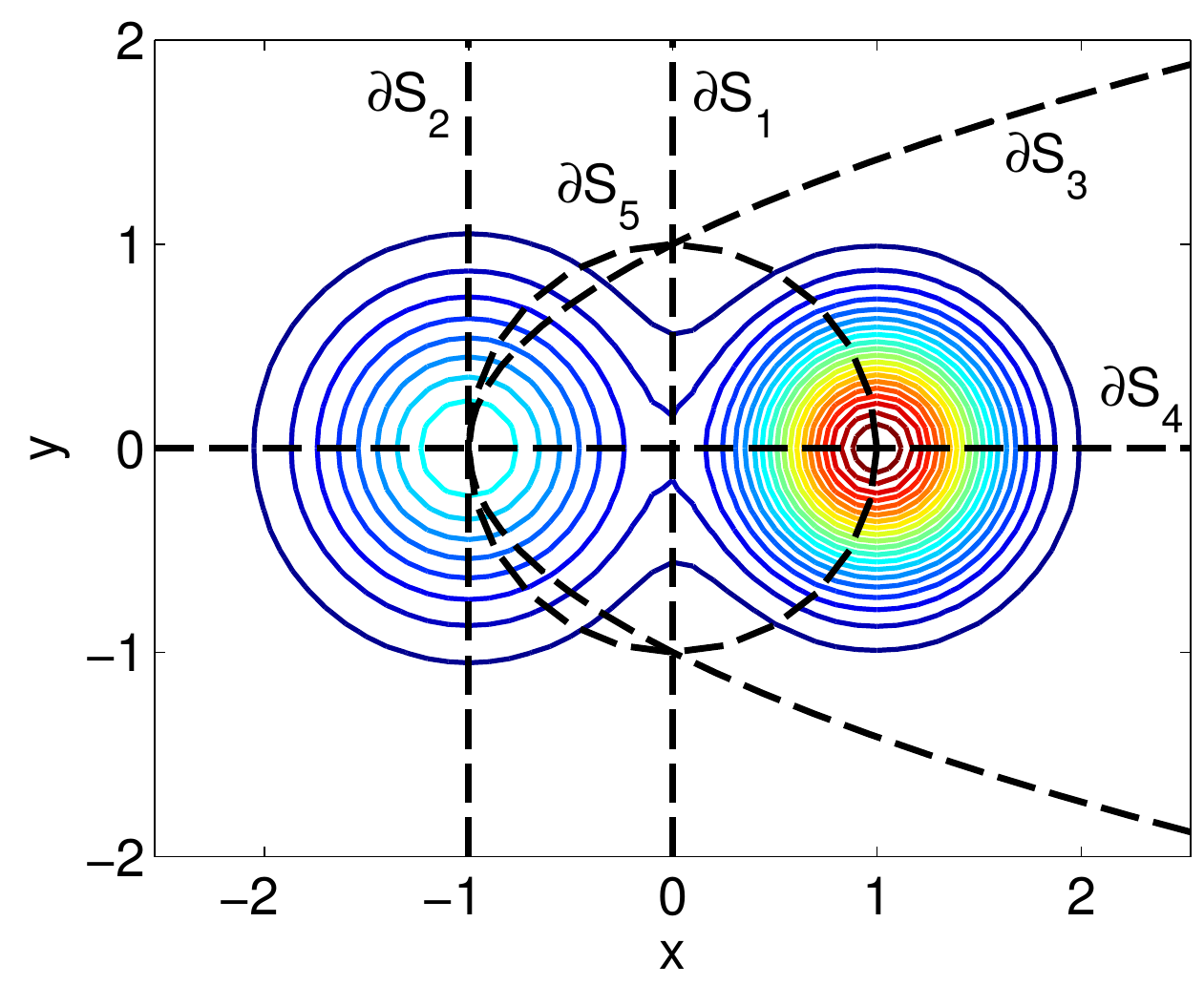}}
\caption{Boundaries $\{ \partial S_i\}$ considered in the separable model.}
\label{fig:boundaries}
\end{figure}
\par
In our first experiment, we validate the statements of Theorems~\ref{thm:conv_sep} and~\ref{thm:conv_nonsep} by comparing the left and right hand sides of \eqref{eq:bwresult_sep} and \eqref{eq:bwresult_nonsep} for corresponding boundaries. 
This is carried out in the following way: we draw $n = 2500$ points from each model and construct the corresponding similarity graphs using $\sigma = 0.1$.
Then, for the boundaries $\partial S_i$ in the separable model and $\partial A$ in the nonseparable model, we carry out the following steps:
\begin{enumerate}[itemsep = 0pt, topsep =1pt]
\item We first construct the indicator functions $\onev_{S_i}$ and $\onev_A$ on the corresponding graphs.
\item We then compute the empirical bandwidth $\omega(\onev_{S_i})$ and $\omega(\onev_{A})$ in a manner that takes care of numerical error: we first obtain the eigenvectors of the corresponding $\Lm$, then set $\omega(\onev_{S_i})$ and $\omega(\onev_{A})$ to be $\nu$ for which energy contained in the graph Fourier coefficients corresponding to eigenvalues $\lambda_j > \nu$ is at most $0.01\%$, i.e., 
\begin{align}
\omega(\onev_{S_i}) &= \min \big\{\nu \big| \sum_{j:\lambda_j > \nu} \left(\uv_j^T \onev_{S_i} \right)^2 \leq 10^{-4}\big\} \\
\omega(\onev_{A}) &= \min \big\{\nu \big| \sum_{j:\lambda_j > \nu} \left(\uv_j^T \onev_{A} \right)^2 \leq 10^{-4}\big\}.
\end{align}
\end{enumerate}
\begin{figure}[t]
\centerline{\includegraphics[width=0.9\linewidth]{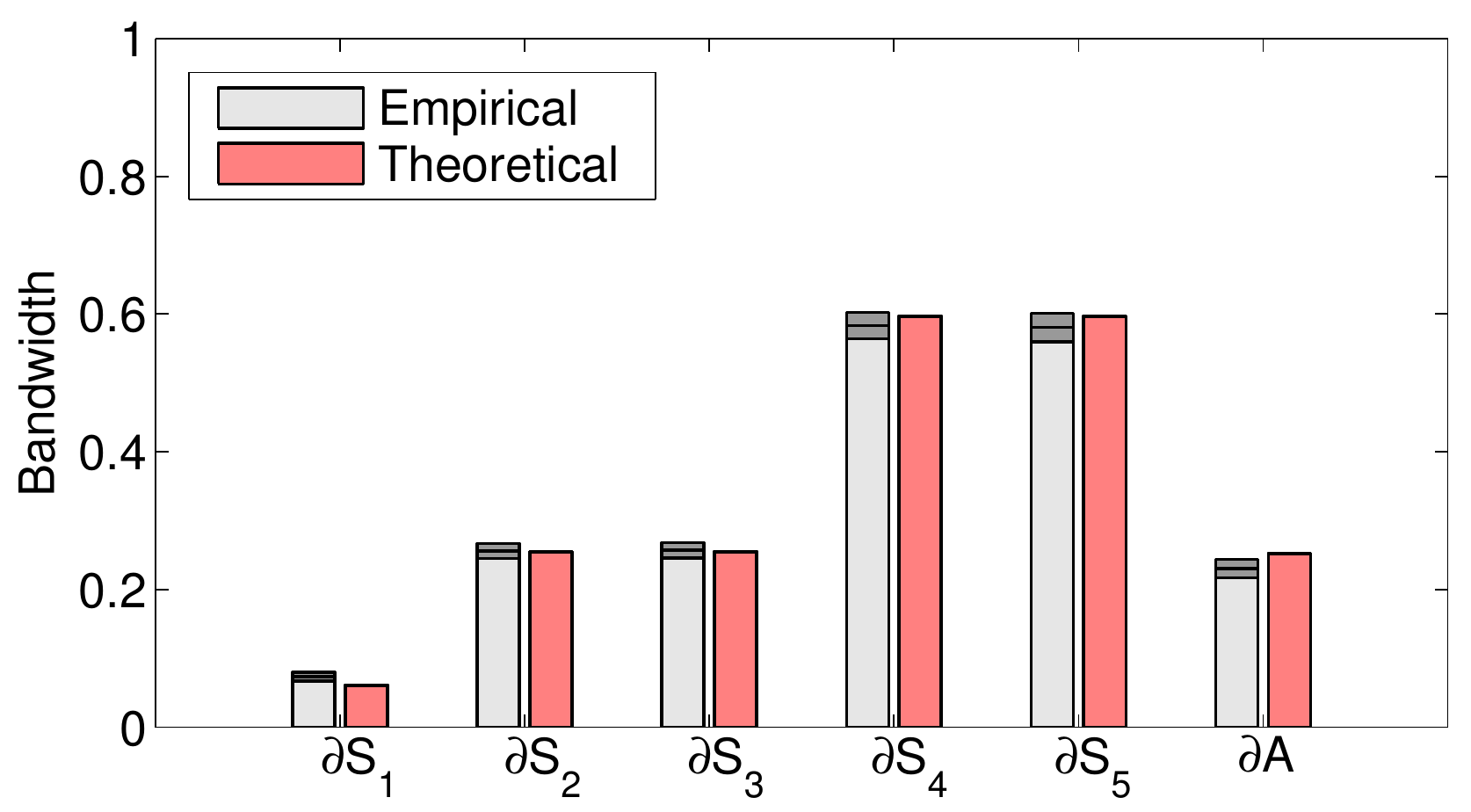}}
\caption{Convergence of empirical value of bandwidths $\omega(\onev_{S_i})$ and $\omega(\onev_{A})$ for different boundaries $\{\partial S_i\}$ and $\partial A$ on corresponding graphs. Dark shaded regions denote standard deviation over 100 experiments. Red bars indicate theoretical values.}
\label{fig:exp1}
\end{figure}
\begin{figure}[!]
\centerline{\includegraphics[width=0.9\linewidth]{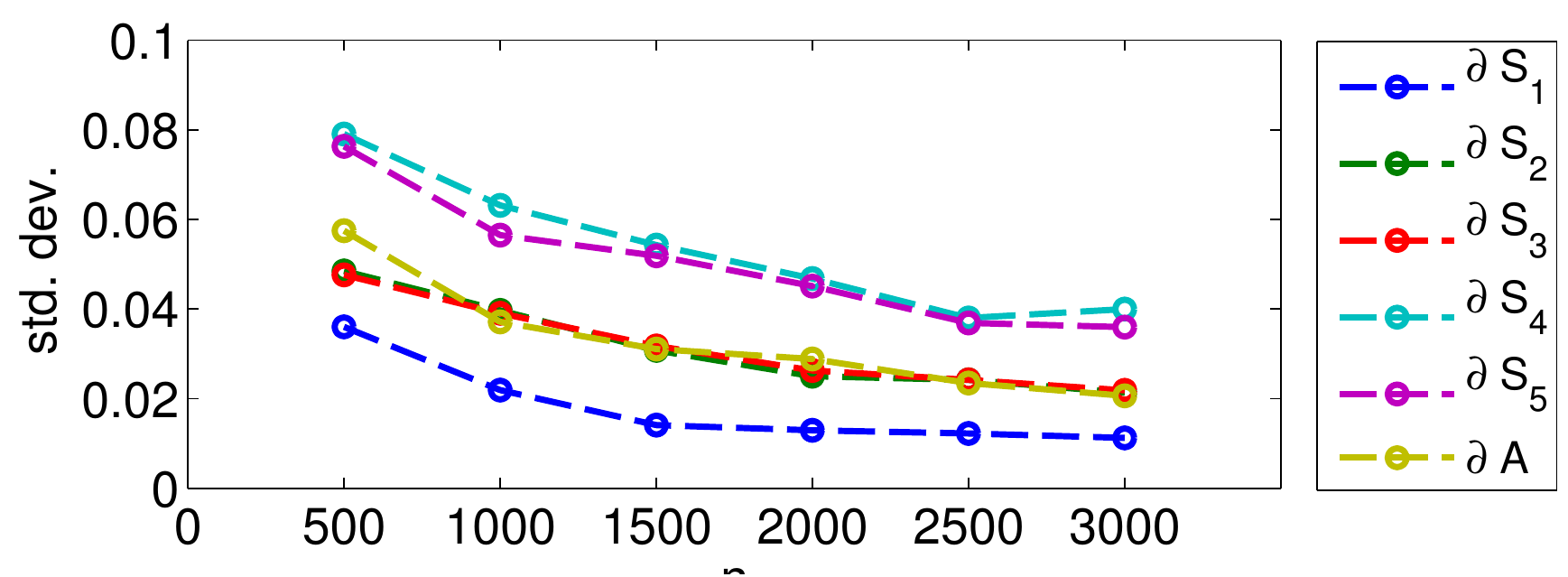}}
\caption{Standard deviation of $\omega(\onev_{S_i})$ and $\omega(\onev_{A})$ as a function of $n$.}
\label{fig:bw_vs_n}
\end{figure}
\begin{figure}[!]
\begin{center}
\begin{subfigure}{0.99\linewidth}
\centering
\includegraphics[width=0.9\linewidth]{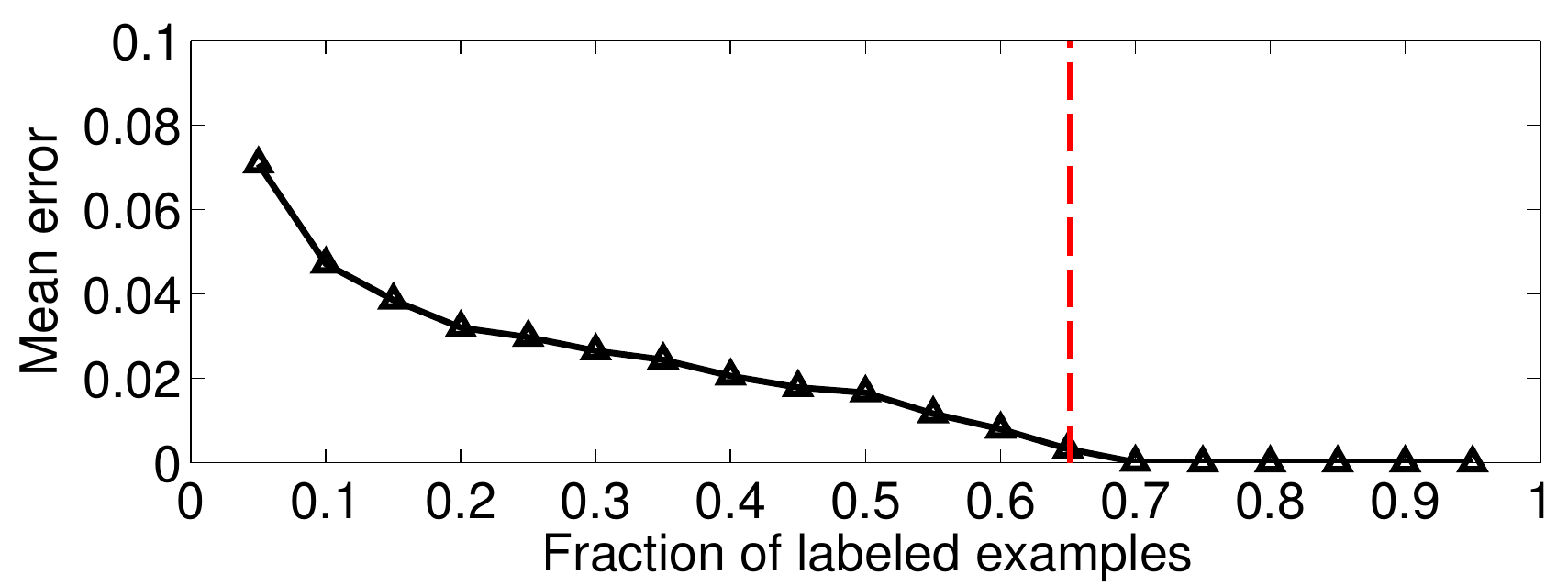}
\caption{}
\end{subfigure}
\\
\begin{subfigure}{0.99\linewidth}
\centering
\includegraphics[width=0.9\linewidth]{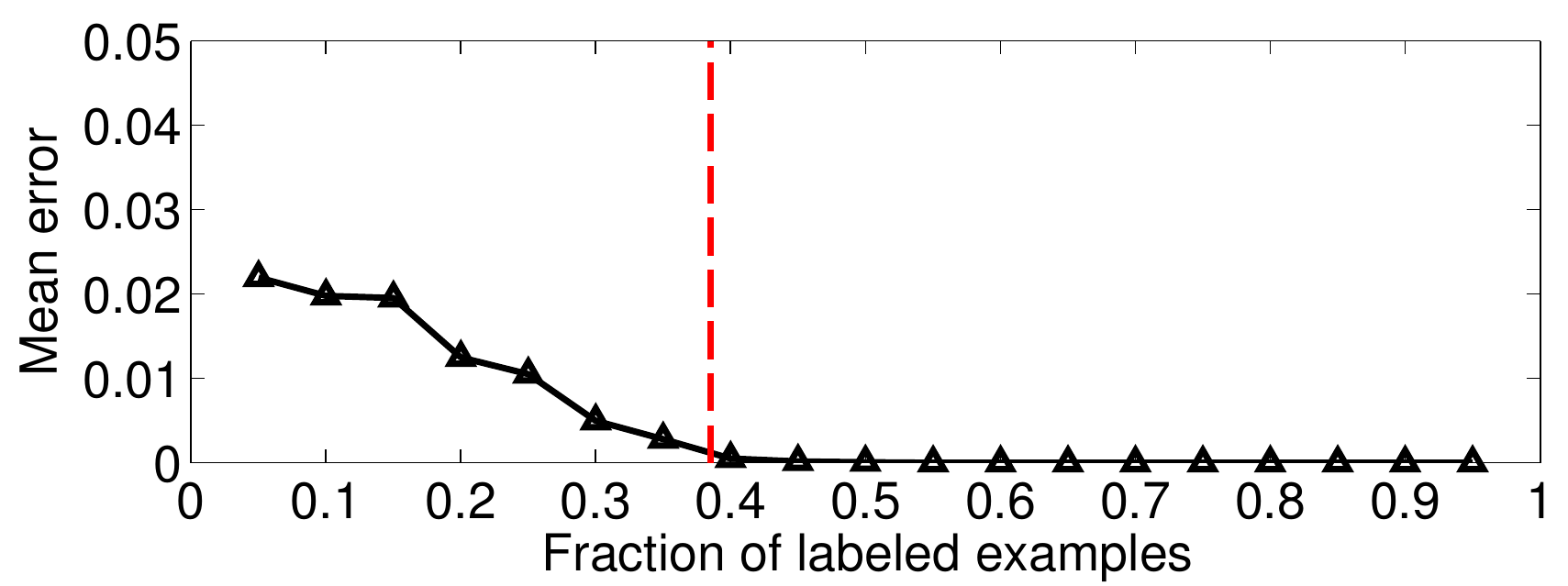}
\caption{}
\end{subfigure}
\end{center}
\caption{Mean reconstruction errors averaged over $100$ experiments for (a) $\onev_{S_3}$, and (b) $\onev_{A}$. Red-dashed lines indicate the limit values of label complexities predicted by our results, i.e., $P(\Xc_{S_3})$ and $P(\Xc_{A})$.}
\label{fig:exp3}
\end{figure}
The procedure above is repeated 100 times and the mean of $\omega(\onev_{S_i})$ and $\omega(\onev_{A})$ are compared with $\sup_{\sv \in \partial S_i} p(\sv)$ and $\sup_{\xv \in \partial A} p(\xv)$ respectively. The result is plotted in Figure~\ref{fig:exp1}. We observe that the empirical bandwidth is close to the theoretically predicted value and has a very low standard deviation. This supports our conjecture that stochastic convergence should hold for the bandwidth. To further justify this claim, we study the behavior of the standard deviation of $\omega(\onev_{S_i})$ and $\omega(\onev_{A})$ as a function of $n$ in Figure~\ref{fig:bw_vs_n}, where we observe a decreasing trend consistent with our result.
\par
For our second experiment, we validate the label complexity of sampling theory-based learning in Conjecture~\ref{conjecture:lc} by reconstructing the indicator function corresponding to $\partial S_3$ and $\partial A$ from a fraction of labeled examples on the corresponding graphs.
This is carried out as follows: For a given budget $l$, we find the set of points $L\subset \{1,2,\dots,n\}$ to label of size $|L| = l$, using pivoted column-wise Gaussian elimination on the eigenvector matrix $\Um$ of $\Lm$~\cite{anis_tsp_16}. This method ensures that the obtained labeled set guarantees perfect recovery for signals spanned by the first $l$ eigenvectors of $\Lm$~\cite{anis_tsp_16}. 
We then recover the indicator functions from these labeled sets by solving the least squares problem in~\eqref{eq:bl_interp} followed by thresholding. Note that $\theta$ is set to the cutoff frequency $\omega_c(L)$ of $L$, which is equal to the $l^\text{th}$ eigenvalue of $\Lm$. The mean reconstruction error is defined as
\begin{equation}
E_{\rm mean} = \frac{\text{No. of mismatches on unlabeled set}}{\text{Size of unlabeled set}}.
\end{equation}
We repeat the experiment $100$ times by generating different graphs and plot the averaged $E_{\rm mean}$ against the fraction of labeled examples.
The result is illustrated in Figure~\ref{fig:exp3}. We observe that the error goes to zero as the fraction of labeled points goes beyond the respective limit values stated in~\eqref{eq:sep_lc_limit} and~\eqref{eq:nonsep_lc_limit}. This reinforces the intuition that the bandwidth of class indicators and their label complexities are closely linked with the inherent geometry of the data.

\section{Discussions and future work}
\label{sec:end}

In this paper, we provided an interpretation of the graph sampling theoretic approach to semi-supervised learning. Our work analyzed the bandwidth of class indicator signals with respect to the Laplacian eigenvector basis and revealed its connection to the underlying geometry of the dataset. This connection is useful in justifying graph-based approaches for semi-supervised and unsupervised learning problems, and provides a geometrical interpretation of the smoothness assumptions imposed in the bandlimited reconstruction approach.
Specifically, our results have shown that an estimate of the bandwidth of class indicators converges to the supremum of the probability density on the class boundaries for the separable model, and on the overlap regions for the nonseparable model. This quantifies the connection between the assumptions of smoothness (in terms of bandlimitedness) and low density separation, since boundaries passing through regions of low data density result in lower bandwidth of the class indicator signals. We numerically validated these results through various experiments.

There are several directions in which our results can be extended. In this paper we only considered Gaussian-weighted graphs, an immediate extension would be to consider arbitrary kernel functions for computing graph weights, or density dependent edge-connections such as $k$-nearest neighbors. Another possibility is to consider data defined on a subset of the $d$-dimensional Euclidean space. 

Our analysis also sheds light on the label complexity of graph-based semi-supervised learning problems. We showed that perfect prediction from a few labeled examples using a graph-based bandlimited interpolation approach requires the same amount of labeling as one would need to completely encompass the boundary or region of ambiguity. This quantifies the connection between label complexity of a sampling theory-based approach with the underlying geometry of the problem. We believe that the main potential of graph-based methods will be apparent in situations where one can tolerate a certain amount of prediction error, in which case such approaches shall require fewer labeled data. We plan to investigate this as part of future work.

\bibliographystyle{ieeetr}
\bibliography{refs}

\begin{thebibliography}{10}

\bibitem{chapelle_book}
O.~Chapelle, B.~Sch\"{o}lkopf, and A.~Zien, {\em Semi-Supervised Learning
  (Adaptive Computation and Machine Learning)}.
\newblock The MIT Press, 2006.

\bibitem{zhou_nips_04}
D.~Zhou, O.~Bousquet, T.~N. Lal, J.~Weston, and B.~Sch\"{o}lkopf, ``Learning
  with local and global consistency,'' in {\em Advances in Neural Information
  Processing Systems 16} (S.~Thrun, L.~K. Saul, and B.~Sch\"{o}lkopf, eds.),
  pp.~321--328, MIT Press, 2004.

\bibitem{narayanan_06}
H.~Narayanan, M.~Belkin, and P.~Niyogi, ``On the relation between low density
  separation, spectral clustering and graph cuts,'' in {\em Advances in Neural
  Information Processing Systems (NIPS) 19}, 2006.

\bibitem{zhu_03}
X.~Zhu, Z.~Ghahramani, and J.~Lafferty, ``Semi-supervised learning using
  gaussian fields and harmonic functions,'' in {\em IN ICML}, pp.~912--919,
  2003.

\bibitem{trillos_arma_16}
N.~Garc{\'i}a~Trillos and D.~Slep{\v{c}}ev, ``Continuum limit of total
  variation on point clouds,'' {\em Archive for Rational Mechanics and
  Analysis}, vol.~220, pp.~193--241, Apr 2016.

\bibitem{zhou_aistats_11}
X.~Zhou and M.~Belkin, ``Semi-supervised learning by higher order
  regularization,'' in {\em Proceedings of the Fourteenth International
  Conference on Artificial Intelligence and Statistics, {AISTATS} 2011, Fort
  Lauderdale, USA, April 11-13, 2011}, pp.~892--900, 2011.

\bibitem{buhler_icml_09}
T.~B\"{u}hler and M.~Hein, ``Spectral clustering based on the graph
  p-laplacian,'' in {\em Proceedings of the 26th Annual International
  Conference on Machine Learning}, ICML '09, (New York, NY, USA), pp.~81--88,
  ACM, 2009.

\bibitem{elalaoui_colt_16}
A.~E. Alaoui, ``Asymptotic behavior of $\ell_p$-based {L}aplacian
  regularization in semi-supervised learning,'' in {\em Proceedings of the 29th
  Conference on Learning Theory, {COLT} 2016, New York, USA, June 23-26, 2016},
  pp.~879--906, 2016.

\bibitem{belkin_nips_02}
M.~Belkin and P.~Niyogi, ``Using manifold stucture for partially labeled
  classification,'' in {\em Advances in Neural Information Processing Systems
  15} (S.~Becker, S.~Thrun, and K.~Obermayer, eds.), pp.~953--960, MIT Press,
  2003.

\bibitem{belkin_ml_04}
M.~Belkin and P.~Niyogi, ``Semi-supervised learning on {R}iemannian
  manifolds,'' {\em Machine Learning}, vol.~56, no.~1, pp.~209--239, 2004.

\bibitem{shuman_spm_13}
D.~I. Shuman, S.~K. Narang, P.~Frossard, A.~Ortega, and P.~Vandergheynst, ``The
  emerging field of signal processing on graphs: Extending high-dimensional
  data analysis to networks and other irregular domains,'' {\em IEEE Signal
  Processing Magazine}, vol.~30, pp.~83--98, May 2013.

\bibitem{anis_icassp_14}
A.~Anis, A.~Gadde, and A.~Ortega, ``Towards a sampling theorem for signals on
  arbitrary graphs,'' in {\em Acoustics, Speech and Signal Processing (ICASSP),
  2014 IEEE International Conference on}, pp.~3864--3868, May 2014.

\bibitem{shomorony_globalsip_14}
H.~Shomorony and A.~Avestimehr, ``Sampling large data on graphs,'' in {\em
  Signal and Information Processing (GlobalSIP), 2014 IEEE Global Conference
  on}, pp.~933--936, Dec 2014.

\bibitem{chen_tsp_15}
S.~Chen, R.~Varma, A.~Sandryhaila, and J.~Kova\v{c}evi\'{c}, ``Discrete signal
  processing on graphs: Sampling theory,'' {\em Signal Processing, IEEE
  Transactions on}, 2015.

\bibitem{anis_tsp_16}
A.~Anis, A.~Gadde, and A.~Ortega, ``Efficient sampling set selection for
  bandlimited graph signals using graph spectral proxies,'' {\em IEEE
  Transactions on Signal Processing}, vol.~64, pp.~3775--3789, July 2016.

\bibitem{gadde_kdd_14}
A.~Gadde, A.~Anis, and A.~Ortega, ``Active semi-supervised learning using
  sampling theory for graph signals,'' in {\em Proceedings of the 20th ACM
  SIGKDD International Conference on Knowledge Discovery and Data Mining}, KDD
  '14, (New York, NY, USA), pp.~492--501, ACM, 2014.

\bibitem{bousquet_04}
O.~Bousquet, O.~Chapelle, and M.~Hein, ``Measure based regularization,'' in
  {\em Advances in Neural Information Processing Systems (NIPS) 16}, MIT Press,
  2004.

\bibitem{hein_colt_06}
M.~Hein, {\em Uniform Convergence of Adaptive Graph-Based Regularization},
  pp.~50--64.
\newblock Berlin, Heidelberg: Springer Berlin Heidelberg, 2006.

\bibitem{belkin_jcss_08}
M.~Belkin and P.~Niyogi, ``Towards a theoretical foundation for laplacian-based
  manifold methods,'' {\em J. Comput. Syst. Sci.}, vol.~74, no.~8,
  pp.~1289--1308, 2008.

\bibitem{zhou_kdd_11}
X.~Zhou, M.~Belkin, and N.~Srebro, ``An iterated graph laplacian approach for
  ranking on manifolds,'' in {\em Proceedings of the 17th {ACM} {SIGKDD}
  International Conference on Knowledge Discovery and Data Mining, San Diego,
  CA, USA, August 21-24, 2011}, pp.~877--885, 2011.

\bibitem{trillos_arxiv_16}
N.~Garc{\'i}a~Trillos, ``{Variational limits of k-NN graph based functionals on
  data clouds},'' {\em ArXiv e-prints}, July 2016.

\bibitem{slepcev_arxiv_17}
D.~{Slep{\v c}ev} and M.~{Thorpe}, ``{Analysis of $p$-{L}aplacian
  Regularization in Semi-Supervised Learning},'' {\em ArXiv e-prints}, July
  2017.

\bibitem{castro_aap_12}
E.~Arias-Castro, B.~Pelletier, and P.~Pudlo, ``The normalized graph cut and
  cheeger constant: From discrete to continuous,'' {\em Advances in Applied
  Probability}, vol.~44, no.~4, pp.~907--937, 2012.

\bibitem{maier_13}
M.~Maier, U.~von Luxburg, and M.~Hein, ``How the result of graph clustering
  methods depends on the construction of the graph,'' {\em ESAIM: Probability
  and Statistics}, vol.~17, pp.~370--418, 1 2013.

\bibitem{trillos_jmlr_16}
N.~Garc{\'i}a~Trillos, D.~Slep\v{c}ev, J.~von Brecht, T.~Laurent, and
  X.~Bresson, ``Consistency of {C}heeger and ratio graph cuts,'' {\em Journal
  of Machine Learning Research}, vol.~17, no.~181, pp.~1--46, 2016.

\bibitem{anis_15}
A.~Anis, A.~El~Gamal, S.~Avestimehr, and A.~Ortega, ``Asymptotic justification
  of bandlimited interpolation of graph signals for semi-supervised learning,''
  in {\em Acoustics, Speech and Signal Processing (ICASSP), 2014 IEEE
  International Conference on}, April 2015.

\bibitem{narang_icassp13}
S.~Narang, A.~Gadde, and A.~Ortega, ``Signal processing techniques for
  interpolation in graph structured data,'' in {\em Acoustics, Speech and
  Signal Processing (ICASSP), 2013 IEEE International Conference on},
  pp.~5445--5449, May 2013.

\bibitem{narang_globalsip13}
S.~Narang, A.~Gadde, E.~Sanou, and A.~Ortega, ``Localized iterative methods for
  interpolation in graph structured data,'' in {\em Global Conference on Signal
  and Information Processing (GlobalSIP), 2013 IEEE}, pp.~491--494, Dec 2013.

\bibitem{wang_tsp_15}
X.~Wang, P.~Liu, and Y.~Gu, ``Local-set-based graph signal reconstruction,''
  {\em IEEE Transactions on Signal Processing}, vol.~63, pp.~2432--2444, May
  2015.

\bibitem{pesenson_ca_09}
I.~Pesenson, ``Variational splines and paley--wiener spaces oncombinatorial
  graphs,'' {\em Constructive Approximation}, vol.~29, pp.~1--21, Feb 2009.

\bibitem{hein_thesis_06}
M.~Hein, {\em Geometrical aspects of statistical learning theory}.
\newblock PhD thesis, TU Darmstadt, April 2006.

\bibitem{hanneke_jmlr_15}
S.~Hanneke, ``The optimal sample complexity of {PAC} learning,'' {\em Journal
  of Machine Learning Research}, vol.~17, no.~38, pp.~1--15, 2016.

\bibitem{hoeffding_63}
W.~Hoeffding, ``Probability inequalities for sums of bounded random
  variables,'' {\em Journal of the American Statistical Association}, vol.~58,
  no.~301, pp.~13--30, 1963.

\bibitem{billingsley_95}
P.~Billingsley, {\em {Probability and Measure}}.
\newblock New York, NY: Wiley, 3rd~ed., 1995.

\end{thebibliography}

\appendices

\section{Proof of Lemma 5}
\label{app:proof_lemma_5}

The key ingredient required for evaluating the integrals in Lemma~\ref{lemma:flow} involves selecting a radius $R$ ($< \tau$) as a function of $\sigma$ that satisfies the following properties as $\sigma \rightarrow 0$: 
\begin{enumerate}[itemsep=0pt, topsep=3pt]
\item $R \rightarrow 0$,
\item $R/\sigma \rightarrow \infty$,
\item $R^2/\sigma \rightarrow 0$,
\item $\epsilon_R/\sigma \rightarrow 0$, where $\epsilon_R := \int_{\|\zv\|>R} K_{\sigma^2}(\zerov, \zv)d\zv$.
\end{enumerate}
A particular choice of $R$ is given by $R = \sqrt{d\sigma^2 \log{(1/\sigma^2)}}$. Note that $R \rightarrow 0$ as $\sigma \rightarrow 0$. Further,
\begin{align}
\frac{R}{\sigma} &= \sqrt{d \log{(1/\sigma^2)}}, \\
\frac{R^2}{\sigma} &= d\sigma \log{(1/\sigma^2)}.
\end{align}
Hence, $R/\sigma \rightarrow \infty$ and $R^2/\sigma \rightarrow 0$ as $\sigma \rightarrow 0$.
Additionally, substituting the expression for $R$ in the tail bound for the norm of a $d$-dimensional Gaussian vector gives us:
\begin{align}
\frac{\epsilon_R}{\sigma} &= \frac{1}{\sigma} \int_{\|\zv\|>R} K_{\sigma^2}(\zerov, \zv)d\zv  \nonumber \\
&\leq \frac{1}{\sigma}\left( \frac{\sigma^2 d}{R^2}\right)^{-d/2} e^{-\frac{R^2}{2\sigma^2} + \frac{d}{2}} \nonumber \\
&= \frac{1}{\sigma}\left( e \sigma^2 \log(1/\sigma^2)\right)^{d/2}.
\end{align}
Therefore, for $d > 1$, $\epsilon_R/\sigma \rightarrow 0$ as $\sigma \rightarrow 0$. Further, it is easy to ensure $R < \tau$ for the regime of $\sigma$ in our proofs.

We now consider the proof of equation \eqref{eq:flow_one}, let 
\begin{equation}
I := \frac{1}{\sigma} \int_{S_1} \int_{S_2} K_{\sigma^2}(\xv_1,\xv_2) p^\alpha(\xv_1) p^\beta(\xv_2) d\xv_1 d\xv_2.
\end{equation}
Further, let $[S_1]_R$ indicate a tubular region of thickness $R$ adjacent to the boundary $\partial S$ in $S_1$, i.e., the set of points in $S_1$ at a distance $\leq R$ from the boundary. Then, we have
\begin{align}
&I = \underbrace{\frac{1}{\sigma} \int_{[S_1]_R} p^\alpha(\xv_1) \int_{S_2} K_{\sigma^2}(\xv_1,\xv_2) p^\beta(\xv_2) d\xv_2 \; d\xv_1}_{I_1} \nonumber \\ 
&\quad+ \underbrace{\frac{1}{\sigma} \int_{[S_1]_R^c} p^\alpha(\xv_1) \int_{S_2} K_{\sigma^2}(\xv_1,\xv_2) p^\beta(\xv_2) d\xv_2 \; d\xv_1}_{E_1}.
\end{align}
$E_1$ is the error associated with approximating $I$ by $I_1$ and exhibits the following behavior: 

\begin{lemma}
$\lim_{\sigma \rightarrow 0} E_1 = 0$.
\end{lemma}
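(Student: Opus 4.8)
The plan is to bound $E_1$ by exploiting the fact that the inner integral $\int_{S_2} K_{\sigma^2}(\xv_1,\xv_2)p^\beta(\xv_2)\,d\xv_2$ is \emph{exponentially small} when $\xv_1$ is far from the boundary $\partial S$. Concretely, for $\xv_1 \in [S_1]_R^c$, the Euclidean distance from $\xv_1$ to any point of $S_2$ is at least $R$ (since the whole of $S_2$ lies on the other side of $\partial S$, and $\xv_1$ is at distance $> R$ from $\partial S$). Hence the Gaussian kernel mass reaching $S_2$ is at most $\int_{\|\zv\| > R} K_{\sigma^2}(\zerov,\zv)\,d\zv = \epsilon_R$, and using the assumed upper bound $p(\xv_2) \le p_{\max}$ we get
\begin{equation}
\int_{S_2} K_{\sigma^2}(\xv_1,\xv_2) p^\beta(\xv_2)\,d\xv_2 \;\le\; p_{\max}^{\beta}\,\epsilon_R
\qquad \text{for all } \xv_1 \in [S_1]_R^c.
\end{equation}

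First I would make this distance claim precise: because $\partial S$ separates $S_1$ from $S_2$, any segment from $\xv_1 \in S_1$ to $\xv_2 \in S_2$ crosses $\partial S$, so $\|\xv_1 - \xv_2\| \ge \mathrm{dist}(\xv_1, \partial S) > R$. Then I would substitute the bound above into the definition of $E_1$:
\begin{equation}
|E_1| \;\le\; \frac{p_{\max}^{\beta}\,\epsilon_R}{\sigma} \int_{[S_1]_R^c} p^\alpha(\xv_1)\,d\xv_1 \;\le\; \frac{p_{\max}^{\beta}\,\epsilon_R}{\sigma} \cdot p_{\max}^{\alpha-1} \int_{\mathbb{R}^d} p(\xv_1)\,d\xv_1 \;=\; p_{\max}^{\alpha+\beta-1}\,\frac{\epsilon_R}{\sigma},
\end{equation}
where the last integral equals $1$ since $p$ is a density. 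By property~4 of the chosen radius $R = \sqrt{d\sigma^2 \log(1/\sigma^2)}$, we have $\epsilon_R/\sigma \to 0$ as $\sigma \to 0$ (for $d > 1$), so $|E_1| \to 0$, which is the claim.

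The only mild subtlety — and the step I expect to require the most care — is handling the $\alpha = 1$ (and more generally making the outer $\xv_1$-integral finite) cleanly: one wants $\int_{[S_1]_R^c} p^\alpha(\xv_1)\,d\xv_1$ to be bounded uniformly in $\sigma$, which follows immediately from $\int p\,d\xv_1 = 1$ together with $p \le p_{\max}$, so the positive-integer assumption on $\alpha$ makes this routine. If one prefers to avoid the boundedness of $p$ entirely, an alternative is to absorb $p^{\alpha}(\xv_1) p^{\beta}(\xv_2)$ into a single bound $p_{\max}^{\alpha+\beta}$ on the integrand and note that $[S_1]_R^c$ has finite measure only if the support is bounded — hence the cleanest route is the one above, using normalization of $p$ rather than finiteness of its support. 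No interchange-of-limits issue arises since the bound on $|E_1|$ is deterministic and $\sigma$-explicit.
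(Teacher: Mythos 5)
Your proposal is correct and follows essentially the same route as the paper: bound the inner integral by $p_{\max}^{\beta}\,\epsilon_R$ using the fact that every point of $S_2$ is at distance greater than $R$ from $\xv_1 \in [S_1]_R^c$, bound the outer integral by a constant via the boundedness and normalization of $p$, and invoke $\epsilon_R/\sigma \to 0$. Your explicit justification of the distance claim and the slightly sharper constant $p_{\max}^{\alpha+\beta-1}$ are fine refinements of the same argument.
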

\begin{proof}
Note that
\begin{align}
E_1 
&\leq \frac{1}{\sigma} \left( p_{\rm max} \right)^\beta \int_{[S_1]_R^c} p^\alpha(\xv_1) \left( \int_{S_2} K_{\sigma^2}(\xv_1,\xv_2) d\xv_2 \right) d\xv_1 \nonumber \\
&\leq \frac{1}{\sigma} \left( p_{\rm max} \right)^\beta \int_{[S_1]_R^c} p^\alpha(\xv_1) \left( \int_{\|\zv\| > R} K_{\sigma^2}(\zerov,\zv) d\zv \right) d\xv_1 \nonumber \\
&= \frac{\epsilon_R}{\sigma} \left( p_{\rm max} \right)^\beta \int_{[S_1]_R^c} p^\alpha(\xv_1) d\xv_1 \nonumber \\
&\leq \frac{\epsilon_R}{\sigma} \left( p_{\rm max} \right)^{\alpha + \beta}.
\end{align}
Using $\lim_{\sigma \rightarrow \infty} \epsilon_R/\sigma = 0$, we get the desired result.
\end{proof}

In order to analyze $I_1$, we need to define certain geometrical constructions (illustrated in Figure~\ref{fig:constructions}) as follows: 
\begin{definition}
\label{def:constructions}
\begin{enumerate}
\item
For each $\xv_1 \in [S_1]_R$, we define a transformation of coordinates as:
\begin{equation}
\xv_1 = \sv_1 + r_1 \nv(\sv_1),
\end{equation}
where $\sv_1$ is the foot of the perpendicular dropped from $\xv_1$ onto $\partial S$, $r_1$ is the distance between $\sv_1$ and $\xv_1$, and $\nv(\sv_1)$ is the surface normal at $\sv_1$ (towards the direction of $\xv_1$). Since the minimum radius of curvature of $\partial S$ is $\tau$ and $R < \tau$, this mapping is injective.
\item For each $\sv_1 \in \partial S$, let $H_{\sv_1}^+$ denote the half-space created by the plane tangent on $\sv_1$ and on the side of $S_2$. Similarly, let $H_{\sv_1}^-$ denote the half-space on the side of $S_1$, that is, $H_{\sv_1}^- = \mathbb{R}^d \setminus H_{\sv_1}^+ $.
\item Let $W_{\sv_1}^+(x)$ denote an infinite slab of thickness $x$ tangent to $\partial S$ at $\sv_1$ and towards the side of $S_2$. Let $W_{\sv_1}^-(y)$ denote a similar slab of thickness $y$ on the side of $S_1$.
\item Finally, for any $\xv$, let $B(\xv,R)$ denote the Euclidean ball of radius $R$ centered at $\xv$.
\end{enumerate}
\end{definition}
\begin{figure}[t]
\centering
\includegraphics[width=0.4\linewidth]{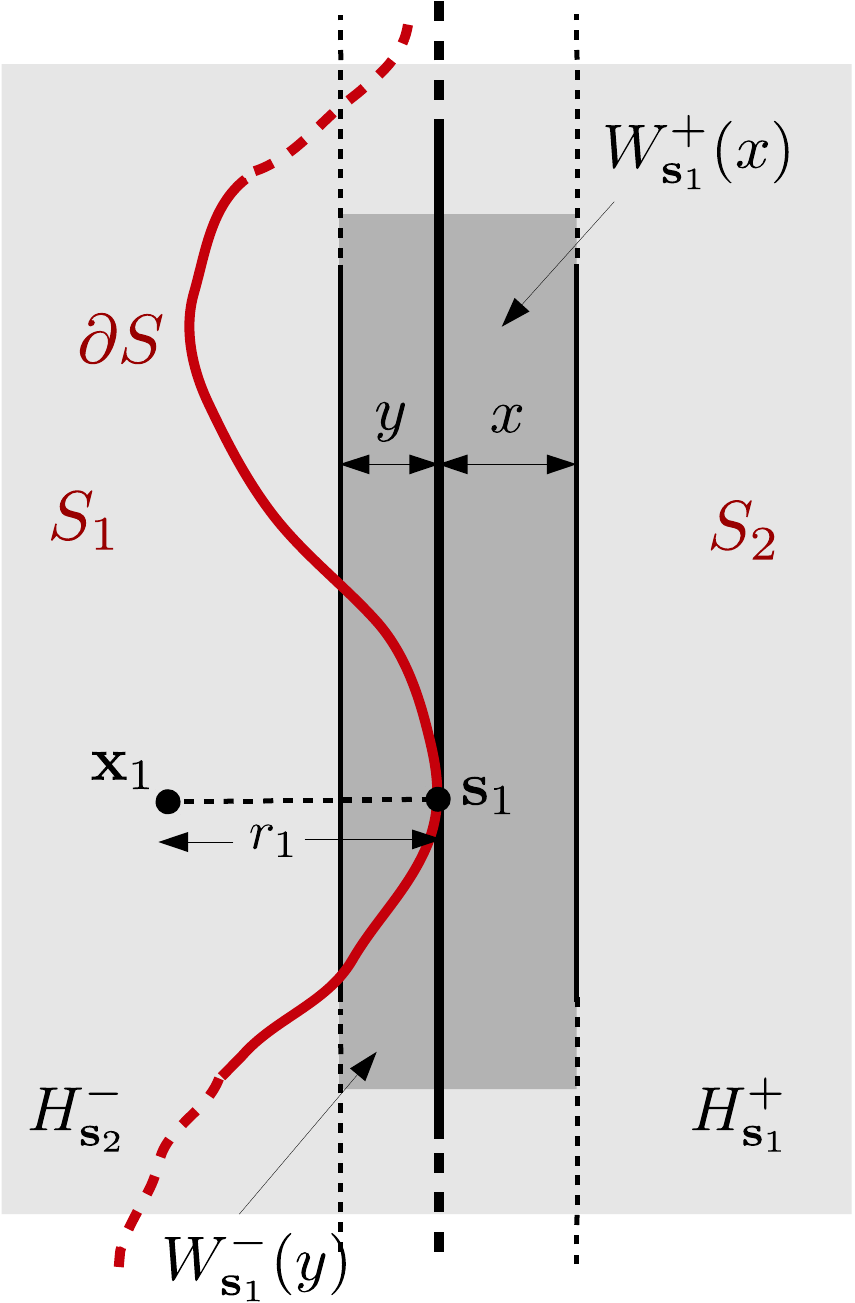}
\caption{Geometrical constructions in Definition~\ref{def:constructions}.}
\label{fig:constructions}
\end{figure}
We now consider $I_1$, the main idea here is to approximate the integral over $S_2$ by an integral over the half-space $H^+_{\sv_1}$. Hence, we have:
\begin{align}
&I_1 = \underbrace{\frac{1}{\sigma} \int_{[S_1]_R} p^\alpha(\xv_1) \int_{H^+_{\sv_1}} K_{\sigma^2}(\xv_1,\xv_2) p^\beta(\xv_2) d\xv_2 d\xv_1}_{I_2} \nonumber \\
&+ \underbrace{ \frac{1}{\sigma} \int_{[S_1]_R} p^\alpha(\xv_1) \int_{S_2 - H^+_{\sv_1}} K_{\sigma^2}(\xv_1,\xv_2) p^\beta(\xv_2) d\xv_2 d\xv_1}_{E_2},
\end{align}
where $E_2$ is the error associated with the approximation. Therefore, we have
\begin{equation}
I = I_2 + E_2 + E_1.
\end{equation}
We now show that as $\sigma \rightarrow 0$, $I_2 \rightarrow \frac{1}{\sqrt{2\pi}}\int_{\partial S}p^{\alpha+\beta}(\sv)d\sv$, and $E_2 \rightarrow 0$.
\begin{lemma}
$\lim_{\sigma \rightarrow 0} I_2 = \frac{1}{\sqrt{2\pi}} \int_{\partial S} p^{\alpha + \beta}(\sv) d\sv $.
\end{lemma}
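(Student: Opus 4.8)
The plan is to evaluate $I_2$ by exploiting the change of coordinates $\xv_1 = \sv_1 + r_1\nv(\sv_1)$ introduced in Definition~\ref{def:constructions}, which foliates the tubular region $[S_1]_R$ as $\sv_1$ ranges over $\partial S$ and $r_1$ ranges over $[0,R]$. First I would write the Jacobian of this map as $1 + O(r_1/\tau)$ (the curvature correction is uniformly controlled since $R<\tau$), so that $d\xv_1 = (1+O(R))\,dr_1\,d\sv_1$, and replace $p^\alpha(\xv_1)$ by $p^\alpha(\sv_1) + O(r_1)$ using Lipschitz continuity of $p$; both error factors will ultimately contribute only $O(R)$ or $O(R^2/\sigma)$ to $I_2$ after integration and hence vanish as $\sigma\to 0$ by properties (1) and (3) of $R$.

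The core computation is the inner integral $\int_{H^+_{\sv_1}} K_{\sigma^2}(\xv_1,\xv_2)p^\beta(\xv_2)\,d\xv_2$. Setting up local orthonormal coordinates at $\sv_1$ with the last axis along $\nv(\sv_1)$, the half-space $H^+_{\sv_1}$ becomes $\{z_d > 0\}$ and $\xv_1$ sits at signed height $-r_1$; the Gaussian kernel factorizes across coordinates, so the integral over the $d-1$ tangential directions of $K_{\sigma^2}$ against $p^\beta(\xv_2)$ yields $p^\beta(\sv_1)+O(\sigma)+O(r_1)$ (again by Lemma~\ref{lem:kernelone}-type reasoning and Lipschitz continuity), while the normal direction leaves a one-dimensional Gaussian tail integral $\int_0^\infty \frac{1}{\sqrt{2\pi\sigma^2}}e^{-(z+r_1)^2/2\sigma^2}\,dz = \frac12\,\mathrm{erfc}\!\big(r_1/(\sqrt2\,\sigma)\big)$. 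Substituting back, $I_2$ becomes, up to vanishing errors,
\begin{equation*}
\frac{1}{\sigma}\int_{\partial S} p^{\alpha+\beta}(\sv_1)\left( \int_0^{R} \tfrac12\,\mathrm{erfc}\!\big(r_1/(\sqrt2\,\sigma)\big)\,dr_1 \right) d\sv_1 .
\end{equation*}
The substitution $r_1 = \sigma u$ turns the inner integral into $\sigma\int_0^{R/\sigma}\tfrac12\,\mathrm{erfc}(u/\sqrt2)\,du$, and since $R/\sigma\to\infty$ this converges to $\sigma\int_0^\infty \tfrac12\,\mathrm{erfc}(u/\sqrt2)\,du = \sigma/\sqrt{2\pi}$ (using $\int_0^\infty \mathrm{erfc}(u/\sqrt2)\,du = \sqrt{2/\pi}$). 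The prefactor $1/\sigma$ cancels, leaving exactly $\frac{1}{\sqrt{2\pi}}\int_{\partial S}p^{\alpha+\beta}(\sv)\,d\sv$.

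The main obstacle I anticipate is not the one-dimensional Gaussian bookkeeping but rather making the geometric approximations uniform: I must justify that replacing $S_2$ by the half-space $H^+_{\sv_1}$, replacing the tube $[S_1]_R$ by a flat product $\partial S\times[0,R]$, and freezing $p$ at $\sv_1$ all introduce errors that are $o(\sigma)$ after dividing by $\sigma$ and integrating. This requires the curvature bound $\tau$ to control how far $\partial S$ deviates from its tangent plane within a ball of radius $R$ (the deviation is $O(R^2/\tau)$), and then property (3), $R^2/\sigma\to 0$, is exactly what forces these terms to vanish; the tail-mass condition (4) handles the portion of the Gaussian reaching beyond $B(\xv_1,R)$. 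I would isolate each such discrepancy as a separate lemma-style estimate (much as $E_1$ and $E_2$ are treated) so the final limit follows by assembling $I = I_2 + E_2 + E_1$ with $E_1,E_2\to 0$.
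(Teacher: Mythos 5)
Your proposal is correct and follows essentially the same route as the paper's proof: the coordinate change $\xv_1 = \sv_1 + r_1\nv(\sv_1)$ with Jacobian control via the curvature bound, Lipschitz freezing of $p$ at $\sv_1$, factorization of the Gaussian into tangential and normal parts over the half-space, and a one-dimensional tail integral evaluated in the limit $R/\sigma\to\infty$ (your $\tfrac12\,\mathrm{erfc}(r_1/(\sqrt2\,\sigma))$ is exactly the paper's $Q(r_1/\sigma)$, and your direct use of $\int_0^\infty \mathrm{erfc}(u/\sqrt2)\,du=\sqrt{2/\pi}$ replaces the paper's integration by parts). The only cosmetic difference is that the paper obtains an $O(\sigma^2)$ rather than $O(\sigma)$ error in the tangential integral by invoking twice-differentiability, which does not affect the limit.
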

\begin{proof}
Using the change of coordinates $\xv_1 = \sv_1 + r_1 \nv(\sv_1)$, we have
\begin{align}
I_2 &= \frac{1}{\sigma} \int_{\partial S} \int_0^R p^\alpha(\sv_1 + r_1 \nv(\sv_1)) \nonumber \\
&\qquad \qquad \left( \int_{H^+_{\sv_1}} K_{\sigma^2}(\sv_1 + r_1 \nv(\sv_1),\xv_2) p^\beta(\xv_2) d\xv_2 \right) \nonumber \\
&\qquad \qquad \qquad |\text{det}J(\sv_1,r_1)| d\sv_1 dr_1,
\end{align}
where $J(\sv_1,r_1)$ denotes the Jacobian of the transformation. Now, an arc $\widehat{PQ}$ of length $ds$ at a distance $r_1$ away from $\partial S$ gets mapped to an arc $\widehat{P'Q'}$ on $\partial S$ whose length lies in the interval $[ds(1-\frac{r_1}{\tau}),ds(1+\frac{r_1}{\tau})]$. Therefore, for all points within $[S_1]_R$, we have
\begin{equation}
\left( 1-\frac{R}{\tau} \right)^{d-1} \leq |\text{det}J(\sv_1,r_1)| \leq \left( 1+\frac{R}{\tau} \right)^{d-1}.
\end{equation}
Further, since $p(\xv)$ is Lipschitz continuous with constant $L_p$, $p^\alpha(\xv)$ is also Lipschitz continuous with constant $L_{p,\alpha}$. Therefore, for any $\xv_1 \in [S_1]_R$, we have $p^\alpha(\xv_1) = p^\alpha(\sv_1) + L_{p,\alpha}R$. This leads to the following simplification for $I_2$:
\begin{align}
I_2 &= \left( 1 + O(R^{d-1}) \right) \int_{\partial S}  p^\alpha(\sv_1) I_3(\sv_1) d\sv_1 \nonumber \\ 
&\qquad \qquad + O(R^d) \int_{\partial S} I_3(\sv_1) d\sv_1,
\label{eq:I_2}
\end{align}
where we defined
\begin{equation}
I_3(\sv_1) := \frac{1}{\sigma} \int_0^R \int_{H_{\sv_1}^+} K_{\sigma^2}(\sv_1 + r_1 \nv(\sv_1),\xv_2) p^\beta(\xv_2) d\xv_2  dr_1.
\end{equation}
Note that every $\xv_2 \in H_{\sv_1}^+$ can be written as $\sv_2 + r_2 \nv(\sv_2)$, where $\nv(\sv_2) = -\nv(\sv_1)$. Hence, we get
\begin{align}
I_3(\sv_1) &= \int_{\mathbb{R}^{d-1}}  \frac{1}{(2\pi\sigma^2)^{\frac{d-1}{2}}} e^{-\frac{\|\sv_1 - \sv_2\|^2}{2\sigma^2}} p^\beta(\sv_2 - r_2\nv(\sv_1)) d\sv_2 \nonumber \\
&\qquad\quad \times \frac{1}{\sigma} \int_0^R \int_0^\infty \frac{1}{\sqrt{2\pi\sigma^2}} e^{-\frac{(r_1 + r_2)^2}{2\sigma^2}}   dr_1 dr_2 \nonumber \\ 
&= \left(\int_{\mathbb{R}^{d-1}}  \frac{1}{(2\pi\sigma^2)^{\frac{d-1}{2}}} e^{-\frac{\|\sv_1 - \sv_2\|^2}{2\sigma^2}} p^\beta(\sv_2) d\sv_2 + O(R) \right) \; \nonumber \\ 
&\qquad\quad \times \frac{1}{\sigma} \int_0^R \int_0^\infty \frac{1}{\sqrt{2\pi\sigma^2}} e^{-\frac{(r_1 + r_2)^2}{2\sigma^2}}   dr_1 dr_2 \nonumber \\
&= \left(p^\beta(\sv_1) + O(\sigma^2) + O(R) \right) \; \times \nonumber \\
&\qquad\quad \frac{1}{\sigma} \int_0^R \int_0^\infty \frac{1}{\sqrt{2\pi\sigma^2}} e^{-\frac{(r_1 + r_2)^2}{2\sigma^2}}   dr_1 dr_2,
\label{eq:I_3_evaluation}
\end{align}
where we used Lipschitz continuity of $p^\beta(\xv)$ in the second equality and applied Lemma~\ref{lem:kernelone} to arrive at the last step. Further, using the definition of the $Q$-function and integration by parts, we note that
\begin{align}
&\frac{1}{\sigma} \int_0^R \int_0^\infty \frac{1}{\sqrt{2\pi\sigma^2}} e^{-\frac{(r_1 + r_2)^2}{2\sigma^2}}   dr_1 dr_2 \nonumber \\
&\quad = \int_0^{R/\sigma} \int_{0}^\infty \frac{1}{\sqrt{2\pi}} e^{-\frac{(x + y)^2}{2}} dx  dy  \nonumber \\
&\quad = \int_0^{R/\sigma} Q(y)  dy  \nonumber \\
&\quad = y  Q(y) \bigg|_0^{R/\sigma} - \int_0^{R/\sigma}  Q'(y) dy \nonumber \\
&\quad = \frac{R}{\sigma} Q\left(\frac{R}{\sigma}\right)  + \frac{1}{\sqrt{2\pi}} \left( 1 - e^{-R^2/2\sigma^2} \right). \nonumber
\end{align}
Therefore,
\begin{align}
I_3(\sv_1) &= \left(p^\beta(\sv_1) + O(\sigma^2) + O(R) \right) \times \nonumber \\
&\qquad \left( \frac{R}{\sigma} Q\left(\frac{R}{\sigma}\right) + \frac{1}{\sqrt{2\pi}} \left( 1 - e^{-R^2/2\sigma^2} \right) \right).
\label{eq:I_3}
\end{align}
Combining~\eqref{eq:I_2} and~\eqref{eq:I_3} and using the fact that $R/\sigma \rightarrow \infty$ as $\sigma \rightarrow 0$ (from the definition of $R$), we get
\begin{equation}
\lim_{\sigma \rightarrow 0} I_2 = \frac{1}{\sqrt{2\pi}} \int_{\partial S} p^{\alpha + \beta}(\sv) d\sv,
\end{equation}
which concludes the proof.
\end{proof}

We now consider the error term $E_2$ and prove the following result:
\begin{lemma}
$\lim_{\sigma \rightarrow 0} E_2 = 0$.
\end{lemma}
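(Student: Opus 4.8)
The plan is to bound $E_2$ by exploiting two facts: first, that $E_2$ integrates the Gaussian kernel $K_{\sigma^2}(\xv_1, \xv_2)$ over the symmetric difference between $S_2$ and the half-space $H^+_{\sv_1}$ (tangent at the foot $\sv_1$ of $\xv_1$), and second, that because $\partial S$ has radius of curvature bounded below by $\tau$, this symmetric difference, when intersected with the ball $B(\xv_1, R)$, is contained in a thin curved shell whose deviation from the tangent plane is of quadratic order $O(R^2/\tau)$. The key quantitative input is that $R^2/\sigma \to 0$ and $\epsilon_R/\sigma \to 0$, established at the beginning of this appendix.

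First I would split the inner integral over $S_2 - H^+_{\sv_1}$ (more precisely, over the symmetric difference $S_2 \triangle H^+_{\sv_1}$, since both inclusions can occur near a curved boundary) into the part inside $B(\xv_1, R)$ and the part outside. For the part outside $B(\xv_1, R)$, I bound $p^\beta(\xv_2)$ by $p_{\rm max}^\beta$, bound $p^\alpha(\xv_1)$ by $p_{\rm max}^\alpha$, and use $\int_{\|\zv\| > R} K_{\sigma^2}(\zerov, \zv) d\zv = \epsilon_R$; integrating $\xv_1$ over $[S_1]_R$ (a region of volume $O(R)$ times the surface area of $\partial S$) gives a contribution of order $\epsilon_R / \sigma$, which vanishes. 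For the part inside $B(\xv_1, R)$, I use the curvature bound: a point $\xv_2$ lying in $(S_2 \triangle H^+_{\sv_1}) \cap B(\xv_1, R)$ must lie within the slab $W^+_{\sv_1}(cR^2/\tau) \cup W^-_{\sv_1}(cR^2/\tau)$ for an absolute constant $c$ — intuitively, the boundary $\partial S$ separates from its own tangent plane at $\sv_1$ by at most $O(R^2/\tau)$ within distance $R$. Hence the inner integral over this region is at most $p_{\rm max}^\beta$ times the Gaussian mass of a slab of thickness $O(R^2/\tau)$ adjacent to $\sv_1$, which is $O(R^2/(\tau\sigma))$ by a one-dimensional Gaussian estimate (the Gaussian density in the normal direction is $O(1/\sigma)$ over an interval of length $O(R^2/\tau)$). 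Multiplying by $p^\alpha(\xv_1) \le p_{\rm max}^\alpha$ and integrating $\xv_1$ over $[S_1]_R$ contributes another factor $O(R)$ from the normal-direction thickness of $[S_1]_R$, together with the finite surface measure of $\partial S$. After the overall $1/\sigma$ prefactor, the bound is $O\big(R \cdot R^2/(\tau\sigma) / \sigma\big)$; more carefully, the $1/\sigma$ already present inside the slab estimate and the explicit $1/\sigma$ prefactor should be organized so that what remains is $O(R^2/\sigma) \cdot O(R/\sigma) \cdot O(1)$ or similar — in any case a product in which each factor either stays bounded or tends to $0$ by the properties $R \to 0$, $R/\sigma \to \infty$ (to be used carefully, not against us), $R^2/\sigma \to 0$. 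I expect the clean statement to be $E_2 = O(R^2/\sigma) + O(\epsilon_R/\sigma) \to 0$.

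The main obstacle will be setting up the geometric containment estimate rigorously: one must argue, using only the lower bound $\tau$ on the radius of curvature and the fact that $R < \tau$, that within $B(\sv_1, 2R)$ the hypersurface $\partial S$ lies between the two parallel hyperplanes $W^+_{\sv_1}(cR^2/\tau)$ and $W^-_{\sv_1}(cR^2/\tau)$, and therefore $S_2 \triangle H^+_{\sv_1}$ is trapped in that slab. This is a standard consequence of the tubular-neighborhood / reach condition (the map $\xv \mapsto (\sv, r)$ is a diffeomorphism on $[S_1]_R \cup [S_2]_R$ when $R < \tau$), but writing it carefully requires comparing the exact boundary with its osculating sphere of radius $\ge \tau$ and bounding the sagitta. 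Once that containment is in hand, the remaining estimates are routine Gaussian tail bounds analogous to those already used for $E_1$ and in the evaluation of $I_3(\sv_1)$.
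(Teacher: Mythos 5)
Your overall strategy is the same as the paper's: trap $(S_2 \triangle H^+_{\sv_1}) \cap B(\xv_1,R)$ in a slab of thickness $O(R^2/\tau)$ tangent at $\sv_1$ (the paper makes your sagitta argument rigorous by comparing $\partial S$ with its worst-case osculating spheres of radius $\tau$, which gives the explicit thickness $\tfrac{R^2-r_1^2}{2(\tau\mp r_1)}\le R':=\tfrac{R^2}{2(\tau-R)}$), and absorb the complement of $B(\xv_1,R)$ into an $\epsilon_R/\sigma$ term. So the geometric content is right.

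The one genuine gap is in your bookkeeping of the slab contribution. If you bound the inner slab integral uniformly in $\xv_1$ by $O(R'/\sigma)=O(R^2/(\tau\sigma))$ and then integrate $\xv_1$ over $[S_1]_R$ (volume $O(R)$) with the prefactor $1/\sigma$, you get $O(R/\sigma)\cdot O(R^2/(\tau\sigma))=O(R^3/\sigma^2)$. Your claim that this is ``a product in which each factor either stays bounded or tends to $0$'' is false: $R/\sigma\to\infty$ by construction. Moreover $R^3/\sigma^2\to 0$ does \emph{not} follow from the four listed properties of $R$ alone (take $R=\sigma^{2/3}$: then $R\to 0$, $R/\sigma\to\infty$, $R^2/\sigma\to 0$, yet $R^3/\sigma^2=1$), so the decoupled bound does not close in the generality in which you state it. It is rescued only by the specific choice $R=\sigma\sqrt{d\log(1/\sigma^2)}$, for which $R^3/\sigma^2=\sigma\,(d\log(1/\sigma^2))^{3/2}\to 0$. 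The paper avoids this issue by keeping the $r_1$-integration coupled to the slab integral: after passing to coordinates $\xv_1=\sv_1+r_1\nv(\sv_1)$ it evaluates
\begin{equation*}
\frac{1}{\sigma}\int_0^R\int_0^{R'}\frac{1}{\sqrt{2\pi\sigma^2}}\,e^{-(r_1\pm r_2)^2/2\sigma^2}\,dr_1\,dr_2
\end{equation*}
exactly via $Q$-functions and shows it is $O(R'/\sigma)$, uniformly in $R/\sigma$ --- the point being that the Gaussian mass of the thin slab decays as $\xv_1$ recedes from $\partial S$, so the $r_1$-integral effectively contributes $O(\sigma)$ rather than $O(R)$. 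That coupling is what turns your $O(R^3/\sigma^2)$ into the clean $O(R^2/\sigma)+O(\epsilon_R/\sigma)$ you correctly anticipate as the final answer; either add it, or explicitly invoke the concrete choice of $R$.
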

\begin{figure*}
\centering
\begin{subfigure}{0.48\textwidth}
\centering
\includegraphics[width=0.8\linewidth]{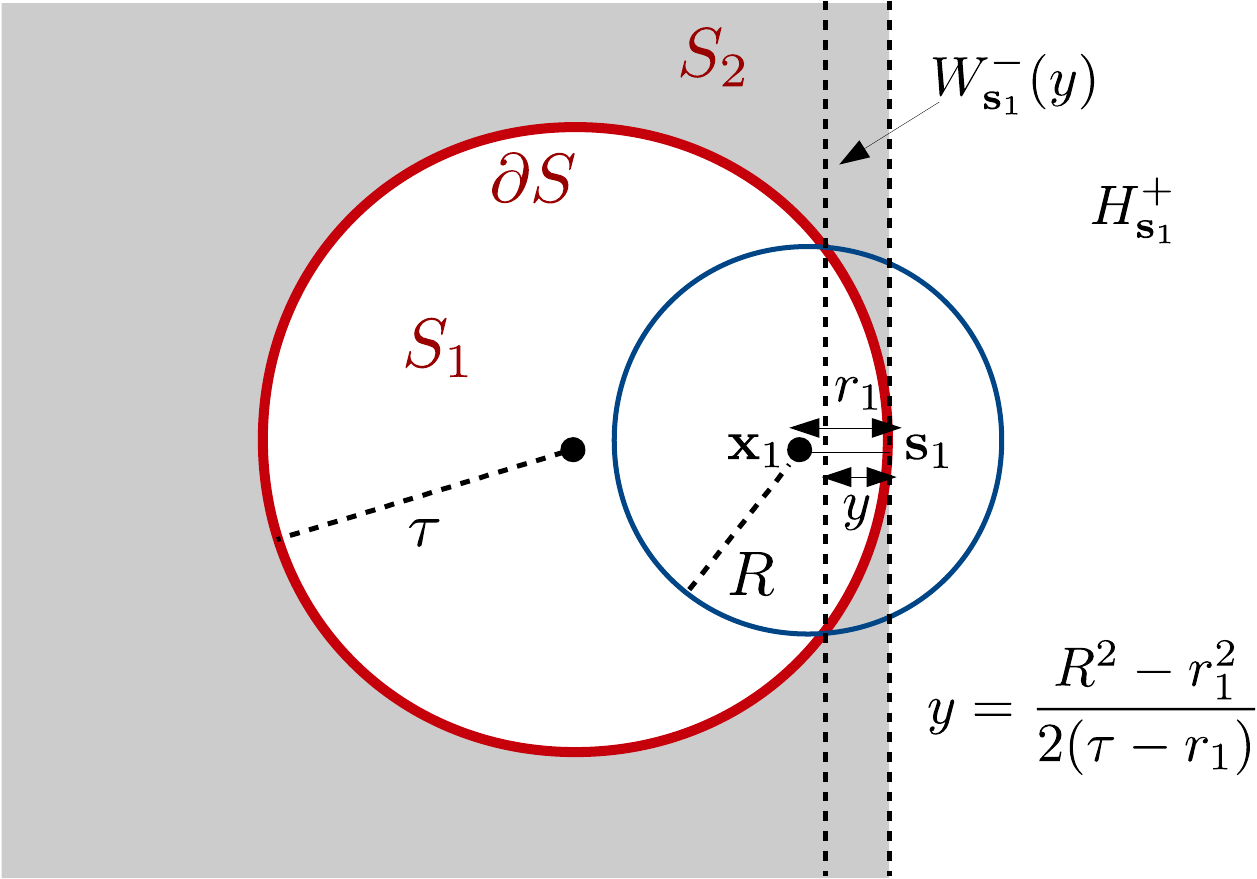}
\caption{}
\label{fig:upper_bound}
\end{subfigure}
\quad
\begin{subfigure}{0.48\textwidth}
\centering
\includegraphics[width=0.8\linewidth]{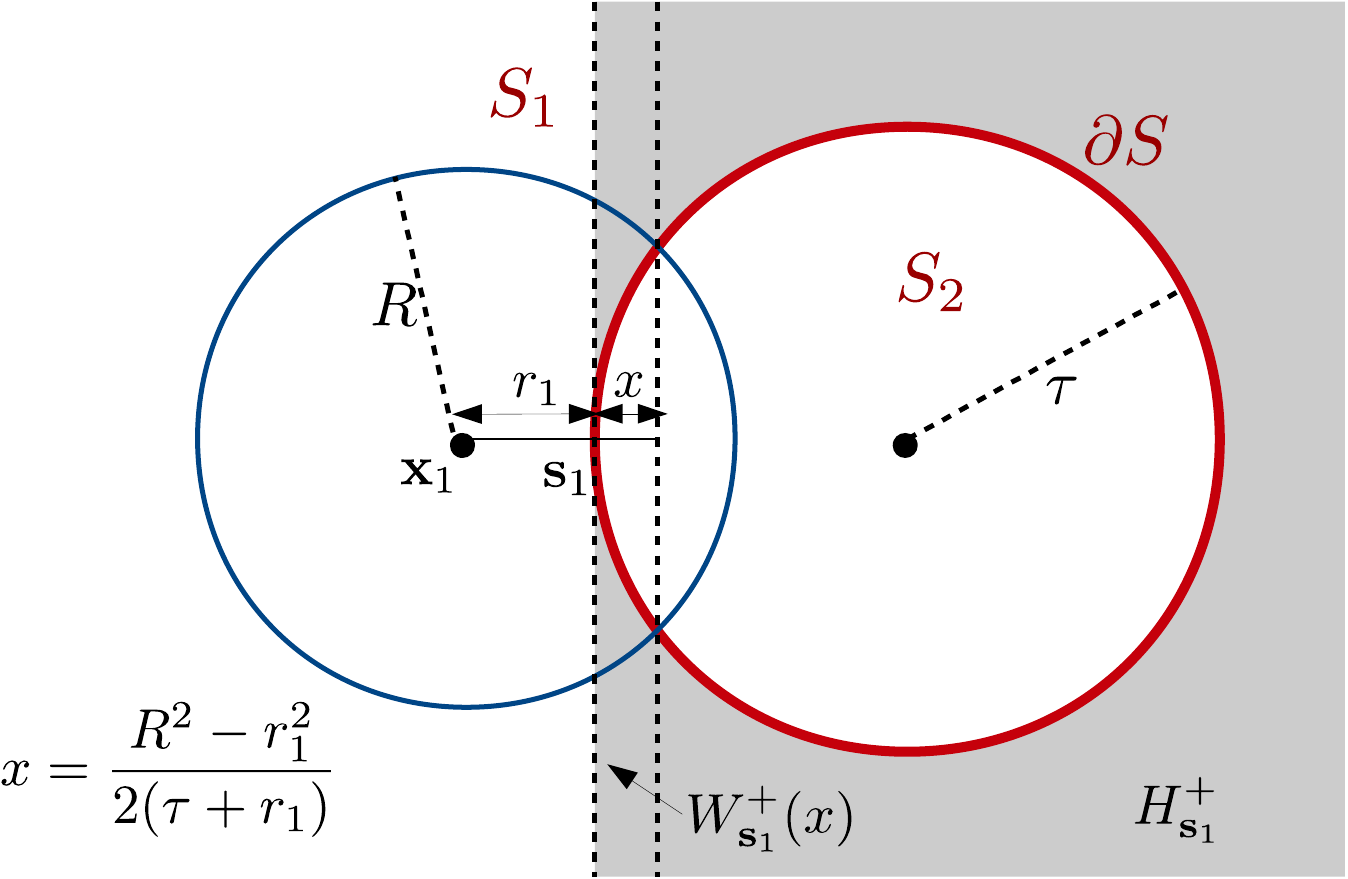}
\caption{}
\label{fig:lower_bound}
\end{subfigure}
\caption{Worst-case scenarios for the boundary $\partial S$ when (a) $S_1$ is a ball of radius $\tau$, (b) $S_2$ is a ball of radius $\tau$.}
\end{figure*}
\begin{proof}
Let us first rewrite $E_2$ as follows:
\begin{equation}
E_2 = \frac{1}{\sigma} \int_{[S_1]_R} p^\alpha(\xv_1) I_4(\xv_1) d\xv_1,
\label{eq:E_2}
\end{equation}
where we defined
\begin{equation}
I_4(\xv_1) := \int_{S_2 - H^+_{\sv_1}} K_{\sigma^2}(\xv_1,\xv_2) p^\beta(\xv_2) d\xv_2.
\end{equation}
The key idea is to lower and upper bound $I_4(\xv_1)$ for all $\xv_1$ using worst case scenarios and evaluate the limits of the bounds. Note that $I_4(\xv_1)$ is largest in magnitude when $S_1$ or $S_2$ is a sphere of radius $\tau$, as illustrated in Figures~\ref{fig:upper_bound} and~\ref{fig:lower_bound}. We now make certain geometrical observations. For any $\xv_1 = \sv_1 + r_1 \nv(\sv_1) \in [S_1]_R$, we observe from Figure~\ref{fig:lower_bound} that
\begin{align}
I_4(\xv_1) &\leq \int_{W_{\sv_1}^-\left( \frac{R^2 - r_1^2}{2(\tau - r_1)} \right)} K_{\sigma^2}(\xv_1,\xv_2) p^\beta(\xv_2) d\xv_2 \nonumber \\
&\qquad \quad + \int_{B(\xv_1,R)^c} K_{\sigma^2}(\xv_1,\xv_2) p^\beta(\xv_2) d\xv_2 \nonumber \\
&\leq \int_{W_{\sv_1}^-(R')} K_{\sigma^2}(\xv_1,\xv_2) p^\beta(\xv_2) d\xv_2  + p_{\rm max}^\beta \epsilon_R.
\end{align}
where $R' = \frac{R^2}{2(\tau - R)}$. Similarly, from Figure~\ref{fig:upper_bound}, we observe that
\begin{align}
I_4(\xv_1) &\geq - \bigg[ \int_{W_{\sv_1}^+\left( \frac{R^2 - r_1^2}{2(\tau + r_1)} \right)} K_{\sigma^2}(\xv_1,\xv_2) p^\beta(\xv_2) d\xv_2 \nonumber \\
&\qquad \quad + \int_{B(\xv_1,R)^c} K_{\sigma^2}(\xv_1,\xv_2) p^\beta(\xv_2) d\xv_2 \bigg] \nonumber \\
&\geq - \bigg[ \int_{W_{\sv_1}^+(R')} K_{\sigma^2}(\xv_1,\xv_2) p^\beta(\xv_2) d\xv_2  + p_{\rm max}^\beta \epsilon_R \bigg].
\end{align}
Substituting these in \eqref{eq:E_2} and using a simplification similar to that of $I_2$ in \eqref{eq:I_2}, we get
\begin{align}
E_2 &\leq \left( 1 + O(R^{d-1}) \right) \int_{\partial S}  p^\alpha(\sv_1) I_5^-(\sv_1) d\sv_1 \nonumber \\ 
&\qquad + O(R^d) \int_{\partial S} I_5^-(\sv_1) d\sv_1 + \frac{\epsilon_R}{\sigma} p_{\rm max}^{\alpha + \beta}, \\
E_2 &\geq -\left( 1 + O(R^{d-1}) \right) \int_{\partial S}  p^\alpha(\sv_1) I_5^+(\sv_1) d\sv_1 \nonumber \\ 
&\qquad - O(R^d) \int_{\partial S} I_5^+(\sv_1) d\sv_1 - \frac{\epsilon_R}{\sigma} p_{\rm max}^{\alpha + \beta},
\end{align}
where we defined
\begin{align}
&I_5^-(\sv_1) := \frac{1}{\sigma} \int_0^R \int_{W_{\sv_1}^-(R')} K_{\sigma^2}(\sv_1 + r_1 \nv(\sv_1),\xv_2) \nonumber \\
&\qquad\qquad\qquad\qquad p^\beta(\xv_2) d\xv_2  dr_1, \\
&I_5^+(\sv_1) := \frac{1}{\sigma} \int_0^R \int_{W_{\sv_1}^+(R')} K_{\sigma^2}(\sv_1 + r_1 \nv(\sv_1),\xv_2) \nonumber \\
&\qquad\qquad\qquad\qquad p^\beta(\xv_2) d\xv_2  dr_1.
\end{align}
Similar to the evaluation of $I_3(\sv_1)$ in~\eqref{eq:I_3_evaluation}, we have
\begin{align}
I_5^+(\sv_1) &= \left(p^\beta(\sv_1) + O(\sigma^2) + O(R) \right) \; \times \nonumber \\
&\qquad\quad \frac{1}{\sigma} \int_0^R \int_0^{R'} \frac{1}{\sqrt{2\pi\sigma^2}} e^{-\frac{(r_1 + r_2)^2}{2\sigma^2}}   dr_1 dr_2, \\
I_5^-(\sv_1) &= \left(p^\beta(\sv_1) + O(\sigma^2) + O(R) \right) \; \times \nonumber \\
&\qquad\quad \frac{1}{\sigma} \int_0^R \int_0^{R'} \frac{1}{\sqrt{2\pi\sigma^2}} e^{-\frac{(r_1 - r_2)^2}{2\sigma^2}}   dr_1 dr_2.
\end{align}
We now evaluate the two 1-D integrals as follows:
\begin{align}
&\frac{1}{\sigma} \int_0^R \int_0^{R'} \frac{1}{\sqrt{2\pi\sigma^2}} e^{-\frac{(r_1 + r_2)^2}{2\sigma^2}}   dr_1 dr_2 \nonumber \\
&\quad = \int_0^{R/\sigma} \int_{0}^{R'/\sigma} \frac{1}{\sqrt{2\pi}} e^{-\frac{(x + y)^2}{2}} dx  dy  \nonumber \\
&\quad = \int_0^{R/\sigma} \left( Q(y) - Q\left(y + \frac{R'}{\sigma}\right) \right) dy  \nonumber \\
&\quad = \int_0^{R/\sigma} Q(y) dy +  \int_0^{R'/\sigma} Q(y)dy - \int_0^{\frac{R+R'}{\sigma}}  Q(y) dy  \nonumber \\
&\quad = \frac{R}{\sigma} Q\left(\frac{R}{\sigma}\right) + \frac{R}{\sigma} Q\left(\frac{R'}{\sigma}\right) - \frac{R+R'}{\sigma} Q\left(\frac{R+R'}{\sigma}\right) \nonumber \\ 
&\qquad\qquad \frac{1}{\sqrt{2\pi}} \left( 1 - e^{-\frac{R^2}{2\sigma^2}} - e^{-\frac{R'^2}{2\sigma^2}} + e^{-\frac{(R+R')^2}{2\sigma^2}} \right) \nonumber.
\end{align}
Similarly,
\begin{align}
&\frac{1}{\sigma} \int_0^R \int_0^{R'} \frac{1}{\sqrt{2\pi\sigma^2}} e^{-\frac{(r_1 - r_2)^2}{2\sigma^2}}   dr_1 dr_2 \nonumber \\
&\quad = \int_0^{R/\sigma} \int_{0}^{R'/\sigma} \frac{1}{\sqrt{2\pi}} e^{-\frac{(x - y)^2}{2}} dx  dy  \nonumber \\
&\quad = \int_0^{R/\sigma} \left( Q\left(y - \frac{R'}{\sigma}\right) - Q(y) \right) dy  \nonumber \\
&\quad = \int_{-R'/\sigma}^0 Q(y)dy + \int_0^{\frac{R-R'}{\sigma}}  Q(y) dy - \int_0^{R/\sigma} Q(y) dy   \nonumber \\
&\quad = \frac{R'}{\sigma} Q\left(-\frac{R'}{\sigma}\right) + \frac{R-R'}{\sigma} Q\left(\frac{R-R'}{\sigma}\right) - \frac{R}{\sigma} Q\left(\frac{R}{\sigma}\right) \nonumber \\ 
&\qquad\qquad \frac{1}{\sqrt{2\pi}} \left( e^{-\frac{R'^2}{2\sigma^2}} - 1 + e^{-\frac{(R + R')^2}{2\sigma^2}} - e^{-\frac{R^2}{2\sigma^2}} \right). \nonumber
\end{align}
Noting that as $\sigma \rightarrow 0$, $R/\sigma \rightarrow \infty$ and $R'/\sigma \rightarrow 0$, we conclude that $\lim_{\sigma \rightarrow 0} E_2 = 0$.
\end{proof}

The proof of~\eqref{eq:flow_two} proceeds in a similar fashion by approximating the inner integral using hyperplanes. Specifically, similar to the proof of~\eqref{eq:flow_one}, we can show that the integral on the left hand side can be written as $I + E$, where
\begin{align}
I &:= \frac{1}{\sigma} \int_{[S_1]_R} \int_{H^-_{\sv_1}} \Big[ K_{a\sigma^2}(\xv_1,\xv_2) p^\alpha(\xv_1) p^\beta(\xv_2) \nonumber \\
&\quad \quad \quad \quad\quad - K_{b\sigma^2}(\xv_1,\xv_2) p^{\alpha'}(\xv_1) p^{\beta'}(\xv_2) \Big] d\xv_1 d\xv_2,
\end{align}
and $E$ is the residual associated with the approximation that can be shown to go to zero as $\sigma \rightarrow 0$ (we skip this proof since it is quite similar to the analysis for~\eqref{eq:flow_one}). In order to evaluate $I$, we perform a change of coordinates $\xv_1 = \sv_1 + r_1 \nv(\sv_1)$ as before to obtain
\begin{align}
I &= \frac{1}{\sigma} \int_{\partial S} \int_0^R \Bigg[  p^\alpha(\sv_1 + r_1 \nv(\sv_1)) \nonumber \\
&\qquad \qquad \bigg( \int_{H^-_{\sv_1}} K_{a\sigma^2}(\sv_1 + r_1 \nv(\sv_1),\xv_2) p^\beta(\xv_2) d\xv_2 \bigg) \nonumber \\
&\qquad\qquad\qquad -p^{\alpha'}(\sv_1 + r_1 \nv(\sv_1)) \nonumber \\
&\qquad \qquad \bigg( \int_{H^-_{\sv_1}} K_{b\sigma^2}(\sv_1 + r_1 \nv(\sv_1),\xv_2) p^{\beta'}(\xv_2) d\xv_2 \bigg) \Bigg] \nonumber \\
&\qquad \qquad \qquad |\text{det}J(\sv_1,r_1)| d\sv_1 dr_1 \nonumber \\
&= \int_{\partial S} p^\alpha(\sv_1) I_\beta(\sv_1) d\sv_1 - \int_{\partial S} p^{\alpha'}(\sv_1) I_{\beta'}(\sv_1) d\sv_1 +  O\left( R^d \right),
\end{align}
where we defined
\begin{align}
I_\beta(\sv_1) &:= \frac{1}{\sigma} \int_0^R \int_{H_{\sv_1}^-} K_{a\sigma^2}(\sv_1 + r_1 \nv(\sv_1),\xv_2)  p^\beta(\xv_2) d\xv_2  dr_1, \nonumber \\
I_{\beta'}(\sv_1) &:= \frac{1}{\sigma} \int_0^R \int_{H_{\sv_1}^-} K_{b\sigma^2}(\sv_1 + r_1 \nv(\sv_1),\xv_2) p^{\beta'}(\xv_2) d\xv_2  dr_1. \nonumber
\end{align}
By using a change of coordinates for $\xv_2$ similar to the steps in~\eqref{eq:I_3_evaluation}, we obtain
\begin{align}
I_\beta(\sv_1) &= \left(p^\beta(\sv_1) + O(\sigma^2) + O(R) \right) \; \times \nonumber \\
&\qquad\quad \frac{1}{\sigma} \int_0^R \int_0^{\infty} \frac{1}{\sqrt{2\pi a\sigma^2}} e^{-\frac{(r_1 - r_2)^2}{2a\sigma^2}} dr_1 dr_2, \\ 
I_{\beta'}(\sv_1) &= \left(p^{\beta'}(\sv_1) + O(\sigma^2) + O(R) \right) \; \times \nonumber \\
&\qquad\quad \frac{1}{\sigma} \int_0^R \int_0^{\infty} \frac{1}{\sqrt{2\pi b\sigma^2}} e^{-\frac{(r_1 - r_2)^2}{2b\sigma^2}} dr_1 dr_2.
\end{align}
The 1-D integrals can be evaluated as follows:
\begin{align}
&\frac{1}{\sigma} \int_0^R \int_0^\infty \frac{1}{\sqrt{2\pi a\sigma^2}} e^{-\frac{(r_1 - r_2)^2}{2a\sigma^2}}   dr_1 dr_2 \nonumber \\
&\quad = \sqrt{a} \int_0^{R/\sqrt{a}\sigma} \int_{0}^\infty \frac{1}{\sqrt{2\pi}} e^{-\frac{(x - y)^2}{2}} dx  dy  \nonumber \\
&\quad = \sqrt{a} \int_0^{R/\sqrt{a}\sigma} Q(-y)  dy  \nonumber \\
&\quad = \sqrt{a} \int_0^{R/\sqrt{a}\sigma} (1 - Q(y)) dy  \nonumber \\
&\quad = \frac{R}{\sigma} - \frac{R}{\sigma} Q\left(\frac{R}{\sqrt{a}\sigma}\right)  - \frac{\sqrt{a}}{\sqrt{2\pi}} \left( 1 - e^{-R^2/2a\sigma^2} \right), \nonumber
\end{align}
\begin{align}
&\frac{1}{\sigma} \int_0^R \int_0^\infty \frac{1}{\sqrt{2\pi b\sigma^2}} e^{-\frac{(r_1 - r_2)^2}{2b\sigma^2}}   dr_1 dr_2 \nonumber \\
&\quad = \frac{R}{\sigma} - \frac{R}{\sigma} Q\left(\frac{R}{\sqrt{b}\sigma}\right)  - \frac{\sqrt{b}}{\sqrt{2\pi}} \left( 1 - e^{-R^2/2b\sigma^2} \right). \nonumber
\end{align}
Using the fact that $\alpha +\beta = \alpha' + \beta' = \gamma$, and taking the limit $\sigma \rightarrow 0$ after putting everything together, we conclude
\begin{equation}
\lim_{\sigma \rightarrow 0} I = \frac{\sqrt{b} - \sqrt{a}}{\sqrt{2\pi}} \int_{\partial S} p^\gamma (\sv) d\sv.
\end{equation}

\begin{IEEEbiographynophoto}
{Aamir Anis}
received his Bachelors and Masters of Technology degrees in Electronics and Electrical Communication Engineering from the Indian Institute of Technology (IIT), Kharagpur, India in 2012 and a Ph.D. in Electrical Engineering from the University of Southern California (USC), Los Angeles in 2017. 

He is the recipient of the Best Student Paper award at the ICASSP 2014 conference held in Florence, Italy. His research interests include graph signal processing with applications in machine learning and multimedia compression. He is currently a Software Engineer at Google Inc., Mountain View, California.
\end{IEEEbiographynophoto}

\begin{IEEEbiographynophoto}{Aly El Gamal}
(S '09-M '15) is an Assistant Professor at the Electrical and Computer Engineering Department of Purdue University. He received his Ph.D. degree in Electrical and Computer Engineering and M.S. degree in Mathematics from the University of Illinois at Urbana-Champaign, in 2014 and 2013, respectively. Prior to that, he received the M.S. degree in Electrical Engineering from Nile University and the B.S. degree in Computer Engineering from Cairo University, in 2009 and 2007, respectively. His research interests include information theory and machine learning.

Dr. El Gamal has received a number of awards, including the Purdue Seed for Success Award, the Purdue CNSIP Area Seminal Paper Award, the DARPA Spectrum Challenge (SC2) Contract Award and Phase 1 Top 10 Team Award, and the Huawei Innovation Research Program (HIRP) OPEN Award. He is currenlty a reviewer for the American Mathematical Society (AMS) Mathematical Reviews.
\end{IEEEbiographynophoto}
\begin{IEEEbiographynophoto}{A. Salman Avestimehr} (S'03-M'08-SM'17) is an Associate Professor at the Electrical Engineering Department of University of Southern California. He received his Ph.D. in 2008 and M.S. degree in 2005 in Electrical Engineering and Computer Science, both from the University of California, Berkeley. Prior to that, he obtained his B.S. in Electrical Engineering from Sharif University of Technology in 2003. His research interests include information theory, the theory of communications, and their applications to distributed computing and data analytics.

Dr. Avestimehr has received a number of awards, including the Communications Society and Information Theory Society Joint Paper Award, the Presidential Early Career Award for Scientists and Engineers (PECASE) for ``pushing the frontiers of information theory through its extension to complex wireless information networks'', the Young Investigator Program (YIP) award from the U. S. Air Force Office of Scientific Research, the National Science Foundation CAREER award, and the David J. Sakrison Memorial Prize. He is currently an Associate Editor for the IEEE Transactions on Information Theory.
\end{IEEEbiographynophoto}
\begin{IEEEbiographynophoto}{Antonio  Ortega} (F'07) received the Telecommunications Engineering degree from the Universidad Politecnica de Madrid, Madrid, Spain in 1989 and the Ph.D. in Electrical Engineering from Columbia University, New York, NY in 1994.  In 1994 he joined the Electrical Engineering department at the University of Southern California (USC), where he is currently a Professor and has served as Associate Chair.  He is a Fellow of the IEEE since 2007, and a member of ACM and APSIPA. He has served as associate editor for several IEEE journals, as chair of the Image and Multidimensional Signal Processing (IMDSP) technical committee, and is currently a member of the Board of Governors of the IEEE Signal Processing Society. He was technical program co-chair of ICIP 2008 and PCS 2013.  He was the inaugural Editor-in-Chief of the APSIPA Transactions on Signal and Information Processing. He has received several paper awards, including most recently the 2016 Signal Processing Magazine award and was a plenary speaker at ICIP 2013. His recent research work is focusing on graph signal processing, machine learning, multimedia compression and wireless sensor networks.  Over 40 PhD students have completed their PhD thesis under his supervision at USC and his work has led to over 300 publications in international conferences and journals, as well as several patents.
\end{IEEEbiographynophoto}
\end{document}